\tikzset{
  arrow/.style={-Latex, line width=0.8pt},
  block/.style={draw, rounded corners=2pt, align=center, minimum height=6mm, inner sep=2pt, font=\small},
  op/.style={block, fill=gray!10},
  var/.style={block, fill=blue!5},
  gate/.style={block, fill=orange!12},
  delay/.style={block, fill=yellow!15},
  legendbox/.style={draw, rounded corners=2pt, inner sep=2pt, font=\scriptsize, fill=white},
  lbl/.style={font=\scriptsize, inner sep=1pt}
}
\pgfplotsset{compat=1.18}
\definecolor{colNOS}{RGB}{31,119,180} 
\definecolor{coltGNN}{RGB}{255,127,14} 
\definecolor{colGRU}{RGB}{44,160,44} 
\definecolor{colRNN}{RGB}{214,39,40} 
\definecolor{colMLP}{RGB}{148,103,189} 
\newcommand{\Eta}{\mathrm{E}}
\newtheorem{theorem}{Theorem}
\newtheorem{corollary}{Corollary}[theorem]
\newtheorem{lemma}[theorem]{Lemma}
\pgfplotsset{compat=newest}
\title{Network-Optimised Spiking Neural Network for Event-Driven Networking}
\author{Muhammad Bilal \\School of Computing and Communications, Lancaster University, LA1 4WA Lancaster, United Kingdom\\ m.bilal@ieee.org }
\date{}
\begin{document}
\maketitle
\begin{abstract}
Delay-coupled systems often require low-latency decisions from sparse telemetry, where dense fixed-step neural inference is wasteful and can degrade near stability margins. We introduce Network-Optimised Spiking (NOS) \footnote{\url{https://mbilal84.github.io/nos-snn-networking/}}, a trainable two-state event-driven dynamical unit for delayed, graph-coupled streams, whose states map to a fast load variable and a slower recovery resource. NOS uses bounded excitability for finite buffers, explicit leak terms for service and damping, and graph-local coupling with per-link gates and communication delays, with differentiable resets compatible with surrogate-gradient training and neuromorphic execution. We prove existence and uniqueness of subthreshold equilibria, derive Jacobian-based stability conditions, and obtain a scalar network stability threshold that separates topology from node dynamics via a Perron-mode spectral condition. A stochastic arrival model aligned with telemetry smoothing explains increased variability as systems approach stability boundaries. On delayed graph forecasting and early-warning tasks from queue telemetry, NOS improves detection F1 and detection latency over MLP, RNN/GRU, and temporal GNN baselines under a common residual-based protocol, while providing calibration rules for resource-constrained deployments.
\end{abstract}

\section*{keywords}
Event-driven computation, spiking neural networks, delay-coupled dynamical systems, resource-constrained inference, spectral stability, networked control, queueing dynamics, distributed systems

\section{Introduction}\label{sec:intro}
Machine learning now underpins core networking tasks such as short-horizon traffic forecasting, anomaly detection, and routing optimisation. Standard deep models including multilayer perceptrons, recurrent networks, and graph neural networks are effective when training data are abundant and the inference budget can be treated as fixed. In operational networks, however, telemetry is often sparse and bursty, feedback is delayed, and performance can change sharply near stability margins. In this regime, dense fixed-step inference can waste computation and can also amplify timing errors, which matters when the target is early warning and safe operation rather than only average accuracy.

Spiking neural networks (SNNs) offer a complementary computational regime. Their event-driven updates can be sparse in time and energy, which aligns with packet-level dynamics at the edge and with tight power envelopes. Surveys and neuromorphic results report substantial efficiency gains for spiking workloads on dedicated substrates, encouraging designs that connect spiking dynamics to system semantics rather than to abstract latent states \cite{Chowdhury2025,Davies2018,Tia2019}. Foundational work on temporal coding and the computational power of spikes also supports spike-based processing for time-critical inference and control \cite{IJCNN2000}. In parallel, platforms such as Intel's Loihi and its toolchain demonstrate practical workflows for SNN deployment, including on-chip learning and synaptic delays \cite{Davies2018,LoihiToolchain2018}.

Despite this promise, classical SNN formulations are often a poor fit for networking. Many models prioritise biological analogy and introduce internal states that do not correspond to measurable quantities such as queue occupancy, service rate, or link delay \cite{GUO2023,Emre2019}. Their hard threshold-and-reset mechanisms create non-differentiable discontinuities that complicate gradient-based optimisation, which remains the main workhorse of modern machine learning \cite{Abbott2016,Zenke2014}. Topology and per-link heterogeneity are also frequently handled implicitly or uniformly, even though real networks are weighted, directed, and delay-bearing. Finally, common excitable dynamics can be effectively unbounded under sustained drive, producing unrealistic behaviour and numerical instability under high load \cite{Yao2024}. These issues are compounded by practical constraints on encoding, training cost, and deployment on resource-constrained neuromorphic or edge hardware, which helps explain why many evaluations remain disconnected from real topologies and traffic patterns \cite{GUO2023,yang2022}. Related dynamical models including Hodgkin--Huxley \cite{Hausser2000}, Izhikevich \cite{Izhikevich2003}, and G-networks \cite{Gelenbe1994} either impose heavy dynamics or do not provide a graph-aware, differentiable design with explicit queue semantics.

We address these gaps by introducing \emph{Network-Optimised Spiking} (NOS), a compact two-state event-driven unit designed for delayed, graph-coupled telemetry streams in networks. NOS maps its state directly to normalised queue occupancy and a recovery resource, uses bounded excitability to respect finite buffers, includes explicit leak terms for service and damping, and supports graph-local coupling with optional per-link gates and communication delays. Crucially, NOS replaces hard resets with differentiable reset mechanisms that preserve event timing while enabling surrogate-gradient training \cite{Zenke2014}. We also motivate deployment on neuromorphic substrates where synaptic delays and event-driven execution are native \cite{Davies2018}. Related SNN studies in robotics and complex-network settings consider topology and control, but their objectives differ from packet dynamics and queue semantics, and they do not provide the calibration hooks needed for network operation \cite{ACM2025Path,ACMISL2024,AICI2025}. An extended discussion of when SNNs are preferable to conventional deep and graph models for networking tasks is given in Section~\ref{subsec:snn-scope}, while Section~\ref{sec:classicsnn} reviews classical spiking paradigms and evaluates their suitability for queue semantics, topology, and trainability.

\paragraph{Contributions and findings.}
(i) \textbf{A semantics-aligned spiking unit:} we propose a two-state NOS model whose variables and parameters map to measurable network quantities, with bounded excitability, explicit service and damping leaks, and graph-local delayed coupling with optional per-link gating.
(ii) \textbf{Trainability via differentiable resets:} we introduce two differentiable reset strategies, an event-triggered exponential soft reset and a continuous pullback shaped by a sigmoid, to support surrogate-gradient optimisation while preserving spiking behaviour.
(iii) \textbf{Stability analysis at node and network scale:} we prove existence and uniqueness of subthreshold equilibria, derive Jacobian-based local stability tests, and obtain a scalar network stability threshold that scales with the Perron eigenvalue of the coupling matrix; saturation enlarges the stable region by reducing the effective excitability slope.
(iv) \textbf{Operational guidance:} we derive an operational margin that links damping to offered load, yielding a simple two-parameter contour for planning and capacity checks, and we provide calibration rules intended for delay-sensitive, resource-constrained deployments.


\section{Proposed model: Network-Optimised Spiking neural network (\textit{NOS})}
\label{sec:nos}

Networking workloads are shaped by graph structure, finite buffers, and service constraints, and they must be trainable from telemetry. We therefore begin with graph-local inputs that respect neighbourhoods and link heterogeneity,
\begin{align}
I_i(t) &= \sum_{j\in\mathcal{N}(i)} w_{ij}\,S_j(t),
\label{eq:nos_start_input}
\end{align}
where \(S_j(t)\) is a presynaptic event train and \(w_{ij}\) encodes link capacity, policy weight, or reliability. Three implementation choices motivate the \emph{NOS} design. First, the reset must be continuous and differentiable so gradient-based training is possible. Second, state variables and parameters should map to observables such as normalised queue length, service rate, and recovery time. Third, subthreshold integration needs explicit stochastic drive so gradual load accumulation and burstiness are both represented. 

Implementation-level choices, detailed calibration rules, and additional parameter-tuning heuristics, including the pipeline in Fig.~\ref{fig:nos-pipeline}, are summarised in  \emph{methods}~\ref{sec:nos-design-guidance}.

\subsection{Variable reinterpretation}
\label{subsec:nos-reinterpretation}

We define a two-variable \emph{NOS} unit whose excitability and recovery states encode the short-term behaviour of a networked queue. The fast state \(v_i(t)\) represents a normalised queue or congestion level, with \(v_i\!\in[0,1]\) mapping empty to full buffer. The slow state \(u_i(t)\) represents a recovery or slowdown resource that builds during bursts and relaxes thereafter.

\(
v \;\mapsto\; \text{normalised queue length or congestion level}
\)

\(
u \;\mapsto\; \text{recovery or slowdown resource}.
\)

Here \(v=1\) corresponds to a full buffer. The state \(u\) summarises pacing, token replenishment, or rate-limiter cool-down that follows a burst. This choice allows direct initialisation from traces: \(v\) from queue occupancy, \(u\) from measured relaxation time, and the link between them from decay after micro-bursts. It also clarifies units. Both \(v\) and \(u\) evolve in the same time scale, which keeps linearisation and stability analysis transparent. 

\subsection{State variables and dynamics}
\label{subsec:nos-dynamics}
For each node $i$ we adopt a bounded excitability function and explicit damping:
\begin{align}
\begin{split}
\frac{dv_i}{dt} = f_{\mathrm{sat}}(v_i) + \beta v_i + \gamma - \xi\,u_i + I_i(t) - \lambda v_i - \\ \chi \bigl(v_i - v_{\mathrm{rest}}\bigr),
\label{eq:nos-v-eq}
\end{split}
\end{align}

\begin{align}
\begin{split}
\frac{du_i}{dt} &= a \bigl(b v_i - u_i\bigr) - \mu u_i \;=\; a b\,v_i - (a+\mu)\,u_i .
\label{eq:nos-u-eq}
\end{split}
\end{align}
Here $v$ is the normalised queue level (dimensionless). The recovery state $u$ is also taken dimensionless, and the scalar $\xi>0$ maps recovery into a rate contribution in \eqref{eq:nos-v-eq} (so $\xi$ has units of inverse time, or per-bin under a discrete time base). In all experiments we absorb this scaling into the definition of $u$ and fix $\xi=1$, so $\xi$ is omitted from subsequent notation.
Throughout, time $t$ is measured in sampling bins of width $\Delta t$ (default $\Delta t = 5\,\mathrm{ms}$).
Derivatives $d(\cdot)/dt$ are therefore \emph{per-bin} rates, and all coefficients in \eqref{eq:nos-v-eq}--\eqref{eq:nos-u-eq}
are dimensionless per-bin quantities.
Physical rates in $\mathrm{s}^{-1}$ are obtained by
\(
r_{\mathrm{phys}} = \frac{r}{\Delta t}.
\)

with bounded excitability
\begin{equation}
f_{\mathrm{sat}}(v) \;=\; \frac{\alpha v^2}{1+\kappa v^2},
\label{eq:f_sat_def_main}
\end{equation}
Equation \eqref{eq:nos-v-eq} separates four operational effects. The term \(f_{\mathrm{sat}}(v_i)\) (see \eqref{eq:nos-fsat}) captures the convex rise of backlog during rapid arrivals while remaining bounded. The linear part \(\beta v_i+\gamma\) fits residual slope and offset that are visible in practice after removing coarse trends. The coupling \(-\xi u_i\) models recovery drag that slows re-accumulation immediately after events. Finally, \(-\lambda v_i - \chi(v_i-v_{\mathrm{rest}})\) represents service and small-signal damping. Equation \eqref{eq:nos-u-eq} integrates recent congestion with sensitivity \(ab\) and relaxes at rate \(a+\mu\). Thus, the \textit{NOS} model reflects three intuitive aspects of queue behaviour: the rise under increasing load, the draining through service, and the short-lived slowdown that follows bursts. Fig.~\ref{fig:nos-unit-schematic} illustrates a single \emph{NOS} unit, highlighting its graph-local inputs, excitability dynamics, recovery variable, stochastic threshold, and differentiable reset pathway. Spike generation and reset principles are discussed in \emph{methods}~\ref{subsec:nos-resets}. Operational interpretations and data-driven initialisation rules for all \textit{NOS} parameters are summarised in Table~\ref{tab:param_combined} in \emph{methods}~\ref{defvals}, with further design guidance in \emph{methods}~\ref{sec:nos-design-guidance}.

\begin{figure}[t]
\centering
\resizebox{\textwidth}{!}{%
\begin{tikzpicture}[x=0.12\textwidth, y=0.08\textwidth, >=stealth]

\tikzstyle{blk}=[draw, rounded corners=2pt, minimum width=2.8cm, minimum height=1.0cm, align=center]
\tikzstyle{blkfill}=[blk, fill=gray!6]
\tikzstyle{sum}=[draw, circle, minimum size=0.7cm, inner sep=0pt]
\tikzstyle{dbl}=[draw, double, double distance=0.6pt, rounded corners=2pt, minimum width=4.5cm, minimum height=1.4cm, align=center, fill=gray!10]
\tikzstyle{txt}=[font=\small]

\node[blk] (s1) at (0,10) {$S_{j_1}(t)$};
\node[blk] (s2) at (2.5,10) {$S_{j_2}(t)$};
\node[blk] (s3) at (5,10) {$S_{j_3}(t)$};

\node[blk] (g1) at (0,8.5) {$w_{ij_1}\,g\!\big(q_{ij_1}\big)$};
\node[blk] (g2) at (2.5,8.5) {$w_{ij_2}\,g\!\big(q_{ij_2}\big)$};
\node[blk] (g3) at (5,8.5) {$w_{ij_3}\,g\!\big(q_{ij_3}\big)$};

\node[blk] (d1) at (0,7) {delay $\tau_{ij_1}$};
\node[blk] (d2) at (2.5,7) {delay $\tau_{ij_2}$};
\node[blk] (d3) at (5,7) {delay $\tau_{ij_3}$};

\draw[->] (s1.south) -- (g1.north);
\draw[->] (g1.south) -- (d1.north);
\draw[->] (s2.south) -- (g2.north);
\draw[->] (g2.south) -- (d2.north);
\draw[->] (s3.south) -- (g3.north);
\draw[->] (g3.south) -- (d3.north);

\node[sum] (sumI) at (2.5,5.5) {$\sum$};
\node[txt] at (2.5,6.2) {$I_i(t)$};

\draw[->] (d1.south) .. controls (0.5,6.2) and (1.5,6.0) .. (sumI.west);
\draw[->] (d2.south) -- (sumI.north);
\draw[->] (d3.south) .. controls (4.5,6.2) and (3.5,6.0) .. (sumI.east);

\node[blk] (eta) at (5.5,5.5) {$\eta_i(t)$};
\draw[->] (eta.west) -- (sumI.east);

\node[txt, align=left, anchor=west] at (6.5,6.2)
{$I_i(t)=\displaystyle\sum_{j\in\mathcal{N}(i)} w_{ij}\,g\!\big(q_{ij}(t)\big)\,S_j\!\big(t-\tau_{ij}\big)\;+\;\eta_i(t)$};

\node[blkfill, minimum width=8.5cm, minimum height=1.8cm, anchor=center] (vdyn) at (2.5,3.8)
{\begin{minipage}{0.65\textwidth}\centering
$\dot v_i = f_{\mathrm{sat}}(v_i) + \beta v_i + \gamma - u_i + I_i(t) - \lambda v_i - \chi\big(v_i - v_{\mathrm{rest}}\big)$\\[3pt]
$f_{\mathrm{sat}}(v)=\dfrac{\alpha v^2}{\,1+\kappa v^2\,}$
\end{minipage}};
\draw[->] (sumI.south) -- (vdyn.north);

\node[blkfill, minimum width=8.5cm, minimum height=1.2cm, anchor=center] (udyn) at (2.5,2.2)
{$\dot u_i = a\big(b\,v_i - u_i\big) - \mu\,u_i$};

\coordinate (uLeft) at ($(udyn.west)+(-0.8,0)$);
\coordinate (vLeft) at ($(vdyn.west)+(0,-0.1)$);
\draw[->] (udyn.west) -- (uLeft) |- (vLeft);
\node[txt, anchor=west] at ($(vLeft)+(0.1,0.15)$) {$-\,u_i$};

\node[draw, diamond, aspect=2.0, minimum width=3.8cm, minimum height=1.0cm, anchor=center] (th) at (2.5,0.3)
{\begin{minipage}{2.5cm}\centering
$\,v_{\mathrm{th}}(t)=v_{\mathrm{th,base}}+\sigma\,\xi(t)\,$
\end{minipage}};

\draw[->] (vdyn.south) -- (th.north);

\node[dbl, anchor=center] (rst) at (2.5,-2.2)
{\begin{minipage}{0.45\textwidth}\centering
\textbf{soft reset}\\[2pt]
$\,v \leftarrow c + (v-c)\,e^{-r_{\text{reset}}\,\Delta t},\qquad u \leftarrow u + d$\\[2pt]
{\scriptsize or continuous pullback: $-\,r_{\text{reset}}\,\sigma_{\kappa_\sigma}\!\big(v-v_{\mathrm{th}}\big)\,(v-c)$ in $\dot v$}
\end{minipage}};

\draw[->] (th.south) -- (rst.north);

\draw[->] (th.east) -- ++(3.5,0) node[txt, right] {$S_i(t)$};

\draw[->] (rst.east) .. controls (6.8,-2.2) and (6.8,4.2) .. (vdyn.east)
  node[txt, right, xshift=0.2cm, yshift=-0.5cm] {$v \leftarrow c + (v-c)\,e^{-r_{\text{reset}}\,\Delta t}$};

\draw[->] (rst.west) .. controls (-1.8,-2.2) and (-1.8,2.5) .. (udyn.west)
  node[txt, left, xshift=-0.2cm, yshift=0.8cm] {$u \leftarrow u + d$};

\node[txt, align=left, anchor=west] at (0,-3.6)
{Per-link weight and gate $w_{ij}\,g(q_{ij})$, delay $\tau_{ij}$, and shot-noise $\eta_i(t)$ feed a node with bounded excitability $f_{\mathrm{sat}}$,\\
recovery $u_i$, stochastic threshold, and differentiable reset. The threshold emits $S_i(t)$; reset updates feed back into $v$ and $u$.};

\end{tikzpicture}%
}
\caption{NOS unit with graph-local inputs, weighted per-link gates and delays, exogenous shot-noise, bounded excitability, recovery dynamics, stochastic threshold, and differentiable reset. Three non-overlapping lanes: left for $u\!\to\!v$, centre for $v\!\to\!v_{\mathrm{th}}$, right for reset $\to v$.}
\label{fig:nos-unit-schematic}
\end{figure}
\subsection{Bounded excitability}
\label{subsec:nos-bounded-excitability}

We replace the unbounded quadratic drive in Izhikevich model by a saturating nonlinearity
\begin{align}
f_{\mathrm{sat}}(v) &= \frac{\alpha v^2}{1+\kappa v^2}, \qquad \alpha>0,\ \kappa>0.
\label{eq:nos-fsat}
\end{align}
For small queues this behaves approximately quadratically, while for large queues it saturates at \(\alpha/\kappa\), which enforces a finite growth rate and reflects finite buffer capacity.
Detailed small–signal approximations, derivative bounds, and their use in stability margins are given in \emph{methods}~\ref{app:fsat-bounds}.

\subsection{Graph-local inputs, explicit delays, and per-link queues}
\label{subsec:nos-graph}

Topology enters through delayed presynaptic events (neighbourhood wiring, per-link delays, and optional link-state gates that modulate influence during congestion, scheduling, or wireless fading). Starting from \eqref{eq:nos_start_input}, we incorporate delays and exogenous noise as
\begin{align}
I_i(t) &= \sum_{j\in\mathcal{N}(i)} w_{ij}\,S_j\bigl(t-\tau_{ij}\bigr) + \eta_i(t),
\label{eq:nos-input-delayed}
\end{align}
where \(\mathcal{N}(i)\) is the graph neighbourhood of node \(i\), \(S_j(t-\tau_{ij})\) is the presynaptic event train as it arrives at node \(i\), \(w_{ij}\) encodes a link’s nominal capacity, policy weight, or reliability, \(\tau_{ij}\) is a link delay, and \(\eta_i(t)\) is stochastic drive. 
The statistical properties of these stochastic drives, and their impact on variance and delay tails, are quantified through experiments in the \emph{methods}, particularly \emph{methods}s~\ref{noise_avalanche} and~\ref{subsec:mm1}.
In this work we treat $\eta_i(t)$ as a bursty arrival process and obtain $\tau_{ij}$ and $w_{ij}$ from simple measurements such as round trip times and link utilisation statistics. The full statistical specification of the exogenous drive, the calibration of delays and weights, and the optional link state gates based on $q_{ij}(t)$ are given in \emph{methods}~\ref{app:nos-stochastic} and~\ref{sec:delayed-gates}.

\section{Equilibrium and local stability analysis}
\label{sec:stability-local}

Using the properties of the bounded excitability function in \emph{methods}~\ref{app:fsat-bounds}, the existence and uniqueness conditions for the subthreshold equilibrium and the detailed stability proofs are developed in \emph{methods}~\ref{app:stability-details}. 
At equilibrium a \textit{NOS} node holds a steady queue level set by arrivals, service, and graph-local coupling. Solving the stationarity conditions $\dot v_i=\dot u_i=0$ yields a scalar fixed-point equation balancing bounded excitability, linear drain, and a constant baseline from $\gamma+\chi v_{\mathrm{rest}}+I_i^*$. Under the same service–dominance requirement as in Corollary~\ref{cor:nos-algebra}, this equation admits a unique operating point $v_i^*\in[0,1]$ for each admissible mean drive $I_i^*$; the full derivation and small-signal approximations are given in \emph{methods}~\ref{app:equilibrium-eq}.

\subsection{Equilibrium equations}
\label{subsec:equilibrium-eq}

For node $i$ the subthreshold equilibrium $(v_i^*,u_i^*)$ is obtained by solving $\dot v_i=\dot u_i=0$. Eliminating $u_i^*$ gives a scalar balance between bounded excitability, linear drain and a baseline term that absorbs $\gamma$, $v_{\mathrm{rest}}$ and the mean drive $I_i^*$. Under a mild service dominance condition the resulting fixed point is unique in $[0,1]$ for each admissible $I_i^*$ (\emph{methods}~\ref{app:equilibrium-eq}).

\subsection{Jacobian and linear stability}
\label{sec:Jstability-local}

Linearising a \textit{NOS} node about a subthreshold equilibrium $(v^*,u^*)$ produces a $2\times 2$ Jacobian whose entries depend on the small signal slope $f'(v^*)$ and the coefficients $(\lambda,\chi,a,b,\mu,\beta)$. Standard trace and determinant inequalities then yield local stability conditions that tie the damping provided by $\lambda$ and $\chi$ to the maximum admissible slope of $f_{\mathrm{sat}}$ (\emph{methods}~\ref{app:Jstability-local}).

\subsection{Network coupling and global stability}
\label{sec:global_stability}

In isolation a \textit{NOS} unit settles to a load–service balance set by its local parameters. In a network the steady input $I_i^*$ is shaped by neighbours and policy weights, so existence and robustness of equilibria depend on both the operating point and the coupling matrix $W$. Bounds on $I_i^*$ in terms of matrix norms of $W$ lead to sufficient existence conditions that expose how heavy fan–in or large policy weights shrink headroom for a given steady drive.

A convenient scalar “operational margin” summarises these existence bounds in terms of subthreshold damping $\chi$ and the maximum steady input $I_{\max}=\max_i I_i^*$. Positive margin corresponds to safe operation with headroom for burstiness; margin near zero signals proximity to loss of equilibrium. The trade-off between damping and sustained load is illustrated in Fig.~\ref{fig:op_heat}, while the full inequalities and derivations are given in \emph{methods}~\ref{app:global_stability}, and the associated bifurcation
structure and parameter trends (saddle--node versus Hopf onsets) are
summarised in \emph{methods}~\ref{sec:bifurcationOP}.

\begin{figure}[t]
\centering
\includegraphics[width=.85\linewidth]{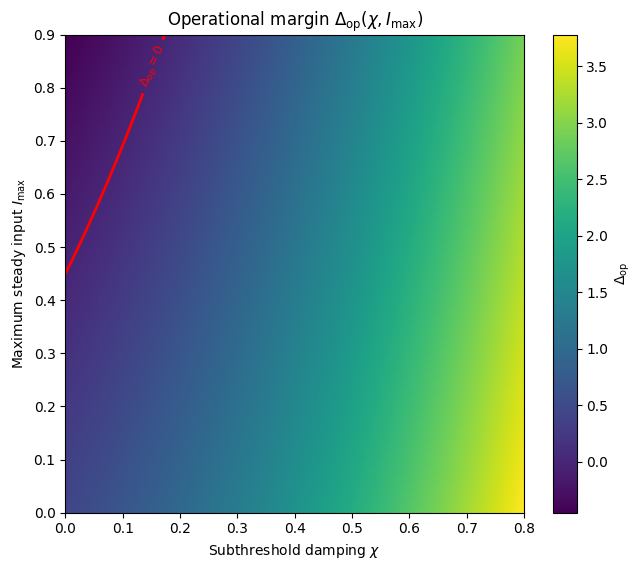}
\caption{Operational margin $\Delta_{\mathrm{op}}(\chi, I_{\max})$ as a function of subthreshold damping $\chi$ and maximum steady input $I_{\max}=\max_i I_i^*$. The red contour marks $\Delta_{\mathrm{op}}=0$; points above this curve (larger $I_{\max}$ for the same $\chi$) lie beyond the small-signal existence bound. Parameters $\{\alpha,\kappa,\lambda,\beta,a,b,\mu,\gamma,v_{\mathrm{rest}}\}$ are held fixed as in the text. 
}
\label{fig:op_heat}
\end{figure}

\subsection{Block Jacobian, network spectrum, and stability threshold}
\label{sec:stability-threshold}

Linearising the coupled \textit{NOS} network about a subthreshold equilibrium yields a $2N\times 2N$ block Jacobian, whose network contribution enters through the linearised drive sensitivity $\delta I \approx G\,\delta v$. Under homogeneous scaling, one may take $G=gW$. A corresponding vector-norm bound is given in methods~\ref{app:dI-G}.
Projecting onto the Perron mode of $W$ reduces the stability check to a $2\times 2$ effective Jacobian (methods~\ref{app:stability-threshold}). Routh--Hurwitz then yields a scalar coupling threshold in terms of $k:=g\,\rho(W)$:
\begin{equation}
g\,\rho(W) \;<\; \Lambda - f'_{\mathrm{sat}}(v^*) ,
\label{eq:net-small-signal-main}
\end{equation}
equivalently $g\,\rho(W)+f'_{\mathrm{sat}}(v^*)<\Lambda$, where $\Lambda$ is the net drain defined from the local coefficients (cf.~\eqref{eq:nos-cor-service}). Parameters are non-dimensionalised so that $T\,\Lambda$ and $T\,f'_{\mathrm{sat}}(v^*)$ are dimensionless and the operational margin \(
\Delta_{\mathrm{net}} := \Lambda - f'_{\mathrm{sat}}(v^*) - g\,\rho(W). \) is comparable across sites (methods~\ref{app:nos-scaling}). This separates node physics from topology and matches the operational proxy used in Section~\ref{sec:net-stability}.

 The full block Jacobian, Perron-mode reduction, numerical spectral validation, and Gershgorin-style bounds are given in \emph{methods}~\ref{app:stability-threshold} and \ref{sec:gershgorin}. Finally, \emph{methods}~\ref{sec:bifurcationOP} characterises the saddle--node and Hopf onsets and shows that the instability threshold obeys the spectral collapse relation in~\eqref{eq:collapse_gstar}.

\section{Network-level stability and topology dependence}
\label{sec:net-stability}

We simulate \textit{NOS} networks with an event driven integrator that updates delayed spikes, state variables and thresholds for every node at each step. Pseudocode and implementation details, including optional stochastic arrivals and avalanche counting, are provided in the  \emph{methods}~\ref{pseudocode}.

\subsection{Topologies and experiments}
\label{sec:net-topology}
We tested \emph{NOS} on large networks of size $N=250$ with different 
canonical topologies:

\paragraph{Chain topology.} Nodes are arranged in a linear sequence, 
representing serial bottlenecks such as routers on a backbone link. 
Propagation is slowed by distance, requiring stronger coupling for instability.

\paragraph{Star topology.} One hub connects to all leaves, representing 
a single aggregation point such as a central switch. 
This configuration is fragile, as overload of the hub destabilises 
the entire structure.

\paragraph{Scale-free topology.} Networks generated by the 
Barabási–Albert model capture heterogeneous connectivity with hubs and peripheries. This structure shows intermediate behaviour: hubs can initiate cascades, but leaves constrain their spread.

Figure~\ref{fig:NOS_panels} summarises the network-level phenomena captured by \textit{NOS}. Panel~(a) shows that when link delays are included ($\tau_{ij}>0$), the stability boundary shifts: oscillations emerge at lower coupling $k$, reflecting how communication latency destabilises otherwise stable traffic.  
Panel~(b) compares absolute resilience across topologies: chains require stronger coupling to destabilise, stars are fragile because of their single hub, and scale-free networks lie in between.  
Panel~(b$''$) reframes this relative to the Perron spectral prediction. Here chains deviate strongly, sitting well above the $1/\rho(W)$ baseline, while stars and scale-free graphs align more closely. This means that spectral tools capture hub-dominated topologies reliably but underestimate the resilience of homogeneous chains.  
Additional synchrony measures based on a Kuramoto type order parameter and finite size trends are reported in \emph{methods}~\ref{sec:synchrony}, shows  $\langle R \rangle$ versus $k = g \,\rho(W)$ (Fig.~\ref{fig:R_vs_k}) and the finite-size sharpening of the transition around the Perron prediction $k^\star$.
Together these results demonstrate that \textit{NOS} bridges dynamical-systems formalism and networking dynamics.  
It shows where simple spectral theory suffices (hub-dominated topologies) and where explicit simulation is required (homogeneous or delayed structures).

\begin{figure}[h!]
\centering
\includegraphics[width=0.65\textwidth]{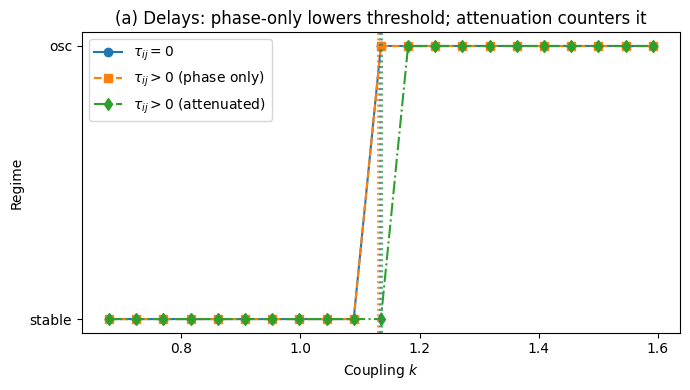}
\includegraphics[width=0.65\textwidth]{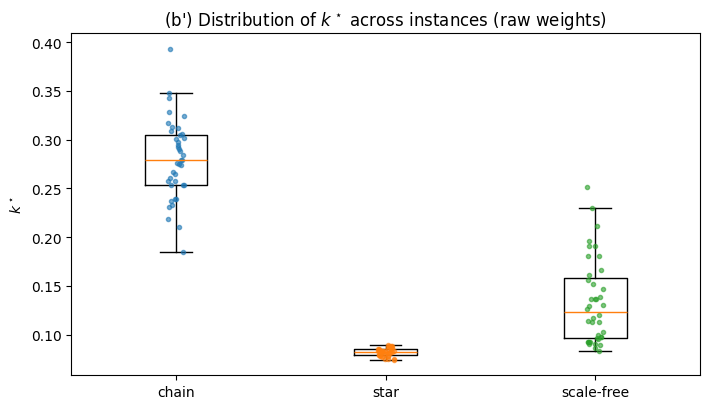}
\includegraphics[width=0.65\textwidth]{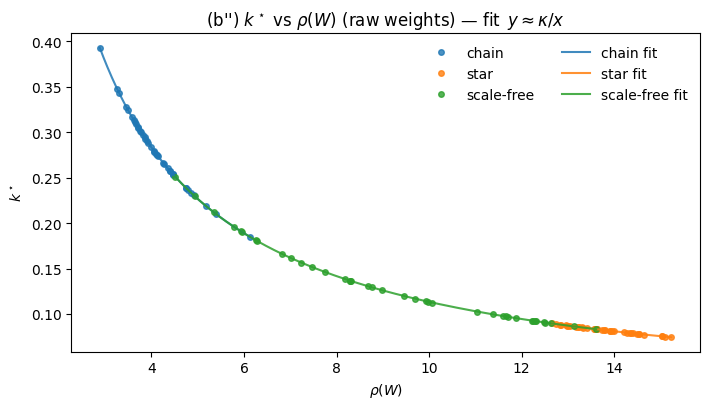}
\caption{Network-level dynamics in \textit{NOS}.  
(\textbf{a}) Delay effects: introducing link delays $\tau_{ij}>0$ shifts the stability boundary, causing oscillations to emerge at lower coupling $k$.  
(\textbf{b}) Absolute topology effects: chain networks destabilise last (resilient), stars first (fragile), with scale-free in between.  
(\textbf{b$''$}) Relative to Perron-mode prediction: chain networks deviate strongly above the $1/\rho(W)$ line, while star and scale-free graphs track the prediction closely.}
\label{fig:NOS_panels}
\end{figure}

\subsection{Predictive baseline: comparison with machine learning models}
\label{sec:ML-comparison}

Many operational networks lack reliable labels, so short-horizon forecasting with residual-based event inference is the standard route. We follow that practice. A complementary zero-shot comparison, in which all methods operate without supervised training and use only arrival-based calibration, is reported in \emph{methods}~\ref{subsec:zeroshot}. We compare \textit{NOS} with tGNN, GRU, RNN, and MLP under three canonical graphs: scale-free, star, and chain. Full details of the evaluation protocol are given in \emph{methods}~\ref{sec:evaluation-protocol}.

\begin{figure*}[h!]
  \centering
  \begin{subfigure}[t]{0.85\textwidth}
    \includegraphics[width=\linewidth]{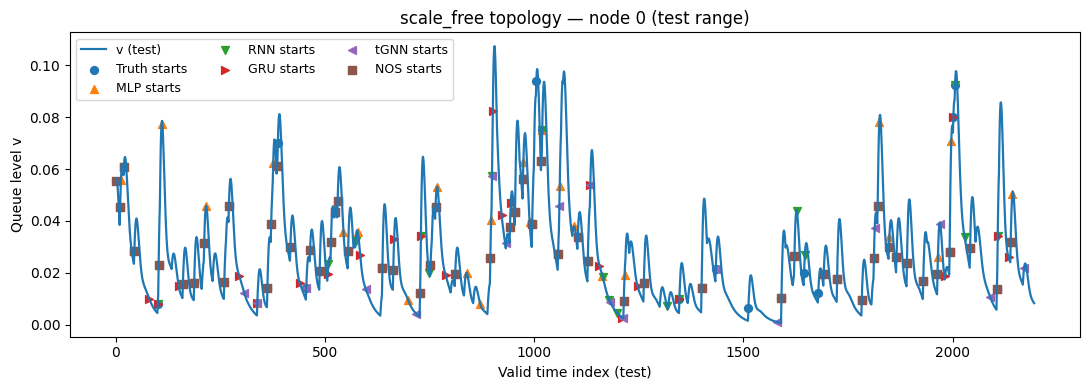}
    \caption{Scale-free. Truth and model start markers on the test range.}
    \label{fig:range-scale}
  \end{subfigure}\hfill
  \begin{subfigure}[t]{0.85\textwidth}
    \includegraphics[width=\linewidth]{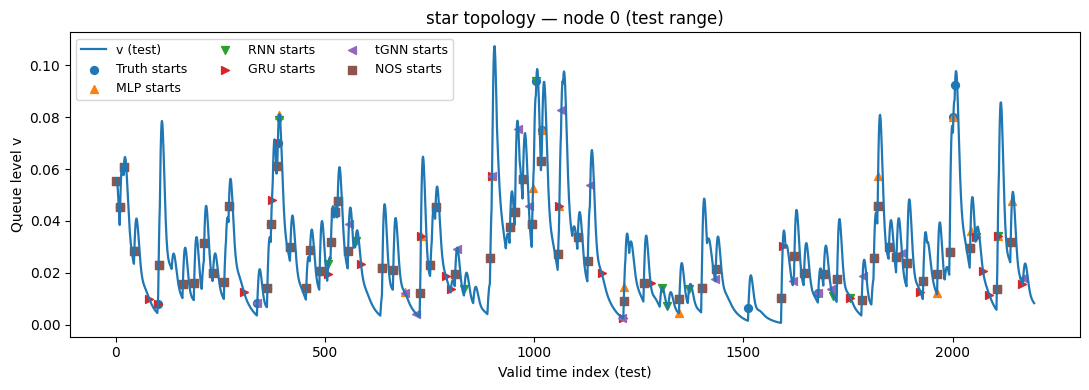}
    \caption{Star. Truth and model start markers on the test range.}
    \label{fig:range-star}
  \end{subfigure}\hfill
  \begin{subfigure}[t]{0.85\textwidth}
    \includegraphics[width=\linewidth]{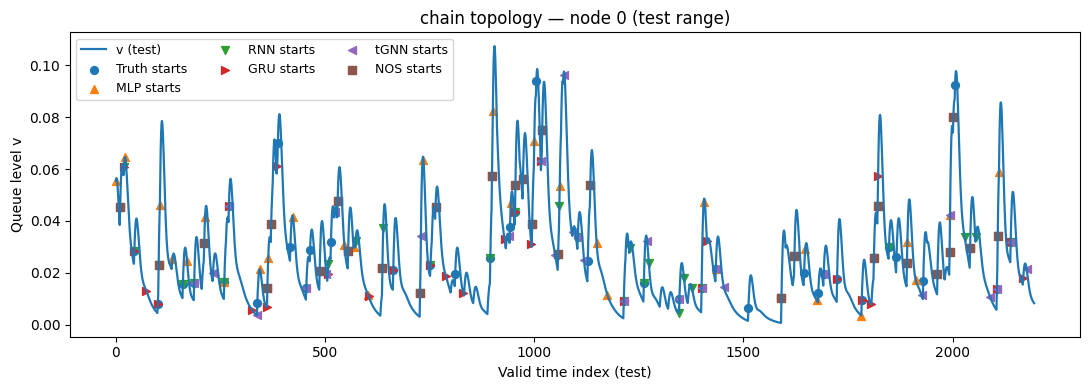}
    \caption{Chain. Truth and model start markers on the test range.}
    \label{fig:range-chain}
  \end{subfigure}
  \caption{Test-range alignments under train-calibrated residual thresholds. A marker at the truth dot is correct, to the right is late, to the left is early. Extra markers indicate false positives, and missing markers near truth indicate false negatives. \textit{NOS} remains close to truth across all three topologies while avoiding small oscillations.}
  \label{fig:ranges}
\end{figure*}

Figure~\ref{fig:ranges} presents the test-range series with start markers for each method. A marker at the truth dot indicates a correct start, a marker to the right indicates a late start, and a marker to the left indicates an early start. Extra markers away from a truth dot are false positives, while missing markers in the vicinity of a truth dot are false negatives. Read across the three panels from scale-free to star to chain and the pattern is consistent. On the scale-free graph, bursts arrive in clusters and ramps are steep. \textit{NOS} remains close to the truth throughout these clusters and avoids small oscillations. The tGNN calls tend to appear inside the ramps, which shows up as slight right shifts. RNN produces occasional early calls on moderate slopes, and GRU sits between RNN and tGNN. MLP is the least selective and places extra markers on low-amplitude wiggles. Moving to the star graph, the hub concentrates load while spokes are quieter. This helps tGNN, yet \textit{NOS} remains more selective at the spokes and keeps the timing tight when the hub rises. On the chain graph, onsets are clearer and repeat through the range. \textit{NOS} lands on the main rises with few stray calls in the troughs, while tGNN shows a mild lag, RNN is a little early on shallow slopes, and GRU again lies between them. The visual story across the three topologies is that \textit{NOS} aligns closely without over-firing when the background is quiet.

Figure~\ref{fig:metrics} gathers the aggregate scores. The bars report F1, precision, and recall under the same train-calibrated residual protocol. The curves report one-step error as MAE and RMSE and the median start-latency in milliseconds. Higher bars indicate better detection quality. Lower curves indicate better forecasting accuracy and earlier response. Across all three topologies \textit{NOS} attains the highest F1 by holding strong precision while lifting recall. It also yields the lowest MAE and RMSE and the smallest latency. RNN keeps good precision but loses recall and responds later. GRU and tGNN are intermediate and more sensitive to the graph structure. MLP has higher errors and longer delays. The metric panels match what the range plots suggested: \textit{NOS} is both earlier and more selective, and its forecasts give cleaner residuals for thresholding.

\begin{figure}[h!]
  \centering
  \begin{subfigure}[t]{0.5\textwidth}
    \includegraphics[width=\linewidth]{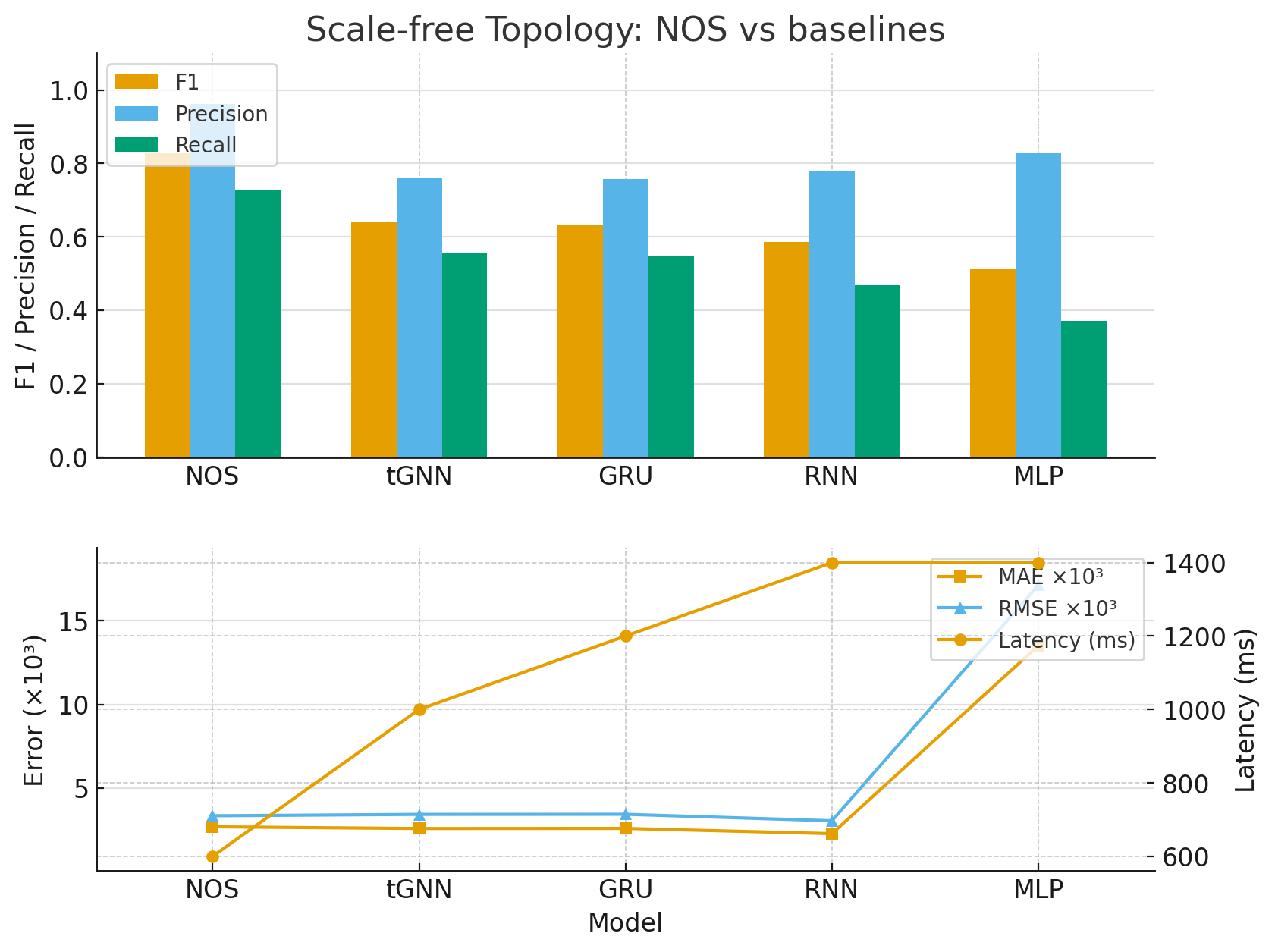}
    \caption{Scale-free. F1, precision, recall, MAE, RMSE, and median start-latency.}
    \label{fig:bars-scale}
  \end{subfigure}\hfill
  \begin{subfigure}[t]{0.5\textwidth}
    \includegraphics[width=\linewidth]{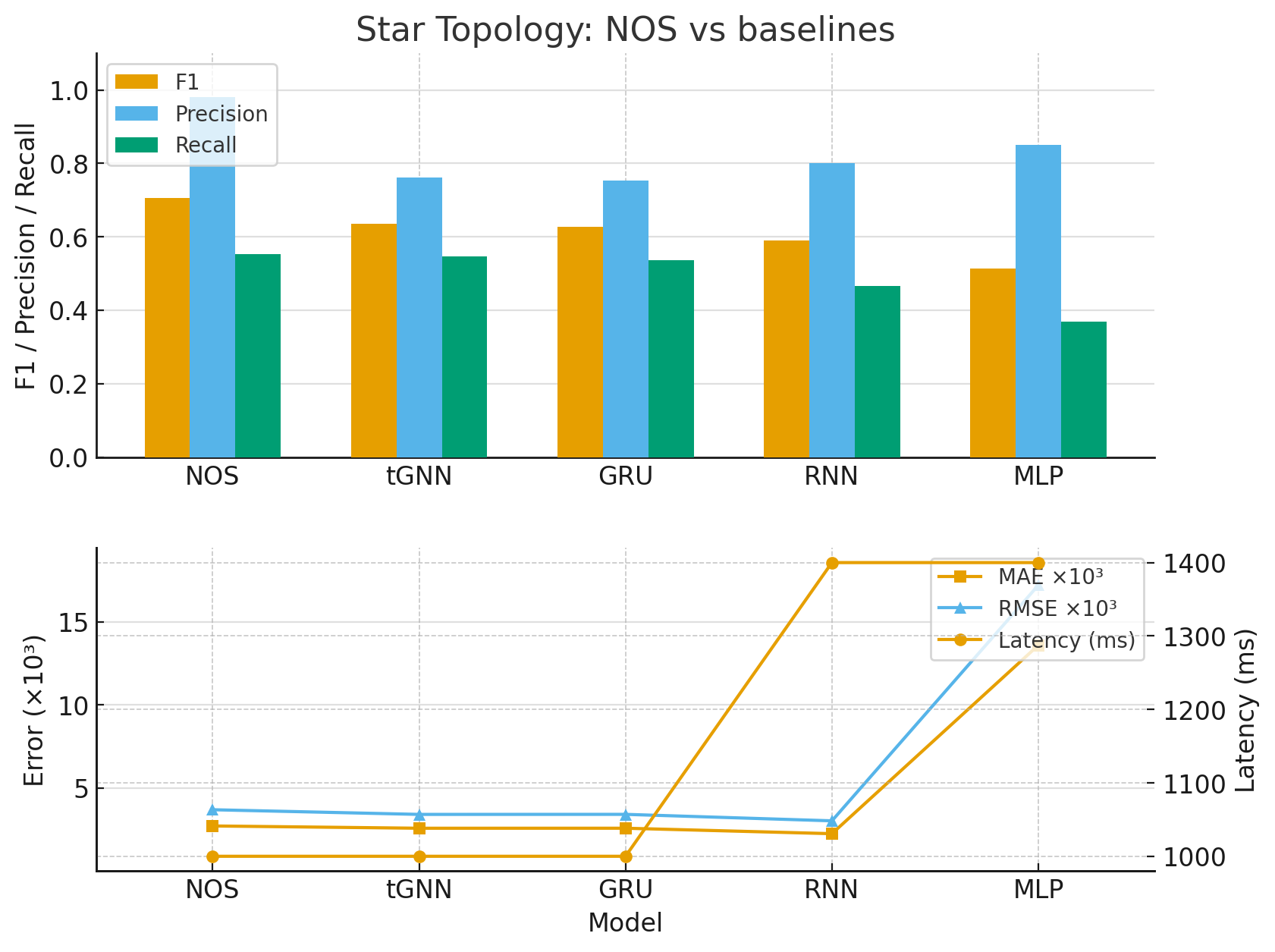}
    \caption{Star. F1, precision, recall, MAE, RMSE, and median start-latency.}
    \label{fig:bars-star}
  \end{subfigure}\hfill
  \begin{subfigure}[t]{0.5\textwidth}
    \includegraphics[width=\linewidth]{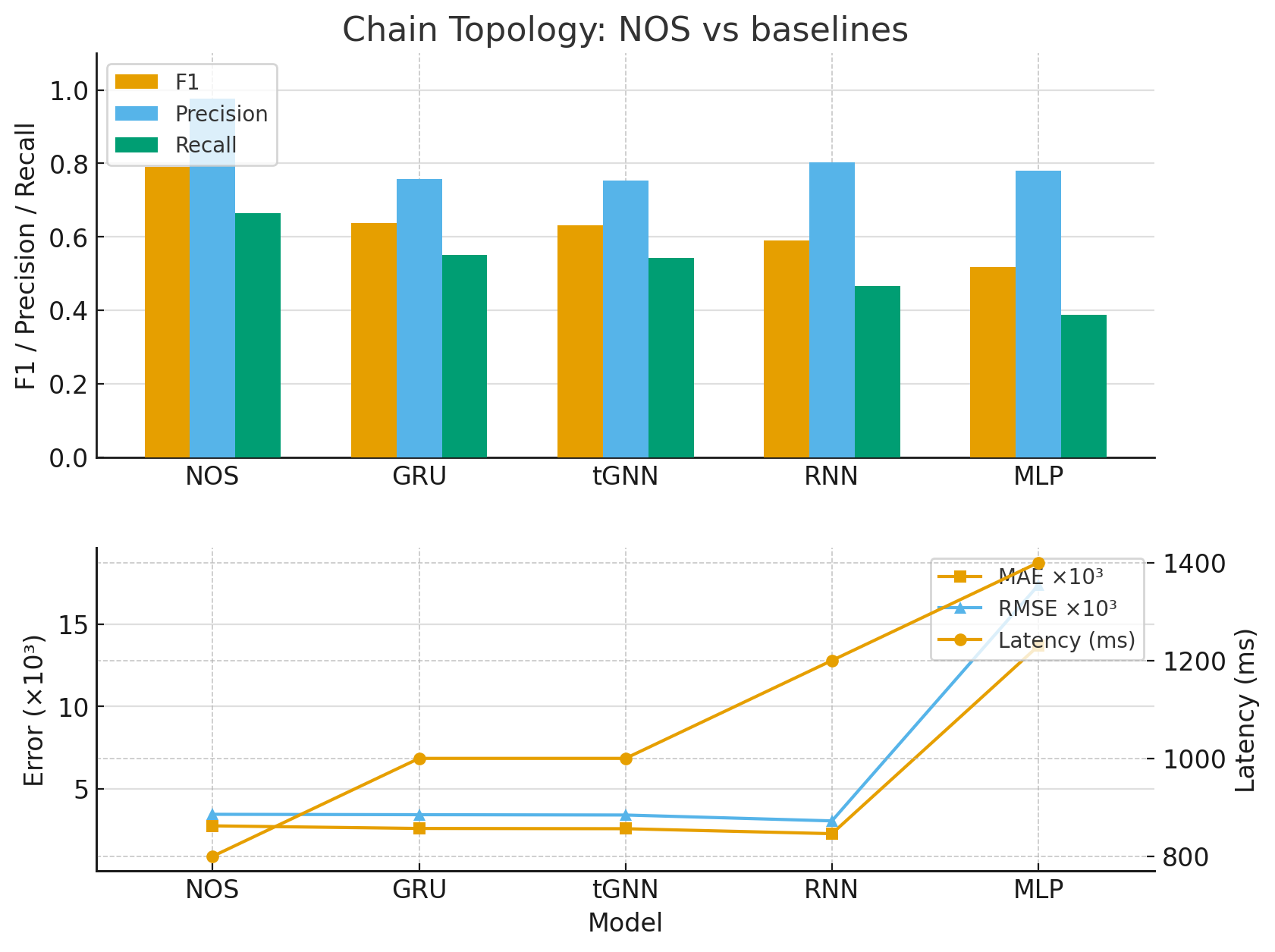}
    \caption{Chain. F1, precision, recall, MAE, RMSE, and median start-latency.}
    \label{fig:bars-chain}
  \end{subfigure}
  \caption{Summary metrics under the label-free residual protocol. Higher bars are better for F1, precision, and recall. Lower curves are better for MAE, RMSE, and latency. \textit{NOS} achieves the best F1, the lowest forecast error, and the earliest starts across all three topologies.}
  \label{fig:metrics}
\end{figure}

\section{Discussion}
\textit{NOS} treats congestion monitoring and control as a compact spiking process whose ingredients reflect how networks are actually run. The queue proxy \(v\) is normalised to buffer fullness, the recovery state \(u\) encodes recent stress, excitability is bounded to keep queues within a finite envelope, and the linear drains \(\lambda\) and \(\chi\) sit on the same device clocks as schedulers and EWMA filters. Thresholds implement budgeted alarms and resets are smooth enough for gradient-based training while still producing crisp, scheduler-scale events. Inputs are graph-local and include explicit delays and per-link gates, so timing and residual capacity appear as first-class quantities rather than side effects of architecture.

The comparison with classical spiking paradigms clarifies what is gained. Random Neural Networks exploit product forms and queueing intuition but hinge on Poisson assumptions and stationary averages. Hodgkin--Huxley and Izhikevich offer rich excitable dynamics, yet their variables and parameters are tied to ion channels and generic cortical behaviour rather than to queue occupancy, service, and link delay. Across these models, either the physics is faithful and heavy or it is light and disconnected from networking semantics. \emph{NOS} occupies a middle ground. It keeps a two-dimensional state with saturating excitability \(f_{\mathrm{sat}}\) and a recovery loop, and it assigns queue-level meaning to every parameter that appears in the small-signal slope, stability tests, and reset dynamics. This mapping underpins the operational rules in the \emph{methods}.

The analytical results show that stability can be expressed in terms that are familiar to operators. At node level, the Routh--Hurwitz conditions reduce to a balance between the local slope
$f'_{\mathrm{sat}}(v^*)$ and a drain margin
$\Lambda := \min\{(a+\mu),\,ab/(a+\mu)\}-(\beta-\lambda-\chi)$.
At network level, coupling enters through $k=g\rho(W)$, giving the proxy $k+f'_{\mathrm{sat}}(v^*)<\Lambda$.
At network level, coupling enters only through \(g\,\rho(W)\), so the proxy
\(
g\,\rho(W)\,+f'_{\mathrm{sat}}(v^*) < \Lambda
\)
separates topology from node physics. This yields three practical levers: adjust \(\rho(W)\) by reweighting or sparsifying heavy fan-in, adjust the gain \(g\) on stressed regions through scheduling and policy, or adjust the local margin \(\Lambda\) through service, damping, and recovery parameters. The bifurcation and synchrony analyses support this scalar picture, capturing both saddle--node tipping and Hopf-like oscillations, while finite-size experiments show that thresholds sharpen with network size.

Queueing baselines anchor \emph{NOS} against classical theory and device-level intuition. In open loop, light-load calibration aligns the \emph{NOS} mean with the M/M/1 prediction, while bounded excitability and soft pull-back bend earlier near saturation and yield shorter tails under bursty MMPP arrivals. From a networking viewpoint, \emph{NOS} trades a small increase in point error at high load for a systematic reduction in deep-queue probability, which is where SLO violations and drop storms arise. In closed loop, NOS-based controllers produce marking signals that are decisive and low in jitter, avoiding both noisy oscillations of raw queue marking and the lag induced by heavy low-pass filtering. These behaviours are consistent with the headroom conditions and with the linear decay times extracted from the local Jacobian.

The forecasting experiments indicate that \emph{NOS} is competitive with compact learned baselines when labels are scarce. A fluid predictor driven by the same arrivals provides a strong MAE reference because it effectively reproduces the queueing simulator. \emph{NOS} does not attempt to mimic every backlog micro-fluctuation and therefore carries a larger point error, yet it delivers high event skill for burst onset detection and produces compact responses around excursions. Temporal GNNs, MLPs, and GRUs trained in the same label-free regime tend to trade off noise suppression against timing accuracy. In contrast, the bounded, reset-driven dynamics of \emph{NOS} yield sharp onset timing without prolonged false positives between bursts. The protocol is deliberately strict, with train-only calibration and validation-based thresholds, which reduces the likelihood that gains arise from tuning artefacts.
Noise sensitivity and avalanche analyses view \emph{NOS} through the lens of stochastic drive. Shot-noise arrivals with realistic smoothing produce firing statistics whose dependence on amplitude, coupling, and recovery matches the linear response theory. As the network approaches the coupling threshold, the same noise induces higher firing rates, larger inter-spike-interval variability, and heavier avalanche tails. The fitted exponents and synchrony measures indicate that coupling can push the system toward congestion cascades along dominant influence paths. Operationally, this supports the same levers suggested by the deterministic analysis: increase service and damping, saturate earlier through larger \(\kappa\), and reduce effective coupling on noisy links.

An explicit path to neuromorphic deployment follows from the design. Address--event routing realises sparse, delayed coupling with the same \(W\) and \(\tau_{ij}\) that appear in the analysis. Soft resets can be implemented as leaky pull-backs without algebraic jumps, and fixed-point quantisation respects the queue normalisation used in software. The stability proxy carries over once finite-precision errors are bounded, allowing the same margin checks to be applied on mixed-signal or digital neuromorphic substrates. This alignment between model, analysis, and hardware is attractive for edge and in-network deployments where energy per decision should scale with activity rather than with link count.

A few limitations follow from our modelling choices. The arrival model uses compound Poisson shot noise with exponential smoothing, which captures burstiness at typical telemetry timescales but does not represent long-range dependence or heavy-tailed flow mixing in full generality. The experiments focus on single-queue observables and fixed topologies, and the queueing baselines largely reflect M/M/1-style behaviour. Joint adaptation with transport protocols, multi-queue schedulers, and cross-layer policies remains open. 

\section{Conclusion}
We introduced \emph{Network-Optimised Spiking} (NOS), a two-state event-driven unit designed for delay-coupled, graph-structured telemetry with explicit queue semantics. NOS combines bounded excitability, explicit service and damping leaks, delayed graph-local coupling with optional per-link gates, and differentiable resets that support surrogate-gradient training. We provided node- and network-level stability conditions, including a scalar spectral proxy based on \(g\,\rho(W)\) that separates topology from local dynamics, and we connected stochastic drive under telemetry smoothing to increased variability near stability boundaries. Empirically, under a strict label-free residual protocol, NOS improves early-warning quality and detection latency over compact learned baselines across canonical topologies, while retaining calibration handles that are interpretable to operators.

Looking ahead, three directions are immediate. Richer traffic and topology models, including multi-tenant datacentre fabrics and mobile radio access networks, would test robustness of the stability proxy and forecasting claims. Hierarchical designs that couple NOS units with GNN or RL controllers could combine fast local reflexes with global policy. Finally, hardware-in-the-loop studies on contemporary neuromorphic platforms can quantify energy, latency, and tail behaviour under realistic deployment constraints.

\section*{Impact Statement}
NOS supports stability-aware, event-driven inference for delayed, graph-coupled streams, with parameters that can be calibrated from telemetry and checked against simple spectral margins. It can reduce detection latency and severe congestion events in resource-constrained deployments by computing mainly when activity changes. 

\appendix

\section*{Methods}\label{sec11}

\section{Spiking Neural Networks for Networking: Scope and Complementarity}\label{subsec:snn-scope}
Spiking models describe a time-varying state and emit discrete events when a threshold is crossed. In the biological literature this is written in terms of a membrane potential \(V(t)\); for networking the same idea can be read as a state that advances under inputs and intrinsic dynamics until an event is produced. The event acts as the computational primitive.
Intelligent networking has been led by deep learning, reinforcement learning, and graph neural networks trained on aggregates or labelled flows. These methods assume synchronous batches and centralised compute. Spiking neural networks use event-driven updates, which aligns with irregular packet arrivals, bursty congestion, and anomaly signatures that depart from baseline rather than average behaviour. The practical question is when this alignment matters enough to prefer an SNN, and when conventional ML or GNNs remain the better tool, \ref{tab:snn-ml-map} summarizes it.
\begin{table*}[h!]
\centering
\caption{Comparative suitability of SNNs and conventional ML/GNNs across networking problem domains.}
\label{tab:snn-ml-map}
\renewcommand{\arraystretch}{1.25}
\setlength{\tabcolsep}{6pt}
\begin{tabular}{
  p{3.7cm}
  >{\RaggedRight\arraybackslash}p{5.3cm}
  >{\RaggedRight\arraybackslash}p{5.3cm}}
\toprule
\textbf{Problem domain} & \textbf{More suitable with SNNs} & \textbf{More suitable with ML/GNNs} \\
\midrule
Anomaly detection (e.g., DDoS, worm outbreaks) &
Event-driven detection of sudden bursts; low-latency alarms \cite{Maciag2021,Baessler2022} &
Flow-level detection from large corpora; supervised and deep models \cite{Ahmed2016,Jiang2023} \\
\addlinespace
Routing optimisation &
Local adaptation in ad hoc or sensor settings using local rules; energy- and event-efficient deployment on neuromorphic hardware \cite{Davies2018,Furber2014} &
Global path optimisation and traffic engineering with topology-wide objectives; GNN/RL controllers \cite{XuTealSIGCOMM2023} \\
\addlinespace
QoS prediction (throughput, delay) &
Edge-side reaction to micro-bursts under power limits (event-driven inference) \cite{Davies2018} &
Predictive modelling from aggregates; offline/online regression and forecasting \cite{Giannakou2020,Krishna2024,xu2023} \\
\addlinespace
Traffic classification &
Temporal signatures (periodicity, jitter) via spike timing and event patterns \cite{rasteh2022,plat2017} &
Feature-rich flow classification with labels using CNN/RNN/GNN \cite{Nguyen2008,Zhao2021,basit2022} \\
\addlinespace
Wireless resource allocation &
Local duty-cycling, sensing, and lightweight on-device decision-making under tight energy budgets \cite{Furber2014} &
Centralised spectrum allocation and large-scale optimisation with RL/GNN \cite{LuongDRL2019,Zhi2022,IQBAL2023} \\
\addlinespace
Network resilience and fault detection &
Distributed detection of local failures and sudden state changes in streams \cite{Maciag2021,Baessler2022} &
Correlating failures across topology; resilience modelling and recovery for SDN/CPS \cite{Menaceur2023,SegoviaFerreiraACMCSUR2024,Liu2024} \\
\bottomrule
\end{tabular}
\end{table*}

\paragraph{Event-driven versus batch-oriented learning.}
Packet systems are inherently event-driven. Conventional pipelines buffer into fixed windows before inference, adding latency and energy. SNNs update on arrival, so a single event can trigger a decision without waiting for a batch. This is attractive for anomaly alerts and micro-burst monitoring where early reaction is valuable.

\paragraph{Energy and deployment constraints.}
Edge devices and in-network elements often face tight power and memory budgets. Neuromorphic platforms (e.g., Loihi and SpiNNaker) report energy advantages for spiking workloads, which supports in situ decisions at low latency \cite{Davies2018,SpiNNaker}. In contrast, data-centre controllers and core routers, with ample power and global context, typically profit from conventional ML and GNNs.

\paragraph{Temporal coding and traffic structure.}
Some networking tasks hinge on timing patterns rather than sheer volume: jitter in interactive media, periodic bursts in games, or hotspot prediction from inter-arrival times. SNNs encode such structure in spike timing and phase. Recurrent deep models can capture time as well, but temporal coding is native to spiking dynamics.

\paragraph{Local learning and autonomy.}
Networks are distributed. Routers, base stations, and sensors often adapt with limited scope. SNNs admit local rules such as spike-timing-dependent plasticity (STDP), enabling on-node adaptation. Global optimisation over topology and multi-objective trade-offs remains a strength of GNNs and reinforcement learning.

\paragraph{Complementary roles.}
SNNs and conventional AI are complementary. SNNs provide efficient, event-driven reflexes for local, time-critical decisions under tight budgets. Deep learning and GNNs act as global planners that integrate structure and statistics. A practical design places SNNs in edge nodes or programmable switches for reflexive response, coordinated with ML/GNN controllers for end-to-end policy.

\section{Classical spiking paradigms and networking suitability}
\label{sec:classicsnn}
This section reviews three classical spiking neural networks and examines what they offer, and their suitability for networking tasks where queue semantics, topology, and trainability plays important role.

\subsection{Random Neural Network (Gelenbe)}
\label{subsec:RNNG}
The Random Neural Network (RNN) treats each neuron as an integer-valued counter driven by excitatory and inhibitory Poisson streams and yields a product-form steady state \cite{gelenbe1989random}. Let $q_i=\Pr\{k_i>0\}$ denote the activity of neuron $i$. In steady state
\begin{align}
q_i &= \frac{\Lambda_i^{+}}{r_i+\Lambda_i^{-}}, \label{eq:rnn:q}
\\
\Lambda_i^{+} &= \lambda_i^{+} + \sum_{j} q_j r_j p_{ji}^{+}, \qquad
\Lambda_i^{-} = \lambda_i^{-} + \sum_{j} q_j r_j p_{ji}^{-}, \label{eq:rnn:lambda}
\end{align}
and the joint distribution factorises as $\Pr[k_1,\ldots,k_N]=\prod_i (1-q_i) q_i^{k_i}$. 

The queueing affinity and fixed points make calibration attractive. However, backbone and datacentre traces are overdispersed and often long-memory, so a stationary product form understates burst clustering and tail risk. This follows from the Poisson and exponential assumptions of the model, which wash out temporal correlation and refractoriness. Product-form results emphasise stationarity, whereas operations rely on transient detection and control. Parameters \(r_i\) and \(p_{ji}^{\pm}\) do not tie directly to per-link capacity, delay, or gating. Under strong excitation, the signal-flow equations can fail to admit a valid solution with \(q_i<1\), leading to instability or saturation; when a solution exists it is unique \cite{gelenbe1989random, GelenbeNECO1990}.

\subsection{Hodgkin--Huxley}
\label{subsec:HH}
The Hodgkin--Huxley (HH) equations provide biophysical fidelity through voltage-gated conductances and reproduce spikes, refractoriness and ionic mechanisms \cite{Hausser2000}. The membrane and current relations are
\begin{align}
C_m \frac{dV}{dt} &= -\big(I_{\mathrm{Na}}+I_{\mathrm{K}}+I_L\big) + I_{\mathrm{ext}}, \label{eq:hh:mem}
\end{align}

\begin{align}
\begin{split}
I_{\mathrm{Na}} = \bar g_{\mathrm{Na}}\, m^3 h \,(V-E_{\mathrm{Na}}),
I_{\mathrm{K}} = \bar g_{\mathrm{K}}\, n^4 \,(V-E_{\mathrm{K}}),\\
I_L = \bar g_L (V-E_L), \label{eq:hh:curr}
\end{split}
\end{align}

\begin{align}
\frac{dx}{dt} &= \alpha_x(V)(1-x) - \beta_x(V)x, \quad x\in\{m,h,n\}. \label{eq:hh:gates}
\end{align}
where \(V\) is membrane potential; \(C_m\) membrane capacitance. \(I_{\mathrm{Na}}, I_{\mathrm{K}}, I_L\) are sodium, potassium, and leak currents; \(I_{\mathrm{ext}}\) is external or synaptic input. \(\bar g_{\mathrm{ion}}\) are maximal conductances and \(E_{\mathrm{ion}}\) reversal potentials. \(m,h,n \in [0,1]\) are gating variables with voltage-dependent rates \(\alpha_x(V), \beta_x(V)\).

Hodgkin–Huxley offers mechanistic fidelity and a broad repertoire of excitable behaviour, and recent results show that HH neurons can be trained end to end with surrogate gradients, achieving competitive accuracy with very sparse spiking on standard neuromorphic benchmarks \cite{HHExtremeSparsity}. This confirms feasibility for learning and suggests potential efficiency from activity sparsity. For network packet-level control, the obstacles are practical rather than conceptual. The model comprises four coupled nonlinear differential equations with voltage-dependent rate laws, which typically require small timesteps and careful numerical treatment; in practice this raises computational cost and stiffness at scale, and training can fail without tailored integrators and schedules \cite{HHExtremeSparsity}. There is also no direct mapping from biophysical parameters to queue occupancy, service rate, or link delay, so calibration from network telemetry lacks interpretability. Taken together, HH remains a benchmark for cellular electrophysiology \cite{Hausser2000}, but it is a poor fit for networking tasks that requires lightweight, semantically mapped states and trainable dynamics.

\subsection{Izhikevich}
\label{subsec:Izhikevich}
The Izhikevich model balances speed and diversity of firing patterns \cite{Izhikevich2003}. Its dynamics and reset are
\begin{align}
\begin{split}
\frac{dv}{dt} = 0.04 v^2 + 5v + 140 - u + I, \\
\frac{du}{dt} = a(bv-u), \label{eq:izh:dyn}
\end{split}
\end{align}
\begin{align}
\begin{split}
\text{if } v \ge 30\,\mathrm{mV}: \quad  v \leftarrow c, \quad u \leftarrow u + d. \label{eq:izh:reset}
\end{split}
\end{align}
where, \(v\) is the membrane potential (mV). \(u\) is a recovery variable with the same units as \(v\) that aggregates \(K^+\) activation and \(Na^+\) inactivation effects. \(I\) is the input drive expressed as an equivalent voltage rate (mV\,ms\(^{-1}\)). The spike condition is \(v \ge 30\) mV, after which \(v \leftarrow c\) and \(u \leftarrow u+d\). \(a\) is the inverse time constant of \(u\) (ms\(^{-1}\)); \(b\) is the coupling gain from \(v\) to \(u\) (dimensionless); \(c\) is the post-spike reset level of \(v\) (mV); \(d\) is the post-spike increment of \(u\) (mV).

The Izhikevich model achieves breadth of firing patterns with two state variables and a hard after-spike reset \eqref{eq:izh:dyn}–\eqref{eq:izh:reset}. This efficiency is documented through large-scale pulse-coupled simulations and parameter recipes for cortical cell classes. For networking, three constraints matter. First, the quadratic drift in \eqref{eq:izh:dyn} is not intrinsically saturating, which is mismatched to finite-buffer behaviour and can drive unrealistically rapid growth between resets under heavy input. Second, the reset in \eqref{eq:izh:reset} is discontinuous, which hinders gradient-based optimisation. Third, the parameters$(a,b,c,d)$ are phenomenological and the connectivity is generic, so queue occupancy, service rate, link delay, and per-link quality are not first-class. These features explain the model’s value for neuronal dynamics and its limited interpretability for packet-level control.

\subsection{Synthesis}
\label{subsec:syn}
Across the three formalisms the evidence points to the same gaps. Either the model is correct but heavy, or it is fast but carries non-differentiable resets and unbounded excitability. In all cases the parameters do not align with queue occupancy, service, and delay. Topology and per-link quality are usually implicit rather than explicit. For packet networks this limits interpretability, stability under high load, and the ability to train or adapt in situ. These observations set the design brief for NOS: a compact two-state unit with finite-buffer saturation, a service-rate leak and a differentiable reset, graph-local inputs with delays and optional gates, and parameters that map to observable network quantities.

\begin{table*}[h]
\centering
\caption{Classical paradigms versus networking requirements.}
\label{tab:classical-compare}
\renewcommand{\arraystretch}{1.15}
\setlength{\tabcolsep}{6pt}
\begin{tabular}{p{3.4cm}p{3.3cm}p{3cm}p{3.5cm}}
\toprule
 & \textbf{RNN (Gelenbe)} & \textbf{Hodgkin--Huxley} & \textbf{Izhikevich} \\
\midrule
State/parameters map to queue, service, delay & Limited (probabilistic rates) & No & No \\
Trainability for control (gradients) & Moderate at steady state & Low & Low (non-differentiable reset) \\
Topology and per-link attributes explicit & Limited & No & No \\
Finite-buffer realism / stability under load & Stationary focus & Realistic but heavy & Unbounded drive \\
Edge-side compute/energy & Low–moderate & High & Low \\
\bottomrule
\end{tabular}
\end{table*}

\section{Admissible parameter region and calibration conventions}
\label{defvals}
\textit{NOS} parameters are summarised in Table~\ref{tab:param_combined}, while Table~\ref{tab:nos_ranges} records the admissible region explored in our experiments. The entries are expressed on the experimental scale used throughout the paper, with queue fullness normalised to \(v\in[0,1]\) and a sampling bin of \(5\) ms. Rates given “per bin” convert to per second by multiplying by \(200\). We separate structural constraints from initialisation choices so that the table serves reproducibility without implying narrow tuning.

The bounds have three roles. First, they enforce the normalisation and sampling assumptions used by the model and code. Second, they keep the sufficient stability condition from Lemma~2 satisfied with margin for the operating regimes we test. Third, they define a common hyperparameter space for stress tests across the three graph families. Within this region we draw values uniformly unless stated and refine only on the validation split.

Coupling and reset are disambiguated. The network coupling index is \(k_{\text{net}}=g\,\rho(W)\), where \(\rho(W)\) is the spectral radius after normalisation to \(\rho(W)=1\) and \(g\) is the scalar applied during experiments. The symbol \(k_{\mathrm{reset}}\) denotes the sharpness of the soft reset gate and is independent of \(k_{\text{net}}\). Delays \(\tau_{ij}\) are given in milliseconds and reflect link propagation on the chosen binning. Shot-noise parameters \((\nu_i,A)\) match the noise sensitivity experiments and allow drive strength to approach the near-threshold regime without violating the stability bound.

The ranges for \(\alpha,\kappa,\beta,\gamma\) control the shape and slope of the bounded excitability \(f_{\mathrm{sat}}\). The leak terms \(\lambda\) and \(\chi\) set the small-signal decay about \(v_{\mathrm{rest}}\) and are bounded to keep subthreshold dynamics stable at the \(5\) ms resolution. Recovery parameters \((a,b,\mu)\) govern post-burst relaxation and are given as per-bin rates. Threshold \(v_{\mathrm{th}}\), reset depth \(c\), recovery jump \(d\), and pullback speed \(\rho\) determine spiking and refractory behaviour. The mapping from arrivals to effective input uses an offset \(I_0\) and a dimensionless gain, and may be optionally low-pass filtered with \(\tau_s\).

These bounds are admissible rather than prescriptive. Defaults used in the main text fall within Table~\ref{tab:nos_ranges} and are reported in the parameter-initialisation table. Sensitivity studies widening each bound by a factor of two preserve the relative ordering of methods on F1 and latency, with the largest trade-offs arising from \(\lambda\) and \(v_{\mathrm{th}}\) as expected from precision–recall balance. All residual scalers and thresholds are fitted on train only, and validation is used to select per-node thresholds within the same admissible region.

\begin{table*}[htbp]
\centering
\caption{Operational interpretation and \emph{data-driven initialisation} of \textit{NOS} parameters from observed network metrics. Defaults are starting values and a bin width of \(5\)\,ms; they are not hard limits.}
\label{tab:param_combined}
\fontsize{9}{11}\selectfont
\begin{tabular}{p{0.15\textwidth} p{0.33\textwidth} p{0.32\textwidth} p{0.16\textwidth}}
\toprule
Parameter & Observable(s) & Initialisation rule & Typical default \\
\midrule
\(v\) & Queue length or buffer occupancy & Affine map full buffer \(\to 1\), empty \(\to 0\) & — \\
\(v_{\mathrm{th}}\) & False–alarm budget on residuals & Choose so \(\Pr(z>z_\tau)=p_{\mathrm{FP}}\) on train; set threshold at \(z_\tau\) & \(0.60\) \\
\(\lambda\) & Mean service rate \(\mu_{\mathrm{svc}}\) & Per-bin rate: \(\lambda \approx \mu_{\mathrm{svc}}\Delta t\) & \(0.18\) per bin \\
\(\chi\) & Small-signal damping & Fit AR(1) on subthreshold \(v\): \(v_{t+1}\!\approx\!(1-\lambda-\chi)v_t+\dots\) & \(0.02\)–\(0.05\) per bin \\
\(\beta\) & Slope of \(dv/dt\) vs \(v\) & Regress \(\Delta v/\Delta t\) on \(v\) on train residuals & \(0.05\)–\(0.8\) \\
\(\gamma\) & Baseline load & Intercept from the same regression, or set mean residual \(\approx 0\) & \(0.06\)–\(0.10\) \\
\(\alpha,\kappa\) & Nonlinear ramp steepness & Fit \(f_{\mathrm{sat}}(v)=\alpha v^2/(1+\kappa v^2)\) to rising edges & \(\alpha=0.7\), \(\kappa=1.0\) \\
\(a\) & Post-burst relaxation time \(\tau_{\mathrm{rec}}\) & \(a \approx \Delta t/\tau_{\mathrm{rec}}\) (per bin) from exponential fit & \(1.1\) per bin \\
\(b\) & Recovery sensitivity to \(v\) & Tune so \(u\) tracks \(v\) during decay without overshoot & \(1.0\)–\(1.2\) \\
\(\mu\) & Passive decay of \(u\) & Small regulariser on \(u\) drift; set by validation & \(0.05\)–\(0.15\) per bin \\
\(c\) & Post-event baseline of \(v\) & Median \(v\) in a quiet window after events & \(0.10\) \\
\(d\) & Recovery jump on event & Choose to match observed refractory depth & \(0.20\)–\(0.30\) \\
\(k_{\mathrm{reset}}\) & Desired sharpness of reset & Large enough to mimic hard reset without instability & \(14\) \\
\(r_{\text{reset}}\) & Reset time constant \(\tau_{\mathrm{reset}}\) & \(r_{\text{reset}} \approx \Delta t/\tau_{\mathrm{reset}}\) (per bin) & \(5\) per bin \\
\(I_0\), gain & Arrivals \(\to I\) mapping & Regress \(I\) on smoothed arrivals: \(I\!\approx\!I_0+\text{gain}\cdot \text{arrivals}\) & \(I_0=0.10\), gain \(=1.0\) \\
\(\tau_s\) & Burst smoothing need & Low-pass if arrivals are spiky; choose cut-off by validation & \(0\)–\(10\) ms \\
\(w_{ij}\) & Link rates/priorities & Proportional to nominal bandwidth or policy weight; normalise \(\rho(W)=1\) & — \\
\(g\) & Desired coupling index \(k_{\text{net}}\) & Set \(g = k_{\text{net}}/\rho(W)\); pick \(k_{\text{net}}\in[0.8,1.6]\) for stress & topology-specific \\
\(\tau_{ij}\) & Per-link delay & From RTT or profiling; use queue-free component & \(0\)–\(25\) ms \\
\bottomrule
\end{tabular}
\vspace{2pt}

\footnotesize
Per-bin conversion uses \(\Delta t=5\)\,ms. For per-second rates multiply by \(200\).
Defaults are the values used unless stated; sensitivity to \(\{\lambda,v_{\mathrm{th}},\alpha,\kappa\}\) is discussed in this \emph{methods}.
Further design and training guidance, including the telemetry pipeline, reset selection, surrogate gradients, and BPTT window choice, is provided in \emph{methods}~\ref{sec:nos-design-guidance}.
\end{table*}

\begin{table*}[t]
\centering
\caption{Admissible parameter region used in the experiments. State $v$ is dimensionless; time is measured in bins with $\Delta t=5$\,ms. We use the recovery coupling $\xi$ in \eqref{eq:nos-v-eq} to map the (dimensionless) recovery state $u$ into a rate term, and fix $\xi=1$ in all experiments (absorbed into the definition of $u$). Per-bin rates convert to physical rates as $r_{\mathrm{phys}} = r / \Delta t$ (so $1$ per bin $=200$ s$^{-1}$). Values were sampled within these regions; defaults are given elsewhere for reproducibility.}
\label{tab:nos_ranges}
\fontsize{9}{11}\selectfont
\begin{tabular}{p{0.18\textwidth} p{0.33\textwidth} p{0.41\textwidth}}
\toprule
Parameter & Interpretation & Admissible range (units) \\
\midrule
$\alpha$ & excitability scale in $f_{\mathrm{sat}}$ & $[0.4,\,1.0]$ (per bin) \\
$\kappa$ & saturation knee of $f_{\mathrm{sat}}$ ($\kappa>0$) & tested $\{0.5,\,2.0\}$ (dimensionless) \\
$\beta$ & linear excitability gain & $[-0.10,\,0.8]$ (per bin) \\
$\gamma$ & constant drive (baseline load) & $[0.00,\,0.15]$ (per bin) \\
$\lambda$ & service/leak on $v$ & $[0.10,\,0.30]$ (per bin) \\
$\chi$ & subthreshold damping about $v_{\mathrm{rest}}$ & $[0.00,\,0.08]$ (per bin) \\
$\xi$ & recovery-to-$\dot v$ coupling (maps $u\mapsto$ rate) & fixed to $1$ (per bin) \\
$a$ & recovery rate & $[0.6,\,1.8]$ (per bin) \\
$b$ & recovery sensitivity to congestion & $[0.6,\,1.6]$ (dimensionless) \\
$\mu$ & passive recovery decay & $[0.00,\,0.35]$ (per bin) \\
$v_{\mathrm{th}}$ & firing threshold & $[0.50,\,0.68]$ (dimensionless $v$ units) \\
$k_{\mathrm{reset}}$ & sharpness of reset gate & $[10,\,20]$ (dimensionless) \\
$\rho$ & pullback/reset speed & $[3,\,8]$ (per bin) \\
$c$ & post-event baseline level & $[0.0,\,0.2]$ (dimensionless $v$ units) \\
$d$ & recovery jump on event (adds to $u$) & $[0.1,\,0.4]$ (dimensionless $u$ units) \\
$I_0$ & drive offset (arrivals $\to I$) & $[0.08,\,0.16]$ (per bin) \\
\textit{gain} & drive scale (arrivals $\to I$) & $[0.8,\,1.2]$ (dimensionless) \\
$\tau_s$ & optional drive smoothing & $[0,\,3]$ bins (equiv.\ $[0,\,15]$ ms) \\
$g$ & coupling on $W$ (network) & choose so $k_{\text{net}}=g\,\rho(W)\in[0,\,1.8]$ (dimensionless) \\
\midrule
Shot noise & arrival model for synthetic tests &
rate $\nu_i\in[10,\,50]$ Hz (mapped using $\Delta t=5$ ms), amplitude $A\in[0.3,\,0.9]$ (per bin) \\
Delays & link propagation & $\tau_{ij}\in[0,\,5]$ bins (equiv.\ $[0,\,25]$ ms) \\
$W$ normalisation & spectral scaling & $\rho(W)=1$ before applying $g$ \\
\bottomrule
\end{tabular}
\end{table*}

\footnotetext{Per-bin rates correspond to a $5$ ms bin width. Multiply by $200$ to convert to s$^{-1}$.}

\section{Design principles and engineering guidance}
\label{sec:nos-design-guidance}

The \textit{NOS} configuration is \emph{straightforward}: it follows a small set of choices that mirror device behaviour in operational networks. Figure~\ref{fig:nos-pipeline} summarises how a \emph{NOS} unit is used in practice: telemetry pre-processing aligns with the smoothing in \eqref{eq:nos-shot}–\eqref{eq:nos-kernel-norm}; the state $(v,u)$ evolves by \eqref{eq:nos-v-eq}–\eqref{eq:nos-u-eq} with resets \eqref{eq:nos-exp-reset}/\eqref{eq:nos-cont-reset}; events and residuals drive local detection and control; and spikes propagate over $W$ with delays \eqref{eq:nos-input-delayed} to form neighbours’ inputs. This sets the stage for the engineering choices that follow (reset time scale, EWMA leak $\chi$, gain scheduling $g$ vs.\ $\rho(W)$, and threshold calibration).

\begin{figure*}[h!]
\centering
\resizebox{\textwidth}{!}{%
\begin{tikzpicture}[
  >=LaTeX,
  node distance=13mm and 18mm,
  every node/.style={font=\small},
  box/.style={draw, rounded corners=2pt, align=center, minimum height=8.5mm, inner xsep=3mm, inner ysep=2mm},
  thinline/.style={line width=0.6pt},
  arrow/.style={-{Latex[length=2.4mm,width=1.6mm]}, line width=0.6pt},
  dashedarrow/.style={-{Latex[length=2.2mm,width=1.6mm]}, line width=0.6pt, dashed}
]

\node[box] (tel) {Telemetry\\\footnotesize packets, queues, rates};
\node[box, below=of tel] (nos) {\textit{NOS} unit\\\footnotesize $v$ (queue), $u$ (recovery)};
\node[box, below=of nos] (det) {Detection / forecast\\\footnotesize spikes, residuals};
\node[box, below=of det] (ctl) {Control\\\footnotesize marking, pacing};
\node[box, below=18mm of ctl] (cpl) {Coupling to neighbours\\\footnotesize $W$, gates $g(q_{ij})$, delays $\tau_{ij}$};

\node[box, left=28mm of nos] (eta) {Exogenous drive\\\footnotesize $\eta_i(t)$ (shot noise)};

\node[align=left, font=\scriptsize, text width=45mm, right=28mm of nos] (noteA)
{bounded excitability $f_{\mathrm{sat}}$\\
 service leak $\lambda$, damping $\chi$\\
 stoch.\ threshold $v_{\mathrm{th}}(t)$\\
 soft reset $r_{\text{reset}}$};

\node[left=22mm of cpl, align=center, font=\scriptsize, text width=50mm] (spec)
  {design: scale $W$ s.t.\ $\rho(W)$ meets a chosen stability margin};
\node[below=7mm of cpl, align=center, font=\scriptsize, text width=55mm] (noteB)
  {forms neighbour inputs\quad $I_j(t)=\sum_i w_{ji}\,g(q_{ji})\,S_i(t-\tau_{ji})$};

\begin{pgfonlayer}{background}
  \node[
    draw,
    rounded corners=3pt,
    dashed,
    fill=gray!8,        
    inner sep=5mm,
    fit=(tel) (nos) (det) (ctl)
  ] (grp) {};
\end{pgfonlayer}

\draw[thinline] (spec.east) -- (cpl.west);
\draw[thinline] (cpl.south) -- (noteB.north);

\draw[arrow] (tel) -- node[right, font=\scriptsize] {preproc./norm.} (nos);
\draw[arrow] (eta.east) -- node[above, font=\scriptsize, pos=0.55] {$\eta_i(t)$} (nos.west);
\draw[arrow] (nos) -- node[right, font=\scriptsize] {$\{v,u\}$, spikes to detector} (det);
\draw[arrow] (det) -- (ctl);

\draw[arrow]
  (nos.east) .. controls ($(nos.east)+(22mm,0)$) and ($(cpl.north)+(28mm,10mm)$)
  .. ($(cpl.north)+(0mm,0)$)
  node[pos=0.45, right, font=\scriptsize, xshift=1mm] {$S_i(t)$};

\draw[dashedarrow]
  (ctl.west) -- ++(-16mm,0) |- ($(tel.west)$)
  node[above, pos=0.65, font=\footnotesize, xshift=-1mm] {policy, thresholds};

\draw[dashedarrow] (cpl.east) -- ++(18mm,0)
  node[right, align=left, font=\scriptsize] {to neighbours\\(updates their $I_j(t)$)};

\draw[thinline]
  ($(nos.east)+(3mm,3mm)$) -- ++(6mm,0) |- ($(noteA.west)+(0,3mm)$);

\end{tikzpicture}
}
\caption{Distributed flow around a \textit{NOS} unit. Telemetry and exogenous shot noise drive the unit. Its state \((v,u)\) feeds detection/control; spikes \(S_i(t)\) also propagate to neighbours via weighted, gated, and delayed couplings to form their inputs \(I_j(t)\). Control policies feed back to telemetry normalisation (dashed).}
\label{fig:nos-pipeline}
\end{figure*}

For resets, one may use either an event–based exponential smoothing at threshold crossings or a continuous sigmoidal pullback that activates only above threshold (cf.\ \S\ref{subsec:nos-resets}); both are differentiable and avoid algebraic jumps, so gradient flow remains stable and attribution stays clear. In deployment terms, the return–to–baseline time is matched to the hardware drain or scheduler epoch, which means the model cools down on the same clock as the switch or NIC. Subthreshold prefiltering enters as a mild leak $\chi\,(v-v_{\mathrm{rest}})$ that behaves like an EWMA on residual queues: it reveals early warnings while damping tiny oscillations caused by counter jitter, yet preserves micro–bursts that matter for control.

Exogenous burstiness and measurement noise are included explicitly. We use shot–noise drive as in \S\ref{sec:nos-stochastic} and allow slow adaptation of tolerance and recovery so the unit becomes temporarily conservative during sustained stress. Concretely, the threshold may wander within a bounded band and the recovery rate may track recent activity
\begin{align}
v_{\mathrm{th}}(t) &= v_{\mathrm{th,base}} + \sigma\,\xi(t), 
\qquad
a(t) \;=\; a_0 + \kappa\,\overline{S}_i(t),
\label{eq:nos-adapt-params}
\end{align}
where $\xi(t)$ is bounded noise that sets the false–alarm \emph{budget} and $\overline{S}_i(t)$ is the short–window spike (arrival) rate. These adaptations match operator practice such as temporary pacing, token–bucket refill, or ECN–driven headroom.

Training follows standard practice while keeping the networking semantics intact. We employ surrogate gradients at the threshold with a fast–sigmoid derivative
\begin{align}
\widehat{\sigma}'(x) \;=\; \frac{1}{\bigl(1+\alpha_{\mathrm{sg}}|x|\bigr)^2},
\qquad \alpha_{\mathrm{sg}} \approx 5,
\label{eq:nos-surrogate}
\end{align}
which keeps gradients finite and centred. To approach crisp events without destabilising optimisation, we use a homotopy on the pullback sharpness $k$ that starts smooth and tightens once optimisation stabilises,

\begin{align}
\begin{split}
\kappa_\sigma(t)
\;=\;
\kappa_{\sigma,0}
+\bigl(\kappa_{\sigma,\mathrm{final}}-\kappa_{\sigma,0}\bigr)\,
\min\!\left\{1,\frac{t}{T_{\mathrm{h}}}\right\},
\\
\kappa_{\sigma,0}\approx 1,\;\;
\kappa_{\sigma,\mathrm{final}}\in[20,100].
\label{eq:nos-k-homotopy}
\end{split}
\end{align}

which preserves gradient quality early and matches event timing late. We also clip gradients by global norm in the range $[0.5,2.0]$ while $k$ grows. An adaptive optimiser such as Adam is suitable; the learning rate is reduced as $k$ increases to maintain stable steps near the threshold. Truncated BPTT should cover the dominant recovery timescale so that $u$’s memory is learned rather than aliased,
\begin{align}
\begin{split}
T_{\mathrm{BPTT}} \;\gtrsim\; \frac{c_{\!*}}{a+\mu},
\\ c_{\!*}\in[3,5]\ \text{(capturing about 95\%–99\%)},
\label{eq:nos-bptt-window}
\end{split}
\end{align}
which aligns with the cool–down used by paced drain or token–bucket refill in devices. After supervised pre–training, slow local updates can be enabled to track diurnal load or policy shifts.

The network view stays explicit. Delays $\tau_{ij}$ and gates $g\!\bigl(q_{ij}(t)\bigr)$ ensure that path timing and residual capacity modulate influence at the right hop and time. The small–signal proxy in \eqref{eq:nos-reading-threshold} makes the stability levers visible to operations: one can lower $\rho(W)$ by reweighting or sparsifying couplings, reduce the global gain $g$ by scheduling, or increase the net drain $\Lambda$ by raising service $\lambda$, damping $\chi$, or recovery $a$ (with $b$ and $\mu$ setting trade–offs). These are the same knobs used in practice to keep queues bounded and alerts meaningful, and they connect directly to the analysis that follows.

\subsection{Neuromorphic implementation details}
\label{sec:neuromorphic}

\paragraph{Fixed-point state and quantisation.}
Let $(v,u)$ be stored in $Q_{m.n}$ fixed point with full-buffer scaling $V$ and time base $T$ from \eqref{app:nos-scaling}. The forward map and clipping are
\begin{align}
\begin{split}
v_q = \frac{1}{2^n}\,\mathrm{clip}\!\left(\mathrm{round}\!\left(\frac{v}{V}\,2^n\right),\,0,\,2^m-1\right), 
\\
u_q \;=\; \frac{1}{2^n}\,\mathrm{clip}\!\left(\mathrm{round}\!\left(\frac{u}{V}\,2^n\right),\,0,\,2^m-1\right).
\label{eq:nm-quant-v}
\end{split}
\end{align}
With this scaling, $\tilde v\in[0,1]$ remains interpretable as normalised queue level; overflow is prevented by the saturation in $f_{\mathrm{sat}}$.

\paragraph{Discrete-time soft reset.}
On digital substrates the exponential reset \eqref{eq:nos-exp-reset} is applied per tick $\Delta t$ only when a spike occurs. Let $\tilde v_{t+1}$ denote the state after the continuous update but before any reset. Then
\begin{align}
\begin{split}
v_{t+1} = (1-S_i(t))\,\tilde v_{t+1}
          + S_i(t)\Big[c +  \big(\tilde v_{t+1}-c\big)\,e^{-r_{\text{reset}}\Delta t}\Big],\\
u_{t+1} = u_t + d\,S_i(t).
\label{eq:nm-exp-disc}
\end{split}
\end{align}
where $S_i(t)\in\{0,1\}$ indicates a threshold crossing (a spike) at time $t$ for neuron $i$.
The factor $e^{-r_{\text{reset}}\Delta t}$ is realised by a multiply–accumulate or by a bit-shift if $r_{\text{reset}}\Delta t$ is drawn from a small table.

\paragraph{Logistic pullback via LUT or PWL.}
The continuous pullback \eqref{eq:nos-cont-reset} uses $\sigma_{\kappa_\sigma}(x)=1/(1+e^{-\kappa_\sigma x})$. On hardware we use a lookup table or piecewise-linear approximation $\widehat{\sigma}_{\kappa_\sigma}$ on a bounded interval $x\in[-x_{\max},x_{\max}]$:
\begin{align}
\bigl|\sigma_{\kappa_\sigma}(x)-\widehat{\sigma}_{\kappa_\sigma}(x)\bigr| &\le \varepsilon_{\mathrm{LUT}} \quad \text{for } |x|\le x_{\max}.
\label{eq:nm-sig-bound}
\end{align}
The induced drift error on the pullback term is then bounded by $r_{\text{reset}}\,|v-c|\,\varepsilon_{\mathrm{LUT}}$ and can be kept below a configured fraction of the net drain in \eqref{eq:nos-reading-lambda}.

\paragraph{Graph coupling, delays, and AER routing.}
Sparse coupling $W=[w_{ij}]$ is realised by address–event routing tables; per-link delays are integer tick buffers
\begin{align}
\tilde \tau_{ij} &= \mathrm{round}\!\left(\frac{\tau_{ij}}{T}\right) \in \mathbb{N},
\label{eq:nm-delay-quant}
\end{align}
which preserves the causality of \eqref{eq:nos-input-delayed}. Gates $g\!\bigl(q_{ij}\bigr)$ are evaluated where the link queue state is maintained; their outputs modulate the synaptic multiplier before accumulation.

\paragraph{Fabric-rate and budget constraints.}
Let $R_{\max}$ be the sustainable spike throughput per core and let $\bar r_j$ be the observed rate of $S_j$. A conservative admission constraint that avoids router overload is
\begin{align}
\sum_{j} \mathbf{1}\{w_{ij}\neq 0\}\,\bar r_j &\le R_{\max} \quad \text{for each core hosting neuron } i,
\label{eq:nm-rate-budget}
\end{align}
or, in matrix form for a deployment-wide check, $\|W\|_{0,1}\, \bar r_{\max} \le R_{\max}$, where $\|W\|_{0,1}$ counts nonzeros per row and $\bar r_{\max}=\max_j \bar r_j$.

\paragraph{Operational margin under quantisation.}
Let $\Delta_{\mathrm{net}}$ be the margin in \eqref{eq:nos-reading-margin}. Finite precision induces errors in $f'_{\mathrm{sat}}(v^*)$, in $W$, and in gains. If $\delta_{f'}$, $\delta_W$, and $\delta_g$ bound these relative errors, then a sufficient robustness condition is

\begin{align}
\begin{split}
\Delta_{\mathrm{net}}
&>\; f'_{\mathrm{sat}}(v^*)\,\delta_{f'}
\;+\; g\,\rho(W)\,\bigl(\delta_W+\delta_g\bigr)
\\
&\quad+\; \varepsilon_{\mathrm{LUT}}\,\rho(W)\,\mathbb{E}\{|v-c|\},
\label{eq:nm-margin}
\end{split}
\end{align}

which ensures that the stability proxy \eqref{eq:nos-reading-threshold} still holds after rounding and table approximations.

\paragraph{Networking alignment and reporting.}
Choose $(V,T)$ so that $V$ matches the full-buffer level used by the QoS policy and $T$ matches the scheduler epoch or drain time. Report energy per spike and energy per detection alongside CPU baselines, and log $\tilde\tau_{ij}$ histograms to confirm that delay quantisation preserves path ordering. These checks keep the deployed \emph{NOS} consistent with the queueing semantics and timing used in the main text.

\section{Spike generation and resets}
\label{subsec:nos-resets}

Two differentiable reset strategies are introduced, an event-triggered
exponential soft reset and a continuous pullback shaped by a sigmoid, both enabling gradient-based training while preserving spiking dynamics. A spike indicates that the local congestion proxy exceeded tolerance and triggers a control action such as ECN marking, rate reduction, or an alert to the controller. To reflect tolerance bands and measurement noise we allow a stochastic threshold with bounded dispersion:
\begin{align}
v_i(t) &\ge v_{\mathrm{th}}(t) \;=\; v_{\mathrm{th,base}} + \sigma\,\xi(t),
\label{eq:nos-threshold}
\end{align}
where $\xi(t)$ is zero-mean, bounded (e.g., clipped Gaussian or uniform), and $\sigma$ is tuned to the false-alarm budget on residuals. This construction separates policy (the base threshold) from instrumentation noise (the dispersion).

\paragraph{Event-based exponential soft reset.}
Right after a threshold crossing, the unit should return toward a baseline without a hard discontinuity. For a forward-Euler step of size $\Delta t$ we apply
\begin{align}
v_i &\leftarrow c + \bigl(v_i - c\bigr) e^{-r_{\text{reset}} \Delta t}, 
\qquad 
u_i \leftarrow u_i + d,
\label{eq:nos-exp-reset}
\end{align}
which makes $v_i$ relax exponentially toward $c$ with rate $r_{\text{reset}}$ and increments $u_i$ by $d$ to encode refractory depth. In networking terms, \eqref{eq:nos-exp-reset} models a paced drain after a trigger together with a short-lived slow-down in effective send or admit rate. The pair $(r_{\text{reset}},d)$ is set from post-burst relaxation fits on $v$ and the desired “cool-down” depth in $u$.

\begin{figure*}[t]
\centering
\begin{subfigure}[b]{0.32\textwidth}
  \centering
  \includegraphics[width=\linewidth]{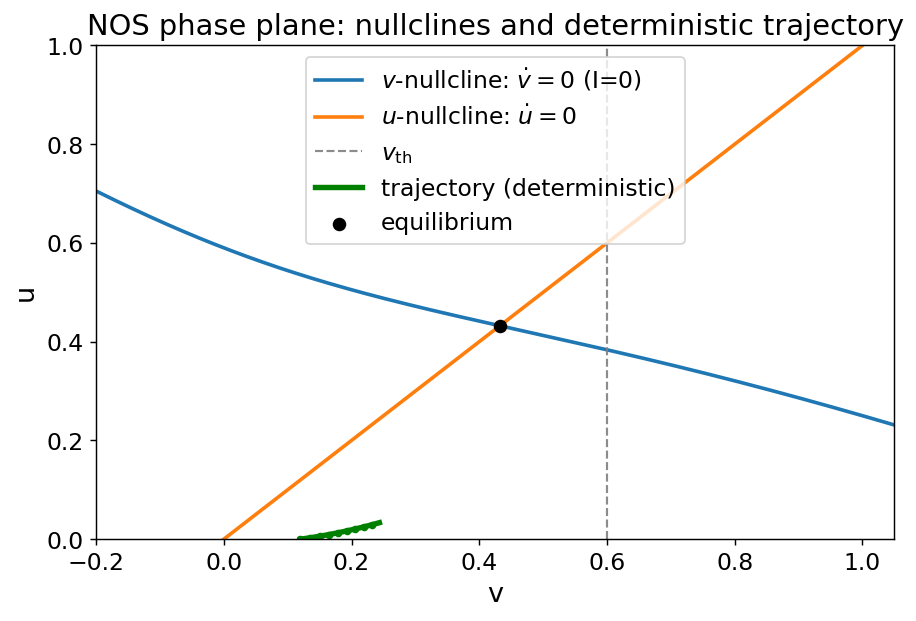}
  \caption{Deterministic  (\(I\!=\!0\)).}
  \label{fig:pp-clean}
\end{subfigure}\hfill
\begin{subfigure}[b]{0.32\textwidth}
  \centering
  \includegraphics[width=\linewidth]{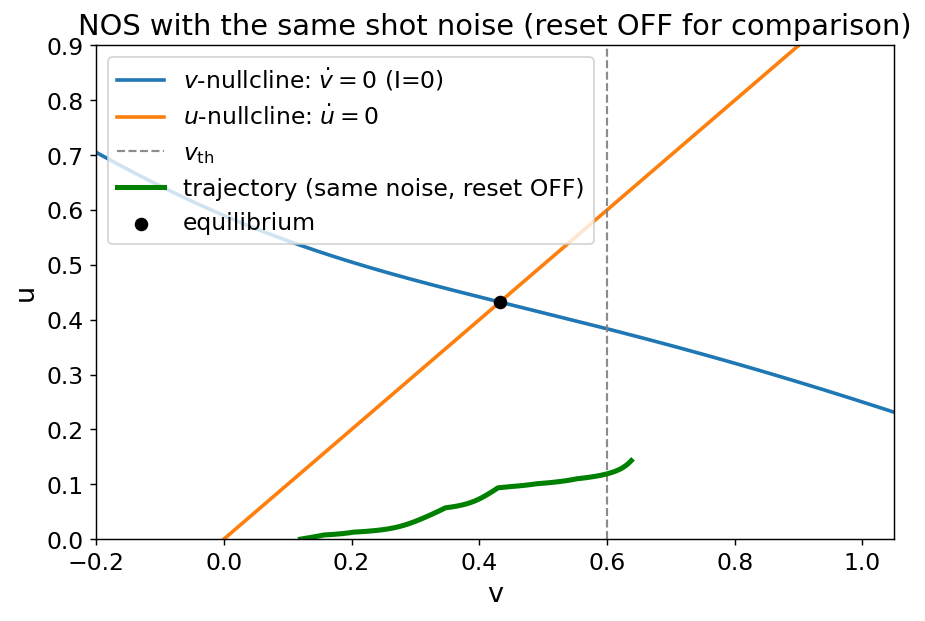}
  \caption{Shot noise, reset \emph{off}.}
  \label{fig:pp-noisy-off}
\end{subfigure}\hfill
\begin{subfigure}[b]{0.32\textwidth}
  \centering
  \includegraphics[width=\linewidth]{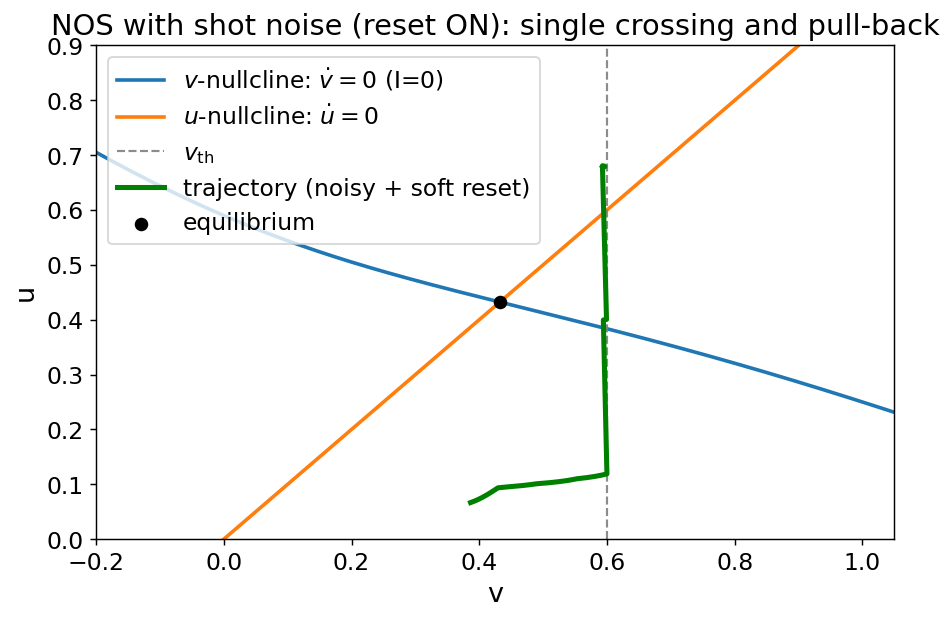}
  \caption{Shot noise, soft reset \emph{on}.}
  \label{fig:pp-noisy-on}
\end{subfigure}

\caption{%
NOS phase–plane diagnostics with nullclines \( \dot v=0 \) and \( \dot u=0 \), threshold \(v_{\mathrm{th}}\), equilibrium, and short trajectories. 
\textbf{(a)} Baseline geometry: the parabolic \(v\)-nullcline intersects the linear \(u\)-nullcline at a stable equilibrium; the threshold sits to the right. 
\textbf{(b)} With identical exogenous shot noise but reset disabled, the trajectory creeps toward the threshold and remains above the \(u\)-nullcline, indicating slow accumulation of \emph{effective} load that can lead to prolonged excursions. 
\textbf{(c)} With the same noise and the exponential soft reset active, the trajectory crosses the threshold once, is immediately pulled back toward \(c\), and rises in \(u\), which encodes a temporary conservative phase. 
Operationally, panel (c) mirrors paced drain after an alarm: timing is preserved by the threshold crossing, the return to baseline follows the configured time scale \(r_{\text{reset}}^{-1}\), and the recovery increment \(d\) captures the short-lived send-rate reduction.
}
\label{fig:nos-phaseplane-three}
\end{figure*}

\paragraph{Continuous differentiable pullback.}
Alternatively, we add to the $v$-dynamics in \eqref{eq:nos-v-eq} a smooth term that engages only above threshold:
\begin{align}
\begin{split}
   -r_{\text{reset}}\,\sigma_{\kappa_\sigma}\!\bigl(v_i - v_{\mathrm{th}}(t)\bigr)\,\bigl(v_i - c\bigr),\\
\sigma_{\kappa_\sigma}(x)=\frac{1}{1+e^{-\kappa_\sigma x}},\;\; r_{\text{reset}}>0.
\label{eq:nos-cont-reset} 
\end{split}
\end{align}
For $v_i<v_{\mathrm{th}}(t)$ the multiplier $\sigma_{\kappa_\sigma}$ is near $0$ and the pullback is negligible. Once $v_i$ exceeds the threshold, $\sigma_{\kappa_\sigma}$ rises toward $1$ and an exponential attraction to $c$ turns on as a continuous part of $\dot v$. Training proceeds with a homotopy in $\kappa_\sigma$, starting smooth and sharpening as optimisation stabilises, which preserves gradient quality while converging to crisp event timing.
For reference, the nullclines of the no–input skeleton are
\begin{align}
\dot v=0 &: \;
u \;=\; f_{\mathrm{sat}}(v) + (\beta-\lambda-\chi)\,v + \gamma + \chi v_{\mathrm{rest}},
\label{eq:nos-v-null}\\
\dot u=0 &: \;
u \;=\; \frac{ab}{a+\mu}\,v 
\label{eq:nos-u-null}
\end{align}

Figure~\ref{fig:nos-phaseplane-three} clarifies the role of the smooth pullback in a networking setting. In panel~\ref{fig:pp-clean} the skeleton geometry is benign: the parabolic \(v\)-nullcline \eqref{eq:nos-v-null} meets the linear \(u\)-nullcline \eqref{eq:nos-u-null} at a stable point well to the left of the threshold. In panel~\ref{fig:pp-noisy-off}, identical shot–noise drive nudges \(v\) upward but, with the reset disabled, there is no restoring action triggered at the moment of crossing; the trajectory can linger near \(v_{\mathrm{th}}\), which in practice sustains high send pressure on downstream links. In panel~\ref{fig:pp-noisy-on}, the same bursts cause a single threshold crossing followed by an immediate, continuous return toward \(c\). The concurrent rise in \(u\) encodes a short conservative phase, matching paced drain or token–bucket refill after an alarm. Because \eqref{eq:nos-cont-reset} is differentiable, these operator–visible effects are trainable: \(r_{\text{reset}}^{-1}\) is tuned to device drain time or scheduler epoch, \(c\) to the desired post–event baseline, and \(d\) to the depth of temporary slow–down. The model therefore reproduces both the timing of decisions and the engineered cool–down that follows, without introducing discontinuities that harm gradient-based learning or blur attribution.

\section{Local slope bounds and saturation properties}
\label{app:fsat-bounds}

For convenience we collect here the elementary properties of the bounded excitability function
\begin{align}
f_{\mathrm{sat}}(v) = \frac{\alpha v^2}{1+\kappa v^2}, \qquad \alpha>0,\ \kappa>0,
\tag{\ref{eq:nos-fsat} revisited}
\end{align}
used in Section~\ref{subsec:nos-bounded-excitability}.
These properties underpin the existence and uniqueness conditions for the subthreshold equilibrium and the small–signal stability margins.

We first record its small–signal expansion and limiting value:
\begin{align}
f_{\mathrm{sat}}(v) &\sim \alpha v^2 \quad \text{as } v\to 0,
\label{eq:nos-fsat-small}
\end{align}
and enforces a finite ceiling for heavy load, ensuring bounded behaviour
\begin{align}
\lim_{v\to\infty} f_{\mathrm{sat}}(v) &= \frac{\alpha}{\kappa}.
\label{eq:nos-fsat-limit}
\end{align}
The derivative is globally bounded, which improves numerical stability and yields clean gain conditions, which means, it recovers a quadratic slope for small $v$ while enforcing saturation for large $v$ :
\begin{align}
f'_{\mathrm{sat}}(v) &= \frac{2\alpha v}{\bigl(1+\kappa v^2\bigr)^2},
\label{eq:nos-fsat-deriv}
\end{align}
is globally bounded, with maximum
\begin{align}
\max_{v} f'_{\mathrm{sat}}(v) &= \frac{3\sqrt{3}}{8}\,\frac{\alpha}{\sqrt{\kappa}}
\quad \text{at } v=\frac{1}{\sqrt{3\kappa}}.
\label{eq:nos-fsat-deriv-max}
\end{align}
The quantity in \eqref{eq:nos-fsat-deriv-max} is the steepest admissible local growth of the excitability term and appears in the service–dominance inequalities used for equilibrium uniqueness and Jacobian stability tests.

The consequences of this bounded excitability under stochastic drive
are examined in \emph{methods}~\ref{subsec:nos-noise} and ~\ref{noise_avalanche}.


\section{Noise sensitivity under stochastic drive}
\label{subsec:nos-noise}

We model arrivals as shot noise with exponential smoothing. For node $i$,
\begin{align}
\begin{split}
\eta_i(t) = \sum_{n=1}^{N_i(t)} A_{i,n}\,e^{-(t-t_{i,n})/\tau_s}\,H(t-t_{i,n}),
\\ N_i(t)\sim \mathrm{Poisson}(\nu_i t),
\label{eq:shotnoise}
\end{split}
\end{align}
where $\nu_i$ is the event rate, $A_{i,n}$ are i.i.d.\ amplitudes, $\tau_s$ is a smoothing time, and $H$ is the Heaviside step. The process is stationary with mean, variance, and autocovariance
\begin{align}
\begin{split}
\mathbb{E}[\eta_i] = \nu_i\,\mathbb{E}[A_i]\,\tau_s,
\qquad
\mathrm{Var}[\eta_i] = \nu_i\,\mathbb{E}[A_i^2]\,\frac{\tau_s}{2},
\\
\mathrm{Cov}_{\eta_i}(\tau) = \nu_i\,\mathbb{E}[A_i^2]\,\frac{\tau_s}{2}\,e^{-|\tau|/\tau_s}.
\label{eq:shot_moments}
\end{split}
\end{align}
Its power spectrum is Lorentzian,
\begin{align}
S_{\eta_i}(\omega) &= \frac{2\,\nu_i\,\mathbb{E}[A_i^2]\,\tau_s}{1+\omega^2 \tau_s^2}.
\label{eq:shot_psd}
\end{align}
In networking terms, $(\nu_i,\mathbb{E}[A_i],\tau_s)$ are read from the same counters and windows used in telemetry: $\nu_i$ is the burst-start rate, $\mathbb{E}[A_i]$ is the mean per-burst mass after smoothing, and $\tau_s$ matches the prefilter that removes counter jitter while preserving micro-bursts.

\paragraph{Small-signal sensitivity.}
Linearising \textit{NOS} at a subthreshold equilibrium $(v^*,u^*)$ with $\bar d=f'_{\mathrm{sat}}(v^*)+\beta-\lambda-\chi$ gives
\begin{align}
\begin{split}
\frac{d}{dt}
\begin{bmatrix}\delta v\\ \delta u\end{bmatrix}
=
\begin{bmatrix}\bar d & -1\\ ab & -(a+\mu)\end{bmatrix}
\begin{bmatrix}\delta v\\ \delta u\end{bmatrix}
+
\begin{bmatrix}1\\ 0\end{bmatrix}\eta_i(t),
\\
H_v(s) \;=\; \frac{\delta V(s)}{\Eta(s)} \;=\; \frac{s+(a+\mu)}{(s-\bar d)\,(s+(a+\mu))+ab}.
\label{eq:lin_transfer} 
\end{split}
\end{align}
The DC gain and the variance of $\delta v$ under \eqref{eq:shot_psd} are
\begin{align}
\begin{split}
H_v(0) = \frac{a+\mu}{ab-(a+\mu)\bar d},
\\
\sigma_v^2 \;=\; \frac{1}{2\pi}\int_{-\infty}^{\infty} \bigl|H_v(i\omega)\bigr|^2\,S_{\eta_i}(\omega)\,d\omega.
\label{eq:lin_dc_var}
\end{split}
\end{align}
As the node approaches its local margin ($ab\downarrow (a+\mu)\bar d$) or the network approaches the Perron-mode threshold ($k\to k^\star$), $|H_v(0)|$ increases and the integral in \eqref{eq:lin_dc_var} grows, so the same shot-noise trace produces larger queue excursions and more threshold crossings.

\paragraph{Firing statistics and cascades.}
We summarise sensitivity by the mean firing rate $\bar f$, the inter-spike-interval coefficient of variation (CV), and avalanche sizes $S$ (contiguous above-threshold activity aggregated over neighbours). With parameters fixed and only the noise amplitude varied, three robust effects appear, consistent with \eqref{eq:lin_transfer}–\eqref{eq:lin_dc_var} and bounded excitability:
(i) $\bar f$ rises with amplitude in all regimes, with a steeper slope near the coupling threshold because small increments in $\eta_i$ map to larger $\delta v$;
(ii) CV increases near threshold, reflecting irregular, burst-dominated spike trains as variance grows;
(iii) the tail of $P(S)$ becomes heavier with amplitude, indicating extended congestion cascades when fluctuations push multiple nodes above threshold before recovery completes.

\paragraph{Networking interpretation.}
Higher $\nu_i$ or larger $\mathbb{E}[A_i^2]$ means burstier ingress; increasing $\tau_s$ models longer burst coherence. The levers that reduce noise sensitivity are the same that enlarge deterministic headroom: higher service and subthreshold damping ($\lambda$, $\chi$), faster recovery ($a+\mu$), and earlier saturation (larger $\kappa$, which lowers $f'_{\mathrm{sat}}(v^*)$ and hence $\bar d$). Each reduces $H_v(0)$ and the variance integral in \eqref{eq:lin_dc_var}. With per-link gates, attenuating $w_{ij}$ on noisy edges reduces the effective drive entering $I_i$ and moves the network away from the Perron threshold.

\paragraph{Empirical summary (Fig.~\ref{fig:noise_stats}).}
The rate curves show $\bar f$ versus amplitude $A$ for subcritical ($k=0.9$) and near-threshold ($k=1.36$) regimes; the steeper slope near $k=1.4$ matches the growth of $|H_v(0)|$. The CV curves increase with $A$ and peak near threshold, consistent with \eqref{eq:lin_dc_var}. The avalanche distributions broaden as $A$ increases; heavier tails near $k\approx k^\star$ reflect larger correlated excursions. These trends are stable across $\rho_s\in\{10,20,50\}$\,Hz and the reported $\tau_s$ range, and they align with the linear response in \eqref{eq:lin_transfer}. Further quantitative tail fits are provided below in~\ref{noise_avalanche}.

\begin{figure*}[h!]
\centering
\includegraphics[width=0.5\textwidth]{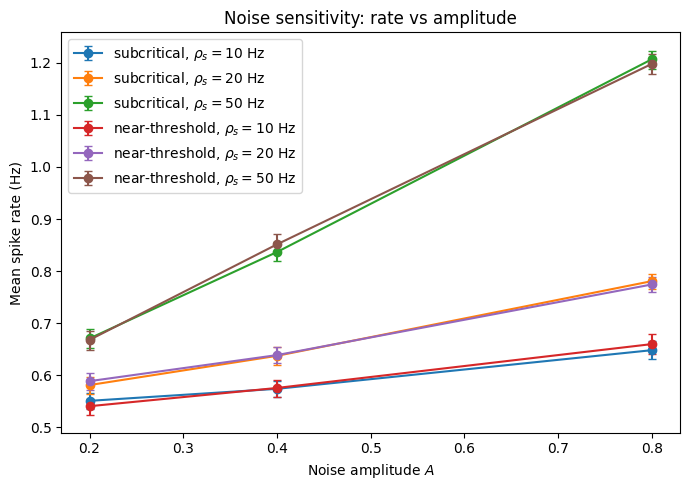}\hfill
\includegraphics[width=0.5\textwidth]{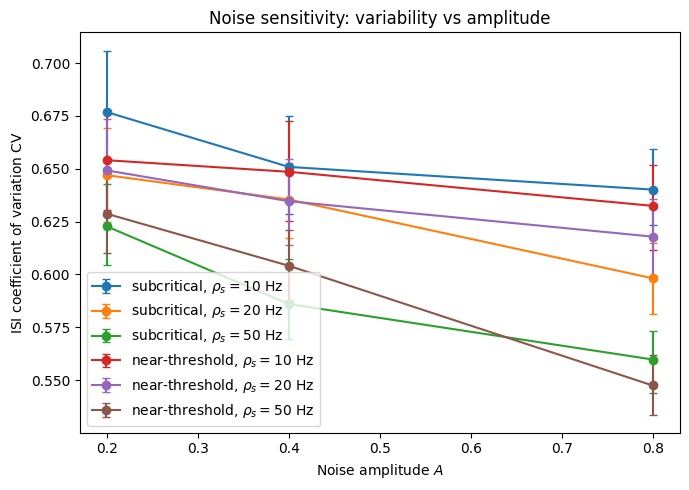}
\includegraphics[width=0.5\textwidth]{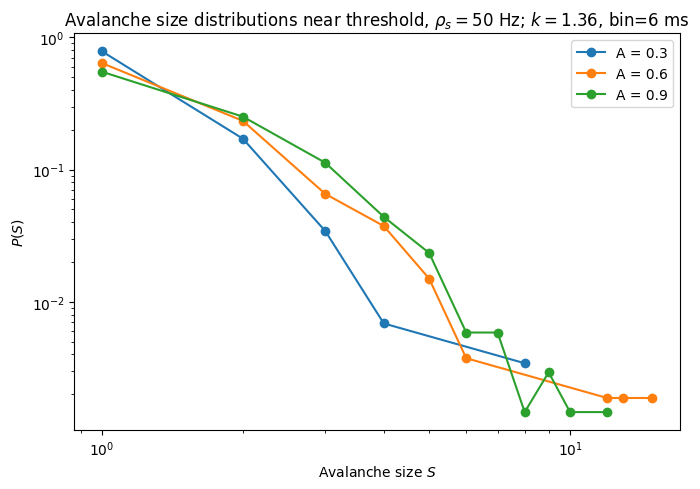}
\caption{Noise sensitivity in \textit{NOS}. 
(\textbf{Top left}) Mean firing rate $\bar f$ versus amplitude $A$ for subcritical ($k=0.9$) and near–threshold ($k=1.36$) regimes at $\rho_s\in\{10,20,50\}$\,Hz.  
(\textbf{Top right}) Inter–spike interval CV versus $A$ under the same conditions. Error bars show bootstrap 95\% confidence intervals.  
(\textbf{Bottom}) Avalanche size distributions $P(S)$ near threshold ($k=1.36$, $\rho_s=50$\,Hz). Heavier tails appear as $A$ increases, corresponding to extended cascades.  
}
\label{fig:noise_stats}
\end{figure*}

\subsection{Statistical robustness under noise}
\label{noise_avalanche}

For each condition we estimated the mean firing rate $\bar f$ and the inter-spike-interval coefficient of variation (CV) using non-parametric bootstrap resampling (hundreds of replicates). Bootstrap 95\% intervals show that both $\bar f$ and CV increase with shot-noise amplitude $A$, particularly near threshold. In \textit{NOS}, stronger stochastic drive raises overall excitability and increases irregularity, mirroring more variable queue dynamics.

Avalanche size distributions $P(S)$ were then analysed. Figure~\ref{fig:avalanche_fits} shows empirical complementary cumulative distributions with fitted power-law and log-normal overlays above a data-driven tail threshold $x_{\min}$. Tail parameters were estimated by maximum likelihood with truncation at $x_{\min}$; goodness was assessed by the Kolmogorov–Smirnov distance and model choice by AIC. Across conditions, AIC favours the power-law, indicating scale-free-like behaviour near threshold. Fitted values are reported in Table~S\ref{tab:avalanche_fits}.

The exponents display a clear trend: at low amplitude ($A=0.3$) the power-law exponent is $\hat\alpha\approx 8.7$ (steep decay, small avalanches). At $A=0.6$ the exponent decreases to $\hat\alpha\approx 6.4$ (broader tail). At $A=0.9$ the exponent drops to $\hat\alpha\approx 3.3$ (heavy tail and a higher chance of large cascades). Thus, near the bifurcation, stronger stochastic drive allows small fluctuations to trigger system-wide congestion events.

\begin{figure}[h]
\centering
\includegraphics[width=.8\columnwidth]{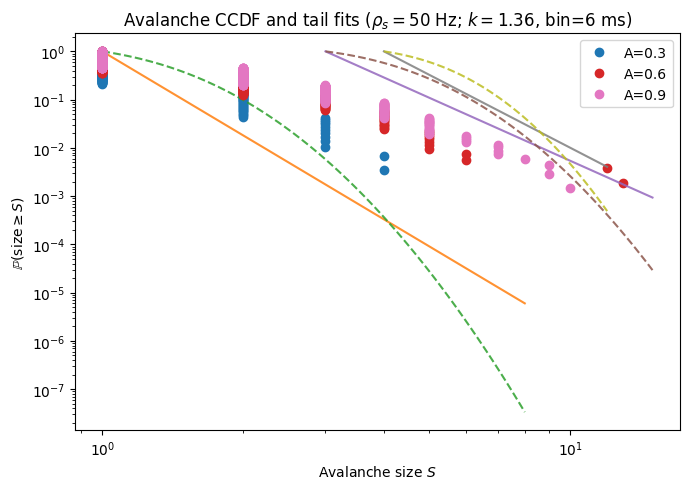}
\caption{Avalanche complementary CDFs near threshold ($k=1.36$, $\rho_s=50$\,Hz). Empirical distributions (points) with fitted power-law (solid) and log-normal (dashed) overlays above the estimated $x_{\min}$ for each amplitude $A$. Model selection used AIC; numerical results appear in Table~S\ref{tab:avalanche_fits}.}
\label{fig:avalanche_fits}
\end{figure}

\begin{table*}[h]
\centering
\caption{Avalanche tail fits near threshold ($k=1.36$, $\rho_s=50$\,Hz). For each amplitude $A$, we report the tail threshold $x_{\min}$, tail size $n$, maximum-likelihood parameters, KS distance (for the truncated tail), log-likelihood (LL), AIC, and the preferred model.}
\label{tab:avalanche_fits}
\fontsize{9}{11}\selectfont
\setlength{\tabcolsep}{4pt}
\renewcommand{\arraystretch}{1.1}
\begin{tabular}{ccccccccccccccc}
\toprule
$A$ & $x_{\min}^{\text{PL}}$ & $n_{\text{PL}}$ & $\hat\alpha$ & KS$_{\text{PL}}$ & LL$_{\text{PL}}$ & AIC$_{\text{PL}}$ &
$x_{\min}^{\text{LN}}$ & $n_{\text{LN}}$ & $\hat\mu$ & $\hat\sigma$ & KS$_{\text{LN}}$ & LL$_{\text{LN}}$ & AIC$_{\text{LN}}$ & Pref. \\
\midrule
0.3 & 2.0 & 63  & 8.73 & 0.76 & 14.03   & -26.07  & 2.0 & 63  & 0.82 & 0.27 & 0.76 & -34.20 & 72.39   & PL \\
0.6 & 4.0 & 31  & 6.43 & 0.58 & -27.24  & 56.48   & 4.0 & 31  & 1.57 & 0.31 & 0.58 & -46.15 & 96.31   & PL \\
0.9 & 1.0 & 681 & 3.25 & 0.53 & -431.13 & 864.26  & 1.0 & 681 & 0.44 & 0.53 & 0.53 & -684.68 & 1373.35 & PL \\
\bottomrule
\end{tabular}
\end{table*}

\paragraph{Avalanche definition and fitting.}
Avalanche sizes $S$ were computed from population spike counts in 5\,ms bins (the experimental $\Delta t$): an avalanche is a contiguous run of nonzero bins, with size equal to the total spikes in the run. For each condition, $x_{\min}$ minimised the KS distance between empirical and model CDFs (tail only). Power-law tails used the continuous-tail MLE
$\hat\alpha = 1 + n \big/ \sum_{i=1}^n \log(x_i/x_{\min})$
with model $F(x)=1-(x/x_{\min})^{1-\hat\alpha}$ for $x\ge x_{\min}$. Log-normal tails were fit by MLE on $\log x$ with truncation at $x_{\min}$. Model selection used AIC with parameter counts 1 (power-law) and 2 (log-normal).

\section{Queueing baseline comparison}
\label{subsec:mm1}

\paragraph{Experimental setup.}
We compare \textit{NOS} against canonical queueing baselines in two modes. 
(\emph{i}) \textbf{Open--loop}: we drive M/M/1 (analytic mean, $\rho<1$), simulated M/M/1/$K$, and a single \emph{NOS} unit with \emph{the same} exogenous arrivals and report observables in the \emph{same unit} (packets). 
Light–load calibration aligns means: we choose an input gain and output scale so that the \emph{NOS} mean at small $\rho$ matches the M/M/1 mean $\mathbb{E}[L]=\rho/(1-\rho)$. 
Tail behaviour is then tested under a common MMPP burst sequence (same ON/OFF epochs), which is the operational regime of interest.
(\emph{ii}) \textbf{Closed--loop}: we attach three controllers to the same offered–load trace and compare the resulting queue trajectories and marking signals $p(t)$: 
A) NOS–based (marking derived from \emph{NOS} subthreshold state with soft reset), 
B) \emph{queue} controller (marking from the raw queue), and 
C) \emph{LP–queue} controller (marking from a lightly low–pass filtered queue). 
All three use the same logistic nonlinearity for $p(t)$ so differences are due to the state fed to the nonlinearity.

\begin{figure}[h!]
  \centering
  \includegraphics[width=.6\textwidth]{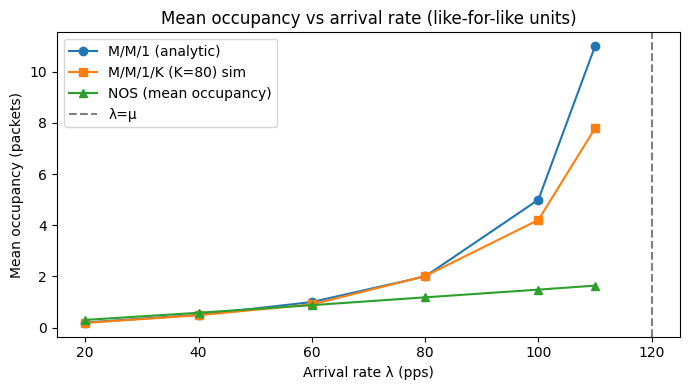}\hfill
  \includegraphics[width=.6\textwidth]{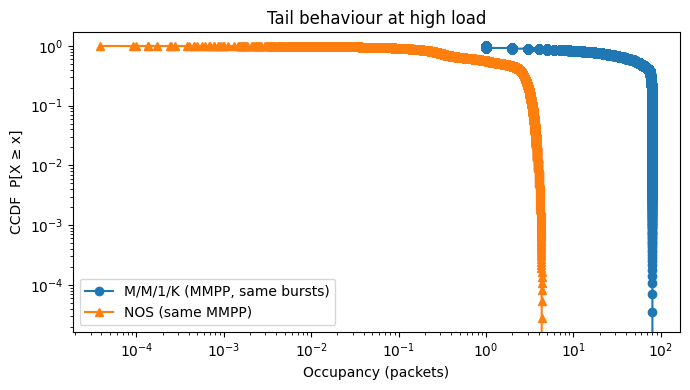}
  \caption{Open--loop comparisons in packet units. 
  (\textbf{Left}) Mean occupancy versus arrival rate $\lambda_{\mathrm{arr}}$: M/M/1 (analytic, $\rho<1$), simulated M/M/1/$K$, and \emph{NOS} (calibrated at light load). \emph{NOS} bends earlier near saturation due to bounded excitability and soft reset. 
  (\textbf{Right}) Tail CCDF under a common MMPP burst sequence: M/M/1/$K$ displays a slow tail, whereas \emph{NOS} truncates cascades via continuous pull–back, yielding a much lighter tail.}
  \label{fig:openloop}
\end{figure}
\paragraph{How to read the open–loop panels (networking view).}
Figure~\ref{fig:openloop} (left) shows mean occupancy versus arrival rate $\lambda_{\mathrm{arr}}$ with a fixed service $\mu_{\mathrm{srv}}$. 
The blue curve (M/M/1) is the textbook reference that diverges as $\rho=\lambda_{\mathrm{arr}}/\mu_{\mathrm{srv}}\uparrow1$. 
The orange curve (M/M/1/$K$) follows M/M/1 until blocking becomes material. 
The green curve (NOS) rises with load at small $\rho$ (by calibration) but \emph{bends earlier} as $\rho$ grows; bounded excitability and continuous pull–back reduce the subthreshold slope and drain excursions. 
Operationally, this mirrors how modern AQMs aim to keep average queues low near saturation rather than letting them diverge smoothly.

Figure~\ref{fig:openloop} (right) plots the complementary CDFs of occupancy under the \emph{same} MMPP burst sequence. 
M/M/1/$K$ develops a long tail during ON periods (large probability of deep queues), whereas \emph{NOS} exhibits a much sharper tail because soft reset and leak truncate cascades. 
In networking terms, \emph{NOS} converts the same burst train into fewer deep–queue events---directly improving tail latency and reducing drop/mark storms. 
The aggressiveness of this truncation is tunable through the subthreshold leak $\chi$, the reset rate $r_{\mathrm{reset}}$, and the saturation knee $\kappa$; operators can place the tail against SLO targets (e.g., $\mathbb{P}\{Q\!\ge\! q_0\}\le\varepsilon$).

\paragraph{How to read the closed–loop panels (networking view).}
Figure~\ref{fig:closedloop} contrasts three marking strategies under the same offered–load traces. 
The top panel shows controller outputs $p(t)$. 
NOS (blue) is decisive and low–jitter; the queue controller (orange) is noisy (mark jitter follows queue noise); the LP–queue controller (green) is smoother but lags, so it reacts late to bursts. 
The middle panel shows a step in offered load: \emph{NOS} snaps to target with minimal ringing; the queue controller oscillates; the LP–queue controller overshoots and then settles slowly. 
The bottom panel shows a burst train: \emph{NOS} produces short, clipped spikes; the queue controller shows noisy peaks; the LP–queue controller permits taller spikes because filtering delays the reaction. 
Practically, \emph{NOS} shrinks the time spent at high queue levels and reduces burst amplification without a long averaging window, which improves tail latency and fairness while avoiding prolonged ECN saturation.

\paragraph{Summary for operators.}
The baselines anchor \emph{NOS} against established theory. 
When the objective is to \emph{match the classical infinite–buffer mean} at light load, M/M/1 gives the right reference and \emph{NOS} can be calibrated to agree. 
When the objective is to \emph{manage burst risk and stability} under finite resources and coupling, NOS’ bounded excitability and differentiable pull–back are strictly more faithful: they reduce deep–queue probability, yield crisper ECN/marking, and shorten congestion episodes without heavy filtering. 
The same knobs that appear in our stability proxies---$\chi$, $\lambda_{\mathrm{arr}}$–like drain, $\kappa$, and reset parameters---map directly to deployed policy controls (AQM leak, scheduler epoch, ramp shaping, and post–alarm conservatism), making the model both predictive and actionable.
Having established mechanistic fidelity and control behaviour, we next ask how \emph{NOS} fares as a label-free forecaster against compact ML/SNN baselines.

\begin{figure*}[t]
  \centering
  \includegraphics[width=.7\textwidth]{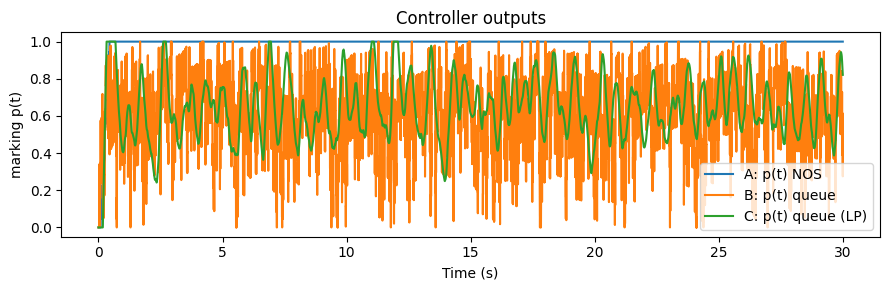}\\(4pt]
  \includegraphics[width=.7\textwidth]{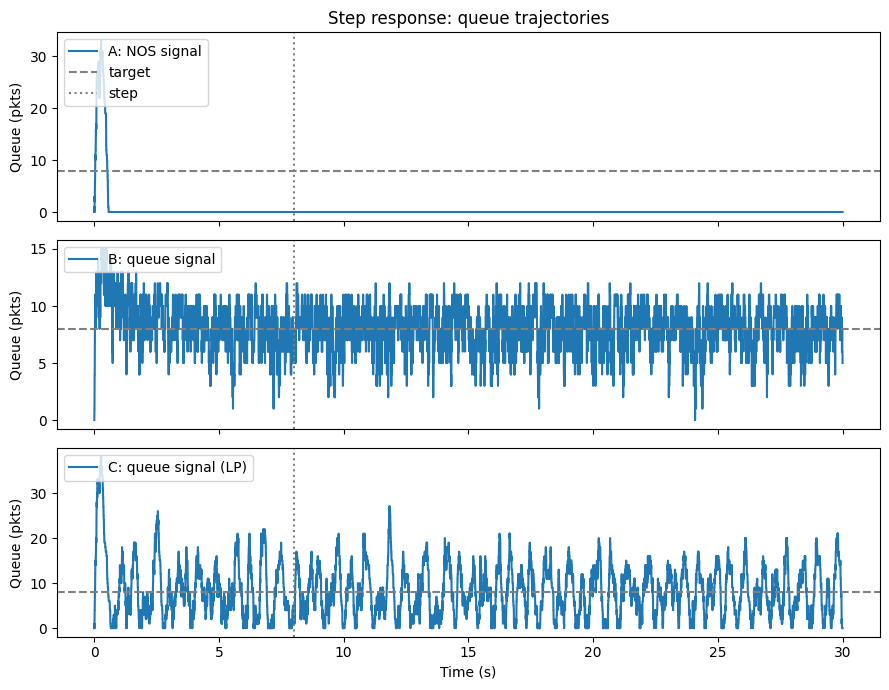}\\(4pt]
  \includegraphics[width=.7\textwidth]{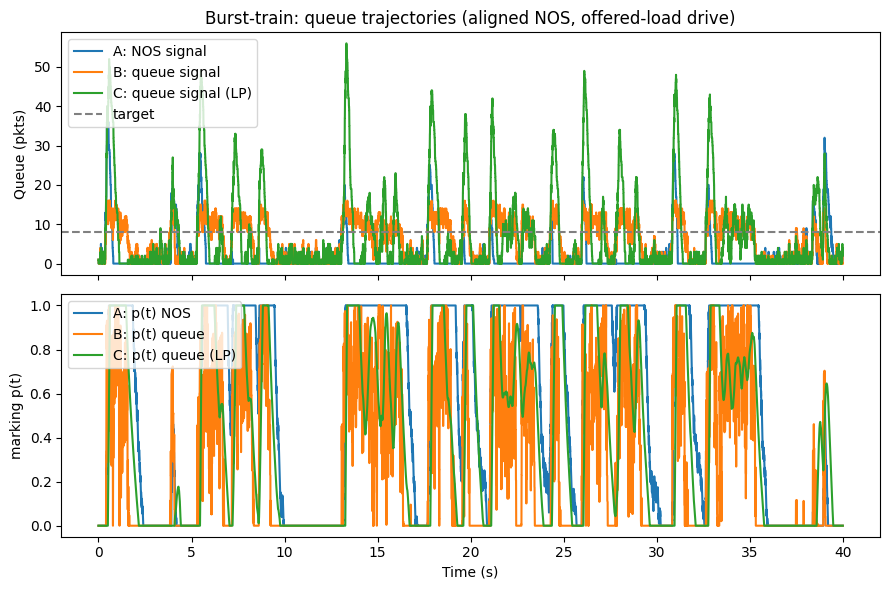}
  \caption{Closed--loop behaviour under identical offered–load traces. 
  (\textbf{Top}) Marking signals $p(t)$ from three controllers: \emph{NOS} (blue), raw queue (orange), and low–pass filtered queue (green). 
  (\textbf{Middle}) Step response: \emph{NOS} returns to target fastest with little ringing; queue–only is jitter–dominated; LP–queue trades jitter for lag and overshoot. 
  (\textbf{Bottom}) Burst train: \emph{NOS} clips spikes and shortens congestion episodes; queue–only is noisy; LP–queue allows taller spikes due to filter delay.}
  \label{fig:closedloop}
\end{figure*}

\section{Stochastic arrivals: shot noise model and calibration}
\label{app:nos-stochastic}

Network arrivals are bursty, short-correlated within flows, and heavy-tailed across flows. To expose this variability to the model while staying compatible with measurement practice, we represent exogenous drive at node $i$ by a compound Poisson shot-noise process smoothed at an operator-chosen time scale:
\begin{align}
\eta_i(t) &= \sum_{n=1}^{N_i(t)} A_{i,n}\,\kappa_s\!\bigl(t-t_{i,n}\bigr),
\label{eq:nos-shot}
\end{align}
where \(N_i(t)\) is a Poisson process of rate \(\nu_i\), the amplitudes \(A_{i,n}\) reflect burst size, and \(\kappa_s\) is a causal exponential smoothing kernel such as
\begin{align}
\kappa_s(t) &= e^{-t/\tau_s}\,H(t), \label{eq:nos-kernel}\\
H(t) &= 
\begin{cases}
0, & t<0,\\
1, & t\ge 0,
\end{cases} \label{eq:nos-heaviside}
\end{align}
so \(H(t)\) enforces causality (no effect before the burst) and the factor \(e^{-t/\tau_s}\) sets an exponential decay with time scale \(\tau_s\) (the burst’s influence fades smoothly). When needed, a normalised variant keeps unit area:
\begin{align}
\kappa_s^{\mathrm{norm}}(t) &= \frac{1}{\tau_s}\,e^{-t/\tau_s}\,H(t). \label{eq:nos-kernel-norm}
\end{align}
The normalisation is set so that \(A/(v_{\mathrm{th}}-v_{\mathrm{rest}})\) falls in a practical range for the workload. This construction aligns with the EWMA-style smoothing used in telemetry pipelines and permits direct calibration from packet counters.

Equation~\eqref{eq:nos-shot} mirrors the way operators pre-process packet counters and flow logs: arrivals are bucketed, lightly smoothed, and scaled. The rate $\nu_i$ matches the observed burst-start rate at node $i$ over the binning interval; the amplitude $A_{i,n}$ matches per-burst byte or packet mass after the same smoothing; and $\tau_s$ is set to the shortest window that suppresses counter jitter without hiding micro-bursts. The normalisation is chosen so that a single exogenous shot changes $v$ by a fraction $\epsilon$ of the threshold margin $m=v_{\mathrm{th}}-v_{\mathrm{rest}}$, with $\epsilon\in[10^{-3},\,2\times 10^{-1}]$. The lower bound clears telemetry noise; the upper bound prevents a single burst from forcing a reset. In discrete time this corresponds to $A\,\Delta t/m\in[10^{-3},\,2\times 10^{-1}]$; under the exponential kernel $\kappa_s(t)=e^{-t/\tau_s}H(t)$ it corresponds to $A\,\tau_s/m\in[10^{-3},\,2\times 10^{-1}]$. Using~\eqref{eq:nos-shot} inside $I_i(t)$ in~\eqref{eq:nos_start_input} makes the event drive consistent with the graph structure and delays defined in~\eqref{eq:nos-input-delayed}. In effect, traffic statistics enter \textit{NOS} with the same time base and smoothing that appear in the telemetry pipeline, which simplifies both training and post-deployment troubleshooting.
\section{Delayed gates and per-link queues}
\label{sec:delayed-gates}

When links maintain their own queues, we gate the coupling by a per-link occupancy \(q_{ij}(t)\). A simple queue-level description is
\begin{equation}
\dot q_{ij} = -\zeta q_{ij} + \Phi\bigl(S_j(t)\bigr) - \Psi\bigl(q_{ij}\bigr),
\label{eq:qij}
\end{equation}
and we replace \(w_{ij}S_j(t-\tau_{ij})\) by \(w_{ij} g\bigl(q_{ij}\bigr) S_j(t-\tau_{ij})\). This produces a two-layer representation: node integrator plus per-link capacity gating. Here \(\Phi\) maps presynaptic activity to arrivals on \((j\!\to\! i)\), \(\Psi\) represents service or RED/ECN-like bleed, and \(g\) is a bounded, monotone gate that reduces influence when the link is loaded. This separates phenomena cleanly: node excitability remains interpretable in queue units, while link congestion only modulates the strength and timing of coupling.

\paragraph{Per–link queue gating (dimensionally explicit).}
Let \(q_{ij}(t)\) denote the normalised occupancy of link \((i,j)\), with \(q_{ij}\in[0,1]\).
Choose a single time constant \(\tau_q>0\) and a bounded, dimensionless encoding
\(\sigma_s:\mathbb{R}\to[0,1]\) of the presynaptic drive \(S_j(t)\).
We model the queue as a leaky accumulator of arrivals:
\begin{equation}
\tau_q \,\dot q_{ij}(t) \;=\; -\,q_{ij}(t) \;+\; \sigma_s\!\big(S_j(t)\big),
\label{eq:qij_refined}
\end{equation}
so that \([\,\tau_q\,]=\mathrm{time}\) and \(q_{ij}\) is dimensionless. The effective coupling is then
\begin{equation}
w_{ij}^{\mathrm{eff}}(t) \;=\; w_{ij}\, g\!\big(q_{ij}(t)\big), \qquad
g:[0,1]\to[0,1],
\label{eq:weff}
\end{equation}
with gates such as
\begin{align}
g(x) &= (1-x)^p \quad \;\; \text{(capacity fade, \(p>1\))},
\label{eq:nos-g-power}\\
g(x) &= \frac{1}{1+\exp\{k(x-\theta)\}} \;\; \text{(logistic gate)}.
\label{eq:nos-g-logistic}
\end{align}
One then replaces \(w_{ij}S_j(t-\tau_{ij})\) in \eqref{eq:nos-input-delayed} by \(w_{ij}^{\mathrm{eff}}(t)S_j(t-\tau_{ij})\). This cleanly separates node adaptation from link capacity effects. In practice it lets an operator disclose where instability originates: node excitability, link queues, or topology. For completeness, the original form
\eqref{eq:qij}
is dimensionally consistent provided
\([\,\zeta\,]=\mathrm{time}^{-1}\) and
\([\,\Phi(S)\,]=[\,\Psi(q)\,]=\mathrm{time}^{-1}\);
\eqref{eq:qij_refined} corresponds to the special case
\(\zeta=\tau_q^{-1}\), \(\Phi(S)=\tau_q^{-1}\sigma_s(S)\), and \(\Psi\equiv 0\).

\paragraph{Networking interpretation and design guidance.}
Equation~\eqref{eq:nos-input-delayed} enforces hop-level causality: bursts from \(j\) affect \(i\) only after \(\tau_{ij}\), so back-pressure and wavefront effects appear on the correct timescale. The queue model in \eqref{eq:qij}–\eqref{eq:qij_refined} captures how a busy uplink or radio fades coupling without requiring packet loss. The drain term \(\Psi\) can represent ECN or token-bucket service, while the gate \(g(\cdot)\) reduces the effective link weight as occupancy rises. The power gate in \eqref{eq:nos-g-power} suits wired interfaces where capacity fades smoothly with load. The logistic gate in \eqref{eq:nos-g-logistic} matches thresholded policies such as priority drops, wireless duty cycling, or scheduler headroom rules. Because \(w_{ij}^{\mathrm{eff}}(t)\) multiplies \(S_j\!\bigl(t-\tau_{ij}\bigr)\) in \eqref{eq:nos-input-delayed}, congestion localises: only the affected links attenuate, which keeps attribution simple in operations dashboards.

Calibration follows telemetry. Choose \(\tau_{ij}\) from the queue-free component of RTT. Set \(\tau_q\) near the interface drain time or the QoS smoothing window. Normalise \(w_{ij}\) by nominal link rates or policy priorities, then scale \(W=\{w_{ij}\}\) so its spectral radius meets the desired small-signal index; under stress the gate \(g\) reduces the effective spectral radius, enlarging the stable operating region. In evaluation, this decomposition helps explain behaviour: if instability appears with empty \(q_{ij}\) the cause is node excitability or topology; if it vanishes when \(g\) engages, the bottleneck is link capacity or scheduling.

\subsection{ Delayed events and per-link gates in practice}
\label{app:delayed-gates-notes}
This note elaborates on the roles of $\tau_{ij}$ and $g\!\bigl(q_{ij}(t)\bigr)$ in \eqref{eq:nos-input-delayed} and \eqref{eq:qij_refined}.

\paragraph{Timing.}
The shift $S_j\!\bigl(t-\tau_{ij}\bigr)$ represents propagation and processing delay on $(j,i)$. It aligns presynaptic events with the time packets reach node $i$. Delay changes phase in closed loops, which affects stability margins and the onset of oscillations.

\paragraph{Conditioning.}
The gate $g\!\bigl(q_{ij}(t)\bigr)$ encodes a per-link condition such as residual capacity, scheduler priority, or queue occupancy. As $q_{ij}$ grows, the effective weight $w_{ij}^{\mathrm{eff}}(t)=w_{ij}g\!\bigl(q_{ij}(t)\bigr)$ shrinks, so a congested link contributes less drive downstream. This mimics policing, AQM, or rate limiting and prevents the model from reinforcing overloaded paths.

\paragraph{Examples.}
\begin{itemize}
\item \textit{Datacentre incast.} Multiple senders $j$ spike toward a top-of-rack switch $i$. Per-link queues $q_{ij}$ rise, $g(q_{ij})$ falls, and $w_{ij}^{\mathrm{eff}}$ shrinks, so $I_i(t)$ reflects effective rather than nominal bandwidth.
\item \textit{WAN hop with high delay.} A large $\tau_{ij}$ shifts $S_j$ by tens of milliseconds. The delayed drive prevents premature reactions at $i$ and yields a stability test that depends on both $\rho(W)$ and the phase introduced by $\tau_{ij}$.
\item \textit{Wireless fading.} Treat short-term SNR loss as a temporary rise in $q_{ij}$ or as a change in the gate parameters. The same $S_j$ then has smaller impact at $i$, consistent with link adaptation that lowers the offered load.
\end{itemize}

\paragraph{Operator workflow.}
Use passive RTT sampling or active probes to estimate $\tau_{ij}$. Fit $\tau_q$ to the decay of $q_{ij}$ when input drops or to the effective averaging window of the QoS policy. Set $w_{ij}$ proportional to nominal capacity or administrative weight, then scale $W$ to the target small-signal index. Validate $g$ by replaying traces with known congestion episodes and checking that $w_{ij}^{\mathrm{eff}}$ attenuates the expected links while leaving others unchanged.

\section{Equilibrium and stability: detailed derivations}
\label{app:stability-details}

\subsection{Equilibrium equations (derivations)}
\label{app:equilibrium-eq}

For node $i$, let $(v_i^*,u_i^*)$ denote a subthreshold equilibrium ($\dot v_i=\dot u_i=0$). At such a steady state, the \eqref{eq:nos-u-eq} implies a linear relation between recovery and queue level:
\begin{align}
u_i^* \;=\; \frac{a b}{a+\mu}\,v_i^* .
\label{eq:eq_u}
\end{align}
Substituting \eqref{eq:eq_u} into \eqref{eq:nos-v-eq} gives
\begin{align}
\begin{split}
   0 \;=\; f_{\mathrm{sat}}(v_i^*) \;+\; \bigl(\beta - \lambda - \chi\bigr)\,v_i^* \; +\; \gamma \\ \;+\; \chi v_{\mathrm{rest}} \;-\; u_i^* \;+\; I_i^* ,
\label{eq:eq_v_corrected} 
\end{split}
\end{align}

where $I_i^*$ denotes the long-run mean of the graph-local drive at node $i$ (including delayed contributions). Eliminating $u_i^*$ yields a scalar fixed-point equation for $v_i^*$:
\begin{align}
\begin{split}
f_{\mathrm{sat}}(v_i^*) \;+\; \Bigl(\beta - \lambda - \chi - \frac{a b}{a+\mu}\Bigr)\,v_i^* \;+\; \gamma 
\\ \;+ \chi v_{\mathrm{rest}} \;+\; I_i^* \;=\; 0 .
\label{eq:equilibrium_f1}
\end{split}
\end{align}

Equation \eqref{eq:equilibrium_f1} balances three effects: a convex rise with load through $f_{\mathrm{sat}}$, linear drain through the net service and damping, and the constant baseline from $\gamma+\chi v_{\mathrm{rest}}+I_i^*$. The mean input $I_i^*$ aggregates neighbour activity over the network with delays; under stationary traffic it is well defined because the delayed sum in \eqref{eq:nos-input-delayed} averages to a constant.

\paragraph{Quadratic small-signal approximation.}
For analytic visibility, replace $f_{\mathrm{sat}}(v)\approx \alpha v^2$ in a small-signal regime and define
\begin{align}
L \;:=\; \beta - \lambda - \chi - \frac{ab}{a+\mu}, 
\;\;
C \;:=\; \gamma + \chi v_{\mathrm{rest}} + I_i^* .
\label{eq:equil_LC_defs}
\end{align}
Then \eqref{eq:equilibrium_f1} reduces to
\begin{align}
\alpha (v_i^*)^2 \;+\; L\,v_i^* \;+\; C \;=\; 0 ,
\label{eq:equil_quadratic}
\end{align}
with discriminant
\begin{align}
D_i \;=\; L^2 \;-\; 4\alpha C .
\label{eq:disc}
\end{align}
If $D_i\ge 0$ and the admissible root lies in $[0,1]$, it approximates $v_i^*$ and clarifies how the operating point moves with $C$. In networks, a convenient sufficient condition for a unique operating point is that the net small-signal slope is negative on $[0,1]$, i.e.\ $f'_{\mathrm{sat}}(v)+L<0$ throughout the admissible range; the corollary in \ref{subsec:nos-equilibrium} provides a direct algebraic check. In the quadratic approximation, the sign of $D_i$ in \eqref{eq:disc} separates three regimes: $D_i<0$ (no real root under the quadratic approximation), $D_i=0$ (tangent balance at a single $v_i^*$), and $D_i>0$ (two mathematical roots, of which the physically admissible one lies in the queue range and is then checked against the monotonicity and stability conditions below).

\paragraph{Existence and uniqueness on $[0,1]$.}
The monotonicity criterion introduced earlier applies with $L$ as in \eqref{eq:equil_LC_defs}: a sufficiently \emph{negative} net slope $f'_{\mathrm{sat}}(v)+L$ over the admissible interval (for example $v\in[0,1]$) ensures a unique solution. For the saturating choice
\begin{align}
f_{\mathrm{sat}}(v) \;=\; \frac{\alpha v^2}{1+\kappa v^2}, \qquad \alpha>0,\ \kappa>0,
\label{eq:f_sat_def}
\end{align}
the slope
\begin{align}
f_{\mathrm{sat}}'(v) \;=\; \frac{2\alpha v}{\bigl(1+\kappa v^2\bigr)^2}
\label{eq:fprime}
\end{align}
is nonnegative and bounded on $[0,1]$, with $\max_{v\in[0,1]} f'_{\mathrm{sat}}(v) \le \tfrac{3\sqrt{3}}{8}\tfrac{\alpha}{\sqrt{\kappa}}$.
If
\begin{align}
\sup_{v\in[0,1]}\bigl(f'_{\mathrm{sat}}(v)+L\bigr) \;<\; 0,
\label{eq:eq_monotone_cond}
\end{align}
then $F$ is strictly decreasing on $[0,1]$ and has at most one root there. If in addition $F(0)\,F(1)\le 0$, a unique $v_i^*\in[0,1]$ exists. Condition \eqref{eq:eq_monotone_cond} is met whenever
\begin{align}
L \;<\; -\,\frac{3\sqrt{3}}{8}\,\frac{\alpha}{\sqrt{\kappa}},
\label{eq:eq_global_sufficient}
\end{align}
which is the small-signal service-dominance requirement: linear damping $\lambda+\chi+\tfrac{ab}{a+\mu}$ must dominate the maximal small-signal excitability slope (offsetting any positive linear gain $\beta$).

\paragraph{Networking interpretation.}
Equation \eqref{eq:equilibrium_f1} balances offered load against service and damping. The term $I_i^*$ aggregates graph-local arrivals and policy weights; $L$ collects all linear contributions. The sensitivity to slowly varying load follows by implicit differentiation:
\begin{align}
\frac{\partial v_i^*}{\partial I_i^*} \;=\; -\,\frac{1}{f'_{\mathrm{sat}}(v_i^*) + L}.
\label{eq:eq_dc_gain}
\end{align}
Hence the DC gain is well-defined. Under the monotonicity condition \eqref{eq:eq_monotone_cond}, the denominator is strictly negative on $[0,1]$, so $\partial v_i^*/\partial I_i^*>0$ as expected: increasing offered load increases the operating queue level. The gain shrinks as saturation strengthens $\,(\kappa\uparrow)$ or as service and damping increase $\,(\lambda,\chi,ab/(a+\mu)\uparrow)$, matching the operational goal of keeping the operating point insensitive to modest load drift.

\subsection{Derivation note: $\delta I \approx G\,\delta v$}
\label{app:dI-G}

Let the coupling current be $I(v)=g\,W\,f_{\mathrm{sat}}(v)$ (applied with the stated delay).
Linearising about $v^*$ gives
\begin{equation}
\delta I \;=\; g\,W\,f'_{\mathrm{sat}}(v^*)\,\delta v \;+\; \mathcal{O}(\|\delta v\|^2).
\end{equation}
Taking an induced norm yields the scalar bound
\begin{equation}
\|\delta I\| \;\le\; g\,\|W\|\,|f'_{\mathrm{sat}}(v^*)|\,\|\delta v\|
\;=:\; G\,\|\delta v\|,
\end{equation}
so we write $\delta I \approx G\,\delta v$ as shorthand, with
\begin{equation}
G \;:=\; g\,\|W\|\,|f'_{\mathrm{sat}}(v^*)|.
\end{equation}

\subsection{Jacobian and linear stability (details)}
\label{app:Jstability-local}

Linearising \textit{NOS} node \eqref{eq:nos-v-eq}–\eqref{eq:nos-u-eq} at a subthreshold equilibrium $(v^*,u^*)$ gives the single-node Jacobian
\begin{align}
J_i \;=\;
\begin{bmatrix}
f'(v^*) + \beta - \lambda - \chi & -1\\
ab & -(a+\mu)
\end{bmatrix} ,
\label{eq:Ji_app}
\end{align}
with trace and determinant
\begin{align}
T_i \;=\; \mathrm{tr}(J_i) \;=\; f'(v^*) + \beta - \lambda - \chi - (a+\mu) ,
\label{eq:trace}\\
\Delta_i = \det(J_i) \;= ab -\; \bigl(f'(v^*) + \beta - \lambda - \chi\bigr)(a+\mu) .
\label{eq:det}
\end{align}
The Routh–Hurwitz conditions for a two-dimensional system yield local asymptotic stability if and only if
\begin{align}
T_i \;<\; 0 \qquad \text{and} \qquad \Delta_i \;>\; 0 .
\label{eq:RH}
\end{align}
\paragraph{From local Jacobian to a network-level proxy.}
To connect \eqref{eq:RH} to the graph-level proxy used in the main text, consider homogeneous coupling where small drive perturbations satisfy
$\delta I \approx G\,\delta v$ with $G=gW$ and Perron eigenvalue $\rho(W)$.
Define
\begin{align}
\bar d \;:=\; f'(v^*)+\beta-\lambda-\chi .
\label{eq:bard_def_app}
\end{align}
Stacking states $(v,u)\in\mathbb{R}^{2N}$ gives the block Jacobian
\begin{align}
\mathcal{J} \;=\;
\begin{bmatrix}
\bar d\,\mathbf{I}_N + gW & -\mathbf{I}_N\\
ab\,\mathbf{I}_N & -(a+\mu)\mathbf{I}_N
\end{bmatrix}.
\label{eq:blockJ_app}
\end{align}
Along any eigenmode of $W$ with eigenvalue $\omega$, the dynamics reduce to an effective $2\times 2$ Jacobian
\begin{align}
J_{\mathrm{eff}}(\omega)
\;=\;
\begin{bmatrix}
\bar d + g\omega & -1\\
ab & -(a+\mu)
\end{bmatrix}.
\label{eq:Jeff_app}
\end{align}
The least stable mode uses $\omega=\rho(W)$, so write $k:=g\rho(W)$.
Routh--Hurwitz on \eqref{eq:Jeff_app} yields the two conditions
\begin{align}
k \;<\; (a+\mu)-\bar d,
\qquad
k \;<\; \frac{ab}{a+\mu}-\bar d .
\label{eq:RH_network_app}
\end{align}
Now define the net drain
\begin{align}
\Lambda \;:=\; \lambda+\chi+\frac{ab}{a+\mu}-\beta .
\label{eq:Lambda_app}
\end{align}
Using \eqref{eq:bard_def_app}, the determinant branch in \eqref{eq:RH_network_app} is exactly
\begin{align}
k + f'(v^*) \;<\; \Lambda,
\qquad\text{i.e.}\qquad
g\,\rho(W) \;<\; \Lambda - f'(v^*),
\label{eq:additive_proxy_app}
\end{align}
which is the additive form used for the network stability proxy.
\paragraph{Remark (normalised reporting).}
Any ratio form (for example $(k+f'(v^*))/\Lambda$) is only a normalised headroom score.
The stability condition is the additive inequality \eqref{eq:additive_proxy_app}.

\paragraph{Networking reading.}
The trace condition \eqref{eq:RH} asks that net drain $(a+\mu)+(\lambda+\chi-\beta)$ exceeds the small-signal growth from $f'(v^*)$. The determinant condition requires that the product of recovery gain $ab$ and time-scale $(a+\mu)$ dominates the same growth term. Together they state that service plus damping and the recovery loop must remove perturbations faster than excitability amplifies them. When either inequality tightens, small oscillations and slow return-to-baseline emerge, which operators observe as ringing in queue telemetry.

\paragraph{Specialisations.}
\emph{Quadratic small-signal.} For small $|v^*|$, use $f'(v^*)\approx 2\alpha v^*$ and test \eqref{eq:RH} with that slope.\\
\emph{Bounded excitability.} For $f_{\mathrm{sat}}(v)=\dfrac{\alpha v^2}{1+\kappa v^2}$,
\begin{align}
f'(v^*) \;=\; \frac{2\alpha v^*}{\bigl(1+\kappa (v^*)^2\bigr)^2} ,
\label{eq:fsatprime}
\end{align}
\begin{align}
T_i \;=\; \frac{2\alpha v^*}{(1+\kappa (v^*)^2)^2} \;+\; \beta - \lambda - \chi \;-\; (a+\mu) ,
\label{eq:Ti_sat}
\end{align}
\begin{align}
\Delta_i \;=\; ab \;-\; \Bigl(\frac{2\alpha v^*}{(1+\kappa (v^*)^2)^2} \;+\; \beta - \lambda - \chi\Bigr)(a+\mu) .
\label{eq:Delta_sat}
\end{align}
As $\kappa$ grows, the effective slope \eqref{eq:fsatprime} decreases, enlarging the stability region defined by \eqref{eq:RH}. A full statement and supporting derivations for the stability claims are given in below.

\begin{lemma}[Monotone stabilisation by saturation (fixed operating point)]
\label{lem:monotone-saturation}
Let $f_{\mathrm{sat}}(v)=\dfrac{\alpha v^2}{1+\kappa v^2}$ with $\alpha>0$ and $\kappa>0$.
For any fixed $v\ge 0$,
\begin{align}
\frac{\partial}{\partial \kappa}\,f'_{\mathrm{sat}}(v)
\;=\;
-\,\frac{4\alpha v^{3}}{\bigl(1+\kappa v^{2}\bigr)^{3}}
\;\le\; 0,
\label{eq:dfdkappa_pointwise}
\end{align}
with equality only at $v=0$.
Consequently, for the Jacobian \eqref{eq:Ji_app} evaluated at a fixed operating point $v^*>0$,
increasing $\kappa$ strictly decreases the trace $T_i$ and strictly increases the determinant $\Delta_i$,
thereby enlarging the region $\{T_i<0,\ \Delta_i>0\}$ when re-evaluated at that same $v^*$.
\end{lemma}

\begin{proof}[Proof sketch]
Differentiate $f'_{\mathrm{sat}}(v)=\dfrac{2\alpha v}{(1+\kappa v^2)^2}$ with respect to $\kappa$ to obtain
\eqref{eq:dfdkappa_pointwise}. For fixed $v^*$, the $\kappa$-dependence of \eqref{eq:Ji_app} enters only
through $f'_{\mathrm{sat}}(v^*)$, hence
\begin{align}
\begin{split}
\frac{\partial T_i}{\partial \kappa}
\;=\;
\frac{\partial f'_{\mathrm{sat}}(v^*)}{\partial \kappa}
\;<\;0 \quad (v^*>0),
\\
\frac{\partial \Delta_i}{\partial \kappa}
\;=\;
-(a+\mu)\frac{\partial f'_{\mathrm{sat}}(v^*)}{\partial \kappa}
\;>\;0,
\label{eq:dTDelta_dkappa}
\end{split}
\end{align}
since $a+\mu>0$ and \eqref{eq:dfdkappa_pointwise} is strict for $v^*>0$.
\end{proof}

\paragraph{Remark (equilibrium shift with $\kappa$).}
If the operating point is taken as the equilibrium branch $v^*=v^*(\kappa)$ rather than held fixed,
then the relevant quantities use total derivatives. In particular,
\begin{align}
\frac{d}{d\kappa}\, f'_{\mathrm{sat}}(v^*(\kappa),\kappa)
\;=\;
\frac{\partial f'_{\mathrm{sat}}}{\partial \kappa}(v^*,\kappa)
\;+\;
f''_{\mathrm{sat}}(v^*,\kappa)\,\frac{dv^*}{d\kappa},
\label{eq:total_deriv_fprime}
\end{align}
so $\frac{dT_i}{d\kappa}=\frac{d}{d\kappa}f'_{\mathrm{sat}}(v^*)$ and
$\frac{d\Delta_i}{d\kappa}=-(a+\mu)\frac{d}{d\kappa}f'_{\mathrm{sat}}(v^*)$.

Moreover, $dv^*/d\kappa$ follows from the equilibrium relation (Eq.~\eqref{eq:eq_dc_gain}).
Writing $F(v,\kappa):=f_{\mathrm{sat}}(v,\kappa)+L v + C=0$ with
$L:=\beta-\lambda-\chi-\frac{ab}{a+\mu}$, the implicit-function theorem gives
\begin{align}
\frac{dv^*}{d\kappa}
\;=\;
-\,\frac{\partial_\kappa F(v^*,\kappa)}{\partial_v F(v^*,\kappa)}
\;=\;
-\,\frac{\partial_\kappa f_{\mathrm{sat}}(v^*,\kappa)}{f'_{\mathrm{sat}}(v^*,\kappa)+L},
\label{eq:dvdkappa_IFT}
\end{align}
where $\partial_\kappa f_{\mathrm{sat}}(v,\kappa)= -\dfrac{\alpha v^4}{(1+\kappa v^2)^2}<0$ for $v>0$.
Thus the pointwise monotonicity \eqref{eq:dfdkappa_pointwise} holds unconditionally at fixed $v^*$, while
monotonic behaviour along $v^*(\kappa)$ depends on the additional term in \eqref{eq:total_deriv_fprime}.
In the paper we use \eqref{eq:dfdkappa_pointwise} as an engineering fact: for a given observed subthreshold
$v^*$, larger $\kappa$ reduces the small-signal slope and widens the Routh--Hurwitz margins when re-evaluated
at that operating point.

\begin{corollary}[Network threshold increases with saturation (fixed operating point)]
\label{cor:net-threshold-sat}
Under homogeneous coupling $G=gW$ with Perron eigenvalue $\rho(W)$, define
\begin{align}
\begin{split}
\bar d \;:=\; f'_{\mathrm{sat}}(v^*)+\beta-\lambda-\chi,
\\
k^\star \;=\; \min\Bigl\{(a+\mu)-\bar d,\ \frac{ab}{a+\mu}-\bar d\Bigr\},
\qquad
g_\star \;\approx\; \frac{k^\star}{\rho(W)} .
\end{split}
\label{eq:kstar_gstar_defs}
\end{align}
Equivalently, with $\Lambda$ from \eqref{eq:Lambda_app}, the determinant branch is
$k < \Lambda - f'_{\mathrm{sat}}(v^*)$, matching \eqref{eq:additive_proxy_app}.
Holding $v^*$ fixed, \eqref{eq:dfdkappa_pointwise} implies $\partial \bar d/\partial\kappa<0$, hence
$\partial k^\star/\partial\kappa>0$ on the active branch, and therefore $\partial g_\star/\partial\kappa>0$.
\end{corollary}

\paragraph{Operational guidance.}
When tuning for a given site, estimate $v^*$ from the modal subthreshold queue level and compute the small-signal slope $f'(v^*)$ using the fitted $(\alpha,\kappa)$. If \eqref{eq:RH} is tight, increasing $\kappa$ (earlier saturation) or increasing $\lambda$ and $\chi$ adds headroom. If tightening occurs only in $\Delta_i$, adjust the recovery loop $(a,b,\mu)$ to increase $ab$ or the composite time-scale $(a+\mu)$; both choices shorten the memory of recent stress and prevent ringing after bursts.
Decay times and DC sensitivity that guide window length and reset tuning are summarised in below Eqs.~\eqref{eq:tau_lin} and \eqref{eq:dc_gain_repeat}.
\paragraph{Eigenvalues and decay times.}
Let $\lambda_{1,2}$ be the eigenvalues of the local Jacobian $J_i$ defined in \ref{app:Jstability-local}. When the Routh--Hurwitz conditions in \eqref{eq:RH} hold and $T_i^2>4\Delta_i$, the node is an overdamped node with two real negative modes. When $T_i^2<4\Delta_i$, it is underdamped with complex conjugate modes whose common decay rate is $-\tfrac{1}{2}T_i$. The dominant linear return time is
\begin{align}
\tau_{\text{lin}} \;\approx\; \frac{1}{\,\min\{-\Re\lambda_1,\,-\Re\lambda_2\}\,},
\label{eq:tau_lin}
\end{align}
which provides an engineering handle for truncated BPTT windows and for selecting the reset constant $\rho$ so numerical timescales match device drain and scheduler epochs. In practice, $\tau_{\text{lin}}$ estimated from small perturbations of $I_i$ around an operating point aligns well with the observed relaxation of the queue proxy $v_i$.

\paragraph{Input--output small-signal gain.}
Let $\widehat{I}$ be a small constant perturbation of $I_i$ and $\widehat{v}$ the induced steady-state change in $v_i$. Using the equilibrium implicit-function relation from \eqref{eq:eq_dc_gain}, the DC gain is
\begin{align}
\frac{\widehat{v}}{\widehat{I}} \;=\; -\,\frac{1}{\,f'_{\mathrm{sat}}(v^*)+L\,}.
\label{eq:dc_gain_repeat}
\end{align}
Larger service $\lambda$ and stronger damping $\chi$ decrease this gain, making the operating point less sensitive to slow offered-load drift. This matches queueing intuition and yields a direct calibration target: estimate $\widehat{v}/\widehat{I}$ from step tests or natural experiments in telemetry, then verify that the inferred $L$ is consistent with the Jacobian conditions in \eqref{eq:RH}.

\subsection{Delay-aware Perron-mode stability (DDE extension)}
\label{app:delay_stability}

For homogeneous coupling with a representative delay $\tau$ on the dominant mode,
the linearised Perron-mode dynamics become a $2\times 2$ delay system:
\begin{align}
\dot{\delta v}(t) &= \bar d\,\delta v(t) - \delta u(t) + k\,\delta v(t-\tau),\\
\dot{\delta u}(t) &= ab\,\delta v(t) - (a+\mu)\,\delta u(t),
\end{align}
where $\bar d=f'(v^*)+\beta-\lambda-\chi$ and $k=g\rho(W)$.
The characteristic equation is
\begin{equation}
\bigl(s-\bar d-k e^{-s\tau}\bigr)\bigl(s+(a+\mu)\bigr)+ab=0.
\label{eq:char_dde}
\end{equation}
Setting $s=i\Omega$ gives the boundary of stability in the standard frequency-domain form.
Define
\[
N(\Omega)=\bigl(i\Omega+(a+\mu)\bigr)\bigl(i\Omega-\bar d\bigr)+ab.
\]
Then \eqref{eq:char_dde} implies
\begin{equation}
k\,e^{-i\Omega\tau}=\frac{N(\Omega)}{i\Omega+(a+\mu)}.
\label{eq:k_phase}
\end{equation}
Hence the critical coupling at frequency $\Omega$ is
\begin{equation}
k(\Omega)=\frac{|N(\Omega)|}{\sqrt{\Omega^2+(a+\mu)^2}},
\label{eq:k_mag}
\end{equation}
and admissible delays satisfy the phase relation
\begin{equation}
\tau=\frac{1}{\Omega}\Bigl(\arg(i\Omega+(a+\mu))-\arg(N(\Omega))+2\pi m\Bigr),\quad m\in\mathbb{Z}.
\label{eq:tau_phase}
\end{equation}
For a given $\tau$, one finds $\Omega$ satisfying \eqref{eq:tau_phase} (for some integer $m$) and evaluates $k_{\mathrm{crit}}(\tau)=\min k(\Omega)$ over the feasible crossings.
When $\tau=0$, the $\Omega\to 0$ limit of \eqref{eq:k_mag} recovers the undelayed determinant branch
$k<\frac{ab}{a+\mu}-\bar d$, which is equivalent to
$k+f'(v^*)<\Lambda$ with $\Lambda=\lambda+\chi+\frac{ab}{a+\mu}-\beta$.
Empirically, increasing $\tau$ shifts $k_{\mathrm{crit}}(\tau)$ downward, matching the boundary shift observed in Fig.~\ref{fig:NOS_panels}(a).

\subsection{Network coupling and global stability (details)}
\label{app:global_stability}

In isolation a \textit{NOS} unit is a load–service balance set by its local parameters. In a network the steady input is shaped by neighbours and policy weights, so existence and robustness of equilibria depend on both the operating point and the coupling matrix. Let $W=[w_{ij}]$ be the graph coupling and let $S_j^*$ denote the effective steady presynaptic drive at node $j$ (after any per–link delays and gates). The steady input at node $i$ is
\begin{align}
\begin{split}
I_i^* \;=\; \sum_{j} w_{ij}\,S_j^*, 
\\ 
\max_i I_i^* \;\le\; \|W\|_{\infty}\,\|S^*\|_{\infty},
\\
\|I_i^*\|_2\le\;\|W\|_2\,\|S^*\|_2 \ \ 
\label{eq:nos-Istar-bounds}
\end{split}
\end{align}
with $\|W\|_2=\sqrt{\rho(W^\top W)}$, and the spectral radius obeys $\rho(W)\le \|W\|_{\infty}$, so $\|W\|_{\infty}$ gives a conservative topology-aware cap on steady drive.
From the scalar equilibrium reduction (cf. \eqref{eq:eq_u}–\eqref{eq:disc}), define
\begin{align}
L \;:=\; \beta - \lambda - \chi - \frac{ab}{a+\mu}, 
\qquad
C \;:=\; \gamma + \chi v_{\mathrm{rest}} + I_i^*,
\label{eq:nos-LC}
\end{align}
and write the per-node equilibrium condition as
\begin{align}
f_{\mathrm{sat}}(v_i^*) + L\,v_i^* + C \;=\; 0.
\label{eq:nos-eq-scalar}
\end{align}

\paragraph{Existence bounds.}
Two complementary sufficient conditions are useful in operations.

\emph{Saturated cap (conservative).} Since $f_{\mathrm{sat}}(v)\le \alpha/\kappa$ for all $v$,
\begin{align}
C \;\le\; \frac{\alpha}{\kappa}
\;\rightarrow\;
\text{a nonnegative equilibrium exists when } L\le 0.
\label{eq:sat_cap}
\end{align}
This bound uses only the saturation ceiling and is independent of the local operating point.

\emph{Quadratic small-signal.} Approximating $f_{\mathrm{sat}}(v)\approx \alpha v^2$ near light load $v=0$ gives
\begin{align}
\alpha (v_i^*)^2 + L v_i^* + C \;=\; 0,
\qquad 
D_i \;=\; L^2 - 4\alpha C,
\label{eq:nos-quad-eq}
\end{align}
so an equilibrium exists if
\begin{align}
C \;\le\; \frac{L^2}{4\alpha}.
\label{eq:quad_small_signal}
\end{align}
We refer to \eqref{eq:quad_small_signal} as the \emph{quadratic small-signal} bound. Combining \eqref{eq:quad_small_signal} with the norm bound in \eqref{eq:nos-Istar-bounds} yields
\begin{align}
\gamma + \chi v_{\mathrm{rest}} + \|W\|_{\infty}\,\|S^*\|_{\infty}
\;\le\; \frac{L^2}{4\alpha},
\label{eq:sufic}
\end{align}
with a spectral alternative obtained by replacing $\|W\|_{\infty}\|S^*\|_{\infty}$ with $\|W\|_2\|S^*\|_2$.
Inequality \eqref{eq:sufic} makes the graph explicit: heavier row sums (dense fan–in or large policy weights) shrink headroom for the same steady $S^*$.

\paragraph{Operational margin.}
A convenient scalar indicator is
\begin{align}
\Delta_{\mathrm{op}} \;:=\; \frac{L^2}{4\alpha} - \bigl(\gamma + \max_i I_i^*\bigr),
\label{eq:nos-op-margin}
\end{align}
which is positive when the quadratic small-signal existence test passes with slack. In practice, keeping $\Delta_{\mathrm{op}}>0$ under nominal load provides headroom for burstiness and guides weight normalisation: scale $W$ so that the largest row sum keeps $\max_i I_i^*$ below the right-hand side of \eqref{eq:sufic}. When delays $\tau_{ij}$ are present, \eqref{eq:sufic} remains a conservative existence test; sharper delay-aware guarantees require the block Jacobian in the next subsection.

\paragraph{Operational margin under load and damping.}
The operational margin \eqref{eq:nos-op-margin} can be plotted as a function of the subthreshold damping $\chi$ and the maximum steady input $I_{\max}=\max_i I_i^*$, as in Fig.~\ref{fig:op_heat}. The red contour $\Delta_{\mathrm{op}}=0$ separates admissible operating points from those that violate the quadratic small-signal bound. From a networking viewpoint, $\chi$ measures the strength of subthreshold queue relaxation, while $I_{\max}$ is the heaviest sustained offered load after graph coupling. Two effects are visible: (i) the margin decreases monotonically with $I_{\max}$, so higher offered load erodes headroom; (ii) larger $\chi$ shifts the boundary outward, permitting higher sustained load before crossing $\Delta_{\mathrm{op}}=0$. Stronger damping therefore improves resilience but does not remove the inherent load–margin trade-off. $\|W\|_{\infty}$ aggregates worst–case inbound influence at a node, so it reflects heavy in–degree or imbalanced policies; $\|W\|_2$ captures coherent multi–hop reinforcement. Both provide simple levers: reweight heavy rows, sparsify fan–in, or gate selected edges until $\Delta_{\mathrm{op}}$ is positive with a chosen margin.

\paragraph*{Illustrative computation.}
Using the legacy “starter” values (kept here for continuity) 
$\alpha=0.02$, $\beta=0.5$, $\gamma=0.05$, $\lambda=0.2$, 
$a=0.05$, $b=0.5$, $\mu=0.01$, $\chi=0.05$, and $I_{\max}=0.10$,
\(
\frac{ab}{a+\mu}=\frac{0.025}{0.06}\approx 0.4167,\)
\(
L=0.5-0.2-0.05-0.4167\approx -0.1667,
\)
\(
\frac{L^2}{4\alpha}\approx \frac{0.02778}{0.08}=0.34722,
\quad 
\Delta_{\mathrm{op}} \approx 0.34722-(0.05+0.10)=0.19722>0.
\)
At $I_{\max}=0.30$ the margin is $\Delta_{\mathrm{op}}\approx 0$, and raising $\chi$ from $0.05$ to $0.30$ changes $L$ from $-0.1667$ to $-0.4167$, yielding $\Delta_{\mathrm{op}}\approx 1.82$—a substantial safety increase. 
\emph{Note:} these numbers are purely illustrative for the quadratic surrogate; in the main experiments we use the admissible ranges in Table~\ref{tab:nos_ranges}. The linear dependence on $I_{\max}$ and the monotone effect of $\chi$ hold in either case.

\begin{figure}[H]
\centering
\includegraphics[width=\columnwidth]{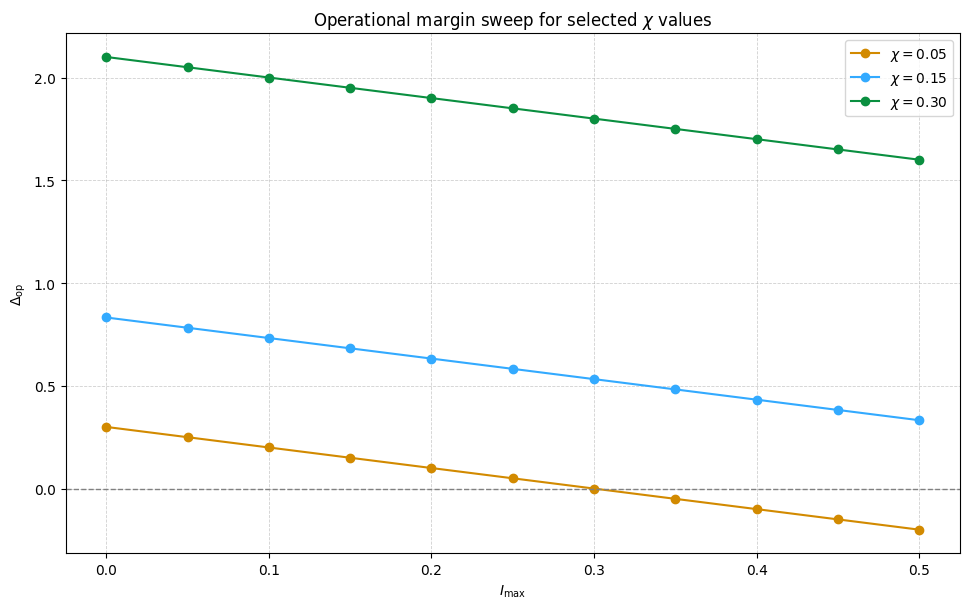}
\caption{One-dimensional sweeps of the operational margin $\Delta_{\mathrm{op}}$ against $I_{\max}$ for selected $\chi$. Each trace is affine with slope $-1$, illustrating the linear decay of margin with load. Intercepts grow with $\chi$, consistent with the global stability boundary in Fig.~\ref{fig:op_heat}.}
\label{fig:op_margin}
\end{figure}

\paragraph*{Supporting sweeps.}
Figure~\ref{fig:op_margin} shows $\Delta_{\mathrm{op}}$ against $I_{\max}$ for several $\chi$. The slope is $-1$ by construction; increasing $\chi$ shifts the intercept upward (analogous to higher service slack). For the legacy starter set above the margin crosses zero near $I_{\max}\approx 0.30$, consistent with
\(
\Delta_{\mathrm{op}} = \frac{L^2}{4\alpha} - (\gamma + I_{\max}),\qquad \text{$L$ fixed.}
\)
Increasing $\chi$ increases $|L|$ and hence the scalar margin.

\subsection{Block Jacobian, network spectrum, and stability threshold (details)}
\label{app:stability-threshold}

We assess global small–signal stability by linearising the coupled \textit{NOS} network about a subthreshold equilibrium $(v^*,u^*)$. Let $\delta v$ and $\delta u$ be perturbations of state, and let small variations of presynaptic drive satisfy $\delta I \approx G\,\delta v$, where $G$ is the input–output sensitivity induced by topology (for homogeneous scaling, $G=gW$). Linearisation of \eqref{eq:nos-v-eq}–\eqref{eq:nos-u-eq} then yields the $2N\times 2N$ block Jacobian
\begin{align}
\begin{split}
\mathcal{J} \;=\;
\begin{bmatrix}
D + G & -\mathbf{I}_N \\
a B & -(a+\mu)\mathbf{I}_N
\end{bmatrix},
\\
D=\operatorname{diag}\!\Bigl(f'(v_i^*)+\beta-\lambda-\chi\Bigr),\quad
B=b \mathbf{I}_N.
\label{eq:blockJ}
\end{split}
\end{align}
Instability occurs when any eigenvalue of $\mathcal{J}$ crosses into the open right half-plane.

\paragraph{Perron-mode reduction under homogeneous coupling.}
For homogeneous coupling $G=gW$ with $g\ge 0$ and let $\rho(W)$ be the spectral radius with Perron eigenpair $(\rho(W),\phi)$. Projecting \eqref{eq:blockJ} onto $\phi$ yields an effective $2\times 2$ Jacobian
\begin{align}
\begin{split}
J_{\mathrm{eff}}(k) \;=\;
\begin{bmatrix}
\bar d + k & -1\\
ab & -(a+\mu)
\end{bmatrix},
\\
k \;:=\; g\,\rho(W), 
\\
\bar d \;:=\; \frac{1}{\|\phi\|_2^2}\sum_i \phi_i^2\bigl[f'(v_i^*)+\beta-\lambda-\chi\bigr].
\label{eq:Jeff}
\end{split}
\end{align}
In the Perron–mode reduction we define \(\bar d := \frac{1}{\|\phi\|_2^2}\sum_i \phi_i^2\bigl[f'(v_i^*)+\beta-\lambda-\chi\bigr]\), which is exact for heterogeneous equilibria. Throughout the subsequent analysis we write \(\bar d \approx f'_{\mathrm{sat}}(v^*)+\beta-\lambda-\chi\) for homogeneous or mildly heterogeneous whenever \(v_i^*\approx v^*\) across nodes (or the Perron vector is sufficiently delocalised). All threshold formulas (e.g. \eqref{eq:kstar-gstar} and \eqref{eq:collapse_gstar}) remain valid with either form by substituting the appropriate \(\bar d\) for the deployment at hand.
The Routh–Hurwitz conditions for $J_{\mathrm{eff}}(k)$ give the critical scalar coupling
\begin{align}
k^\star
\;=\;
\min\Bigl\{(a+\mu)-\bar d,\ \frac{ab}{a+\mu}-\bar d\Bigr\},
\qquad
g_\star \;\approx\; \frac{k^\star}{\rho(W)}.
\label{eq:kstar-gstar}
\end{align}

Using $\Lambda := \lambda + \chi + \frac{ab}{a+\mu} - \beta$ (cf.~\eqref{eq:nos-reading-lambda}) and $\bar d \approx f'_{\mathrm{sat}}(v^*)+\beta-\lambda-\chi$, the determinant branch of \eqref{eq:kstar-gstar} gives
\begin{align}
g\,\rho(W) \;<\; \Lambda - f'_{\mathrm{sat}}(v^*) ,
\label{eq:net-small-signal}
\end{align}
equivalently $g\,\rho(W)+f'_{\mathrm{sat}}(v^*)<\Lambda$, which matches the stability proxy introduced in the scaling discussion. While the Perron-mode reduction yields tight thresholds in practice, conservative certificates can be obtained via Gershgorin discs and norm bounds; see \emph{methods}~\ref{sec:gershgorin} for the resulting inequalities and their comparison to measured $g_\star$ in Table~\ref{tab:rho_gstar}.

\begin{figure}[H]
\centering
\includegraphics[width=\columnwidth]{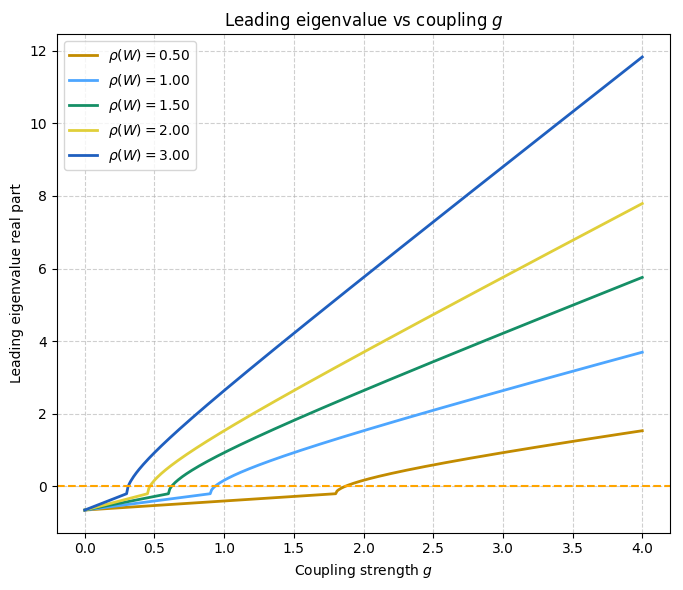}\hfill
\includegraphics[width=.\columnwidth]{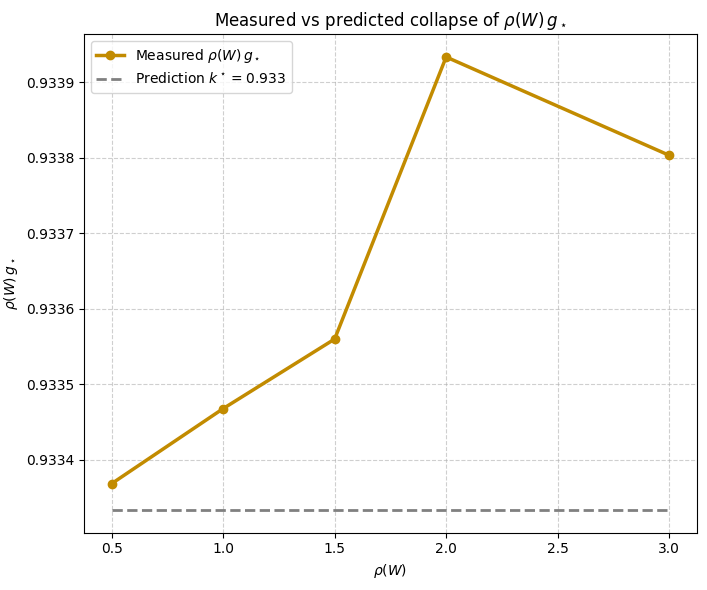}
\caption{Spectral verification for homogeneous coupling $G=gW$.
Left: real part of the leading eigenvalue of the block Jacobian versus $g$ for several $\rho(W)$; the zero crossing defines $g_\star$.
Right: collapse of $\rho(W)\,g_\star$ to the scalar threshold $k^\star$ predicted by the Perron–mode reduction in \eqref{eq:kstar-gstar}.
The small spread reflects finite size and random matrix variability.}
\label{fig:spec_collapse}
\end{figure}

\paragraph{Networking interpretation and control.}
Figure~\ref{fig:spec_collapse} shows that the first loss of stability is governed by the Perron mode of $W$. As coupling $g$ increases, the leading eigenvalue crosses the imaginary axis at $g_\star$, marking the onset of coherent queue growth along the dominant influence path. The collapse of $\rho(W)\,g_\star$ onto $k^\star$ confirms the separation of concerns in \eqref{eq:kstar-gstar}: $\rho(W)$ captures how strongly topology reinforces load; $k^\star$ depends only on node physics through $(\bar d,a,b,\mu)$. Operationally, the levers are direct. Reduce $\rho(W)$ by reweighting heavy fan–in or pruning selected edges. Reduce $g$ through gain scheduling on stressed regions or policy throttles that weaken $I\mapsto v$ sensitivity. Raise the drain margin $\Lambda$ by increasing service $\lambda$, adding subthreshold damping $\chi$, or shortening recovery via $a$ (with $b,\mu$ setting the trade). Because $f'_{\mathrm{sat}}(v^*)$ decreases as saturation strengthens (larger $\kappa$), bounded excitability expands the stable range in \eqref{eq:net-small-signal}.

\paragraph{Finite-size and heterogeneity effects.}
The rescaled thresholds are not perfectly flat. They sit slightly above $k^\star$ across $\rho(W)$, with larger deviations in smaller or more heterogeneous graphs. Two mechanisms explain this percent-level bias. First, nearby non-Perron modes and row–sum variability shift the leading root when the Perron vector localises on hubs, a common feature of heavy-tailed topologies. Second, discrete sweeps of $\Re\lambda_{\max}(g)$ and mild non-normal amplification nudge the numerical crossing. These are finite-size corrections, not a change of scaling, and they diminish with larger $N$ or more homogeneous weights. For conservative guarantees during early deployment, replace $\rho(W)$ by norm bounds or apply Gershgorin discs to $\mathcal{J}$ to obtain stricter, topology-aware thresholds. For deployments requiring analytic guarantees, \emph{methods}~\ref{sec:gershgorin} derives Gershgorin and $\|W\|_\infty$-based bounds and reports their conservatism relative to measured $g^\star$ (Table~\ref{tab:rho_gstar}). A detailed discussion on the bifurcation structure for \textit{NOS}, and its operational interpretation, is given in \ref{sec:bifurcationOP}.

\subsection{Conservative stability via Gershgorin and norm bounds}
\label{sec:gershgorin}

Let $G=gW$ with $W_{ii}=0$ and $\rho(W)$ the spectral radius after normalisation (we use $\rho(W)=1$ in experiments). Linearising \eqref{eq:nos-v-eq}–\eqref{eq:nos-u-eq} at $(v^*,u^*)$ gives a $2N\times 2N$ block Jacobian $\mathcal{J}$. For the top block rows $1{:}N$,
\(
\bar d_i := f'_{\mathrm{sat}}(v_i^*) + \beta - \lambda - \chi,
\)
is the diagonal entry, and the off-diagonals are $g W_{ij}$ for $j\neq i$, plus the coupling to $u_i$ with derivative $-1$. Thus the Gershgorin centre and radius are
\(
c_i=\bar d_i,\qquad R_i = 1 + |g| \sum_{j\neq i} |W_{ij}|.
\)
A sufficient condition for those discs to lie strictly in the left half-plane is
\(
\Re(c_i)+R_i<0\quad\text{for all }i,
\)
which yields
\(
|g| \;<\; \min_i \frac{-\Re(\bar d_i)-1}{\sum_{j\neq i}|W_{ij}|}.
\)
For the bottom block rows $N{+}1{:}2N$, the centre is $-(a+\mu)$ and the radius is $|ab|$, so a sufficient condition is 
\(
-(a+\mu)+|ab|<0.
\)
Using $\sum_{j\neq i}|W_{ij}|\le\|W\|_\infty$ gives the norm variant
\(
|g| \;<\; \frac{-\min_i \Re(\bar d_i) - 1}{\|W\|_{\infty}},
\qquad\text{and}\qquad |ab|<a+\mu.
\)
These conditions are sufficient (not necessary) and are conservative relative to observed thresholds.

\begin{table*}[htbp]
\centering
\caption{Instability thresholds versus network spectral radius for homogeneous coupling $G=gW$. 
The measured $g_\star$ is the first zero crossing of the leading eigenvalue real part. 
The product $\rho(W)\,g_\star$ agrees with the Perron-mode prediction $k^\star$. 
Gershgorin bounds are conservative; negative values mean the bound cannot certify stability for any $g\ge 0$. 
The heuristic $g_{\mathrm{heur}}=k^\star/\|W\|_\infty$ matches the Perron scaling but lacks a guarantee. 
The last column checks $|ab|<(a+\mu)$.}
\label{tab:rho_gstar}
\begin{tabular}{cccccccc}
\toprule
$\rho(W)$ & $g_\star$ (meas.) & $\rho(W) g_\star$ & $g_{\text{bound}}$ (Gersh.) & $g_{\text{heur}}$ & $k^\star$ (scalar) & $(\rho g_\star)/k^\star$ & $|ab|<(a+\mu)$ \\
\midrule
0.5 & 1.881799 & 0.940900 & $-0.526097$ & 0.554642 & 0.940878 & 1.000023 & True \\
1.0 & 0.940924 & 0.940924 & $-0.263049$ & 0.277321 & 0.940878 & 1.000049 & True \\
1.5 & 0.627452 & 0.941177 & $-0.175366$ & 0.184881 & 0.940878 & 1.000319 & True \\
2.0 & 0.470484 & 0.940969 & $-0.131524$ & 0.138660 & 0.940878 & 1.000097 & True \\
3.0 & 0.314042 & 0.942125 & $-0.087683$ & 0.092440 & 0.940878 & 1.001326 & True \\
\bottomrule
\end{tabular}
\end{table*}

Table~\ref{tab:rho_gstar} complements the figures: the measured thresholds obey the Perron-mode scaling ($\rho(W)g_\star\approx k^\star$), while Gershgorin discs provide only conservative certificates. From a networking standpoint, the stability boundary is captured by interpretable quantities: the effective small-signal slope (through $\bar d_i$), service and recovery $(\lambda,a,\mu,b)$ setting $k^\star$, and the graph’s spectral radius encoding topology.
This bridges low-level queue dynamics with high-level network design, showing how structural and service parameters jointly control the emergence of congestion instabilities.

\section{Bifurcation Structure, Operational Interpretation, and Finite-Size Behaviour}
\label{sec:bifurcationOP}
Loss of stability as the mean input $I$ increases (or as effective coupling $k=g\,\rho(W)$ grows) is produced by two local mechanisms. They can be diagnosed either from the scalar equilibrium map $F(v)=f_{\mathrm{sat}}(v)+Lv+C(I)$ (fold tests) or from the $2\times2$ Jacobian at the operating point (Routh–Hurwitz tests).

\begin{enumerate}
  \item \textbf{Saddle--node (SN).} The stable and unstable equilibria coalesce and disappear when the discriminant in \eqref{eq:disc} satisfies $D_i=0$. Beyond this point there is no steady operating level: $v$ drifts upward until resets and leakage dominate. In network terms, offered load or coherent reinforcement pushes a node beyond its admissible queue set; the model predicts sustained congestion with intermittent resets rather than recovery to a fixed level.
  \item \textbf{Hopf.} The Jacobian trace crosses zero while the determinant remains positive ($T_i=0$, $\Delta_i>0$), creating a small oscillatory mode. In network terms, queues enter a self-excited cycle whose phase can entrain neighbors through $W$, producing rolling congestion waves that propagate along high-gain paths.
\end{enumerate}

\paragraph{Parameter trends and admissibility.}
Continuation of equilibria $v^*$ versus $I$ reproduces queueing intuition and clarifies which onset is available. Increasing the service $\lambda$ or the subthreshold damping $\chi$ shifts both SN and Hopf onsets to larger $I$ (more headroom); increasing excitability $\alpha$ shifts them to smaller $I$ (less headroom). The recovery coupling $b$ controls whether a Hopf branch is admissible: the linear feedback $ab$ must exceed the decay $(a+\mu)^2$ for an oscillatory onset to exist; otherwise only an SN occurs. Figure~\ref{fig:VI_bif} shows $v^*(I)$ under sweeps of $\lambda$, $\alpha$, and $b$; marker coordinates are listed in Table~\ref{tab:continuation-markers}. Near-coincident markers correspond to the codimension-two neighborhood $ab\approx(a+\mu)^2$, where a small change in recovery or damping switches the dominant failure mode from smooth tipping (SN) to oscillatory bursts (Hopf).

\noindent\textit{Empirical collapse.}
Across graphs, the measured threshold obeys
\begin{align}
\rho(W)\,g_\star \;\approx\; 
\min\!\Bigl\{(a+\mu)-\bar d,\ \frac{ab}{a+\mu}-\bar d\Bigr\},
\label{eq:collapse_gstar}
\end{align}
with small upward deviations when the Perron vector localises or row-sums vary; cf. Fig.~\ref{fig:spec_collapse}.

\paragraph{Networking interpretation.}
The two onsets have distinct operational signatures. An SN route produces a clean loss of a steady queue level and sustained saturation. Telemetry shows a monotone climb in $v$ with resets, increased delay variance, and persistent ECN/marking without a clear period. A Hopf route produces narrowband oscillations: queues and drop/mark rates cycle with a characteristic period $\sim 2\pi/\sqrt{\Delta_i-(T_i/2)^2}$ at onset, and neighboring nodes oscillate in phase along the Perron mode of $W$. These patterns matter for mitigation. To extend stable throughput, raise $\lambda$ or $\chi$, or reduce $\alpha$ through stronger saturation (larger $\kappa$). To avoid self-sustained burstiness, keep $ab$ comfortably larger than $(a+\mu)^2$ by shortening recovery time ($a$), reducing recovery sensitivity ($b$) when needed, or adding passive decay ($\mu$). Each lever maps to standard controls: scheduler and line-rate settings ($\lambda$), AQM/leak policies ($\chi$), congestion ramp shaping ($\alpha,\kappa$), and backoff or pacing aggressiveness ($a,b,\mu$).

\paragraph{Finite-size effects and coherence.}
In finite graphs the transition is slightly rounded but sharpens with size. The measured coupling threshold aligns with the Perron-mode prediction $k^\star$ (cf. \eqref{eq:kstar-gstar}), with small upward deviations when the Perron vector localises on hubs or when secondary modes lie close in the spectrum. These are percent-level corrections that diminish in larger or more homogeneous topologies. Practically, one can monitor the rescaled margin $\rho(W)\,g - k^\star$: values near zero predict imminent entrainment of queues and the appearance of coherent oscillations along the dominant influence path. The \emph{methods}~\ref{sec:synchrony} quantifies network coherence via an order parameter $R(t)$, confirming that the onset clusters near $k^\star$ and sharpens with $N$ (Fig.~\ref{fig:R_vs_k}, Table~\ref{tab:R_vs_k}).

\begin{corollary}[Spectral collapse of the coupling threshold]
For homogeneous coupling $G=gW$, the instability threshold extracted from the block
Jacobian satisfies \eqref{eq:collapse_gstar},
where $\bar d=f'_{\mathrm{sat}}(v^*)+\beta-\lambda-\chi$. 
Percent-level upward deviations arise in finite or heterogeneous graphs when the Perron vector localises 
or row-sum variability is high, consistent with Fig.~\ref{fig:spec_collapse}.
\end{corollary}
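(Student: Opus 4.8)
\noindent\textit{Proof proposal.}
The plan is to reduce the $2N\times2N$ eigenproblem for $\mathcal{J}$ in \eqref{eq:blockJ} to the family of $2\times2$ blocks in \eqref{eq:Jeff}. In the homogeneous case $v_i^*\equiv v^*$ one has $D=\bar d\,I_N$ with $\bar d=f'_{\mathrm{sat}}(v^*)+\beta-\lambda-\chi$, so every block of $\mathcal{J}$ is a polynomial in $W$ (namely $\bar d I_N+gW$, $-I_N$, $abI_N$, $-(a+\mu)I_N$). Taking a Schur factorisation $W=\Phi T\Phi^{*}$ with $T$ upper triangular and $T_{\ell\ell}=\mu_\ell$, conjugating $\mathcal{J}$ by $\Phi\oplus\Phi$, and then interleaving the coordinates as $(v_1,u_1,\dots,v_N,u_N)$ turns $\mathcal{J}$ into a block upper-triangular matrix whose $2\times2$ diagonal blocks are exactly $J_{\mathrm{eff}}(g\mu_\ell)$. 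Hence $\operatorname{spec}(\mathcal{J})=\bigcup_\ell\operatorname{spec}\bigl(J_{\mathrm{eff}}(g\mu_\ell)\bigr)$, and global small-signal stability is equivalent to each $J_{\mathrm{eff}}(g\mu_\ell)$ being Hurwitz.

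Next I would analyse a single block. Writing $z:=\bar d+k$ for $J_{\mathrm{eff}}(k)$, its trace is $z-(a+\mu)$, its determinant is $ab-(a+\mu)z$, and its eigenvalues are $\tfrac12\bigl[(z-(a+\mu))\pm\sqrt{(z+(a+\mu))^{2}-4ab}\,\bigr]$. For real $z$ the rightmost eigenvalue's real part is strictly increasing in $z$ and first reaches zero either when the trace vanishes (oscillatory/Hopf onset) or when the determinant vanishes (fold onset), i.e.\ at $z=\min\{(a+\mu),\,ab/(a+\mu)\}$. Since $W\ge0$ is irreducible, Perron--Frobenius gives a simple real dominant eigenvalue $\mu_{\max}=\rho(W)>0$, and $\Re(g\mu_\ell)\le g\rho(W)$ for every $\ell$; the standard dispersion-relation/comparison argument (coupling enters only through the $v$-variable via a nonnegative matrix, monotone stability boundary in $\Re k$) then shows the rightmost eigenvalue of $\mathcal{J}$ is attained at the Perron block, so the network first loses stability at $g\rho(W)+\bar d=\min\{(a+\mu),\,ab/(a+\mu)\}$. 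Rearranging gives $g_\star=k^\star/\rho(W)$ with $k^\star=\min\{(a+\mu)-\bar d,\ ab/(a+\mu)-\bar d\}$, which is exactly \eqref{eq:collapse_gstar}. (If $\bar d$ is already large enough that $k^\star\le0$, the uncoupled node is itself unstable and the statement is vacuous; this corresponds to a violation of the single-node Routh--Hurwitz test \eqref{eq:RH}.) This identity is confirmed by Fig.~\ref{fig:spec_collapse}: the left panel shows the leading eigenvalue of $\mathcal{J}$ crossing zero at $g_\star$, and the right panel shows $\rho(W)\,g_\star$ collapsing onto $k^\star$.

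For the deviation clause I would treat heterogeneity and finite size perturbatively. Writing $D=\bar d\,I_N+\widetilde D$ with $\bar d$ the Perron-weighted average of \eqref{eq:Jeff}, the perturbation $\widetilde D$ is mean-zero against the Perron mode, so its first-order effect on that mode's eigenvalue vanishes and the induced shift in $g_\star$ is second order in $\|\widetilde D\|$, amplified by the inverse spectral gap to the sub-dominant modes of $gW$ — precisely the quantity that grows when the Perron vector localises on hubs or row sums are dispersed. A Bauer--Fike type estimate with the eigenvector-conditioning factor $\kappa(\Phi)$ of the (then non-normal) $gW$, together with the finite granularity of the $g$-sweep used to locate the zero crossing of $\Re\lambda_{\max}(g)$ and mild non-normal transient amplification, accounts for the systematic few-percent \emph{upward} offset of the measured $\rho(W)\,g_\star$ relative to $k^\star$; I would keep these at leading order only.

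The main obstacle is exactly this heterogeneous/non-normal regime. The clean block-triangularisation is exact only when the equilibria are homogeneous, so outside that case \eqref{eq:collapse_gstar} is an asymptotic identity whose error term genuinely depends on the spectral gap and on $\kappa(\Phi)$, and a bound that is simultaneously rigorous and as tight as the observed percent-level discrepancy would require exploiting the sign structure of $W$ more sharply than a generic perturbation bound allows. For a rigorous-but-conservative statement I would instead fall back on the Gershgorin-disc and $\|W\|_\infty$ certificates of Appendix~\ref{app:gershgorin}, which bracket $g_\star$ from below without appealing to the Perron-mode reduction.
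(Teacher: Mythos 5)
Your proposal is correct and reaches the stated identity, but it takes a genuinely different and more ambitious route than the paper. The paper's own justification is the Perron-mode \emph{projection}: it projects the block Jacobian \eqref{eq:blockJ} onto the Perron eigenvector to obtain $J_{\mathrm{eff}}(k)$ in \eqref{eq:Jeff}, applies Routh--Hurwitz to get $k^\star$ and $g_\star \approx k^\star/\rho(W)$ in \eqref{eq:kstar-gstar}, and then treats the collapse \eqref{eq:collapse_gstar} as an empirical fact verified in Fig.~\ref{fig:spec_collapse}, with the deviation clause explained qualitatively (localised Perron vector, row-sum variability, sweep granularity). You instead make the homogeneous case exact: since $D=\bar d\,I_N$ renders all four blocks polynomials in $W$, simultaneous (Schur) triangularisation gives $\operatorname{spec}(\mathcal{J})=\bigcup_\ell\operatorname{spec}\bigl(J_{\mathrm{eff}}(g\mu_\ell)\bigr)$, so the threshold statement becomes a theorem about which block first leaves the Hurwitz region, and your perturbative treatment of $\widetilde D$ (second-order shift, amplified by the inverse spectral gap and eigenvector conditioning) upgrades the paper's qualitative deviation clause to a mechanism. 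This buys rigour and a cleaner separation between the exact homogeneous identity and the approximate heterogeneous one; the paper's projection argument buys generality, since its $\bar d$ is the Perron-weighted average and no homogeneity of $v_i^*$ is needed to state \eqref{eq:kstar-gstar}. One caveat in your argument: the step asserting that the Perron block supplies the rightmost eigenvalue because $\Re(g\mu_\ell)\le g\rho(W)$ and "the stability boundary is monotone in $\Re k$" is not automatic — for complex $k$ the real parts of the eigenvalues of $J_{\mathrm{eff}}(k)$ depend on $\Im k$ as well (through the square-root term), so a non-Perron mode with large imaginary part could in principle cross first; for nonnegative, not strongly periodic $W$ and the parameter ranges used here this does not occur, and since the corollary is stated with "$\approx$" and explicitly allows percent-level deviations, the gap is tolerable, but you should either prove the comparison for this specific $2\times 2$ family or flag it as the same kind of finite-size/heterogeneity correction the paper absorbs empirically. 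Your fallback to the Gershgorin and $\|W\|_\infty$ certificates of Appendix~\ref{app:gershgorin} for conservative guarantees matches the paper's own recommendation.
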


\begin{figure*}[h]
\centering
\includegraphics[width=0.95\textwidth]{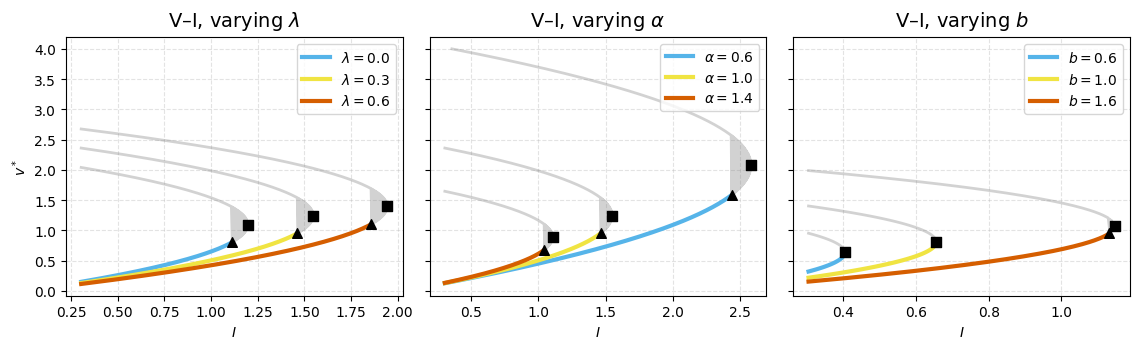}
\caption{Equilibrium $v^*$ versus input $I$ with saddle--node ($\blacksquare$) and Hopf ($\blacktriangle$) markers. 
(\textbf{Left}) Increasing service rate $\lambda$ shifts both onsets to higher $I$. 
(\textbf{Middle}) Increasing excitability $\alpha$ advances both onsets. 
(\textbf{Right}) Varying recovery coupling $b$ changes the balance between $ab$ and $a+\mu$: when $ab>(a+\mu)^2$ an oscillatory Hopf branch is admissible, otherwise only an SN appears. 
Stable branches are shown in colour; unstable in faint grey. The exact marker coordinates used here are reported in Table~\ref{tab:continuation-markers}.}
\label{fig:VI_bif}
\end{figure*}

\begin{table*}[t]
\centering
\caption{Continuation markers used in Fig.~\ref{fig:VI_bif}. 
Listed are the input levels $I_{\mathrm{SN}}$ and $I_{\mathrm{H}}$ and the corresponding equilibria $v^*_{\mathrm{SN}}, v^*_{\mathrm{H}}$. 
Rows with NaN in the Hopf columns occur where $ab\le(a+\mu)^2$, so the oscillatory onset is not admissible under the chosen parameters.}
\label{tab:continuation-markers}
\begin{tabular}{lrrrrrll}
\toprule
Panel & $\lambda$ & $\alpha$ & $b$ & $I_{\mathrm{SN}}$ & $v_{\mathrm{SN}}^*$ & $I_{\mathrm{H}}$ & $v_{\mathrm{H}}^*$ \\
\midrule
lambda sweep & 0.0 & 1.0 & 2.0 & 1.198 & 1.095 & 1.111 & 0.800 \\
lambda sweep & 0.3 & 1.0 & 2.0 & 1.549 & 1.245 & 1.462 & 0.950 \\
lambda sweep & 0.6 & 1.0 & 2.0 & 1.945 & 1.395 & 1.858 & 1.100 \\
alpha sweep  & 0.3 & 0.6 & 2.0 & 2.582 & 2.074 & 2.437 & 1.583 \\
alpha sweep  & 0.3 & 1.0 & 2.0 & 1.549 & 1.245 & 1.462 & 0.950 \\
alpha sweep  & 0.3 & 1.4 & 2.0 & 1.106 & 0.889 & 1.044 & 0.679 \\
b sweep      & 0.3 & 1.0 & 0.6 & 0.404 & 0.636 & NaN & NaN \\
b sweep      & 0.3 & 1.0 & 1.0 & 0.656 & 0.810 & NaN & NaN \\
b sweep      & 0.3 & 1.0 & 1.6 & 1.146 & 1.071 & 1.132 & 0.950 \\
\bottomrule
\end{tabular}
\end{table*}


\section{Equilibrium and boundedness}
\label{subsec:nos-equilibrium}

At steady state we eliminate \(u_i\) using \(u_i=\tfrac{ab}{a+\mu}v_i\) in \eqref{eq:nos-u-eq} and obtains a scalar balance:
\begin{align}
F(v^*) &:= f_{\mathrm{sat}}(v^*) + L\, v^* + C \;=\; 0,
\label{eq:nos-F}
\end{align}
where \(L\) and \(C\) collect the effective linear and constant drives implied by \eqref{eq:nos-v-eq}. The next result provides a sufficient, checkable criterion.

\begin{lemma}[Existence and uniqueness]
\label{lem:nos-exist-unique}
Let \(\mathcal{I}\subset\mathbb{R}\) be the admissible queue interval, for example \([0,1]\).
If
\begin{align}
\sup_{v\in\mathcal{I}} \bigl(f'_{\mathrm{sat}}(v)+L\bigr) &< 0,
\label{eq:nos-monotone}
\end{align}
then \(F\) is strictly \emph{decreasing} on \(\mathcal{I}\) and has at most one root.
If, in addition, there exist \(v_{-},v_{+}\in\mathcal{I}\) with \(F(v_-)\,F(v_+)\le 0\),
then a unique equilibrium \(v^*\in\mathcal{I}\) exists.

A sufficient global condition for \eqref{eq:nos-monotone} (using \( \sup_v f'_{\mathrm{sat}}(v)
\le \frac{3\sqrt{3}}{8}\frac{\alpha}{\sqrt{\kappa}} \) for \(\kappa>0\)) is
\begin{align}
L &< -\,\frac{3\sqrt{3}}{8}\,\frac{\alpha}{\sqrt{\kappa}} .
\label{eq:nos-L-sufficient}
\end{align}
\end{lemma}

\noindent\textit{Discussion.}  
Condition \eqref{eq:nos-monotone} states that net drain dominates the steepest admissible excitability slope. The bound \eqref{eq:nos-L-sufficient} follows from \eqref{eq:nos-fsat-deriv-max}. It is conservative, easy to verify from telemetry, and stable against modest estimation error.

\begin{corollary}[Algebraic condition and networking view]
\label{cor:nos-algebra}
For the coefficients in \eqref{eq:nos-v-eq}–\eqref{eq:nos-u-eq}, a sufficient condition for a unique equilibrium in \([0,1]\) is
\begin{align}
\beta - \lambda - \chi - \frac{ab}{a+\mu} &> -\,\frac{3\sqrt{3}}{8}\,\frac{\alpha}{\sqrt{\kappa}},
\label{eq:nos-cor-raw}
\end{align}
or, equivalently,
\begin{align}
\lambda + \chi + \frac{ab}{a+\mu} &> \frac{3\sqrt{3}}{8}\,\frac{\alpha}{\sqrt{\kappa}} - \beta .
\label{eq:nos-cor-service}
\end{align}
Operationally, the left side aggregates service and damping, while the right side is the worst-case excitability minus any linear relief \(\beta\). At full buffer, service must dominate arrivals; at empty buffer, leakage should not push the queue negative. Hence a single operating point exists within [0, 1].

\end{corollary}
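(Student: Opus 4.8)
The plan is to reduce the corollary to a direct application of Lemma~\ref{lem:nos-exist-unique} by making the effective linear coefficient $L$ and constant $C$ of \eqref{eq:nos-F} explicit in the coefficients of \eqref{eq:nos-v-eq}--\eqref{eq:nos-u-eq}. First I would insert the steady-state relation $u^{*}=\tfrac{ab}{a+\mu}\,v^{*}$ coming from \eqref{eq:nos-u-eq} into the vanishing right-hand side of \eqref{eq:nos-v-eq}, evaluated at a fixed mean input $\bar I_i$, and collect the coefficient of $v^{*}$. This gives
\[
L=\beta-\lambda-\chi-\frac{ab}{a+\mu},\qquad C=\gamma+\bar I_i+\chi\,v_{\mathrm{rest}},
\]
so that the scalar balance \eqref{eq:nos-F} reads $F(v)=f_{\mathrm{sat}}(v)+Lv+C$. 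Substituting this $L$ into the global sufficient condition \eqref{eq:nos-L-sufficient} of Lemma~\ref{lem:nos-exist-unique} reproduces exactly \eqref{eq:nos-cor-raw}, and moving $\beta$ to the other side and isolating the aggregated service/recovery group gives the equivalent form \eqref{eq:nos-cor-service}. Under this inequality $f_{\mathrm{sat}}'(v)+L>0$ on the admissible interval --- and on $[0,1]$ one has $f_{\mathrm{sat}}'(v)\ge 0$ anyway, so $F'(v)\ge L$ --- hence $F$ is strictly increasing and has at most one zero.

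For existence of a zero inside $[0,1]$ I would supply the endpoint sign change that Lemma~\ref{lem:nos-exist-unique} requires in addition to monotonicity. Since $f_{\mathrm{sat}}(0)=0$ and $f_{\mathrm{sat}}(1)=\alpha/(1+\kappa)$, the balance at the buffer extremes is $F(0)=C$ and $F(1)=\alpha/(1+\kappa)+L+C$. The two operational constraints named in the corollary --- at an empty buffer leakage must not drive $v$ below $0$, and at a full buffer service must dominate arrivals --- are precisely $F(0)\ge 0$ and $F(1)\le 0$. Since $F$ is continuous (indeed smooth), $F(0)\,F(1)\le 0$, so the intermediate value theorem produces a root $v^{*}\in[0,1]$; strict monotonicity makes it unique, which is the ``single operating point in $[0,1]$'' asserted in the statement.

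I expect the existence step, not the algebra, to be the delicate part: the displayed inequalities \eqref{eq:nos-cor-raw}--\eqref{eq:nos-cor-service} control only the slope of $F$, so by themselves they give uniqueness but say nothing about where $F$ sits. A fully rigorous version must therefore either promote the boundary sign conditions on $C=\gamma+\bar I_i+\chi v_{\mathrm{rest}}$ and on $\alpha/(1+\kappa)+L+C$ to explicit hypotheses, or argue that they hold automatically in the calibrated operating regime --- nonnegative telemetry offset at $v=0$, and the finite excitability ceiling $\alpha/\kappa$ unable to overcome net drain at saturation. Two further points I would tighten while writing it out: on $[0,1]$ the infimum of $f_{\mathrm{sat}}'$ sits at $v=0$ and equals $0$, so strict monotonicity there really asks for $L\ge 0$, i.e.\ $\lambda+\chi+\tfrac{ab}{a+\mu}\ge\beta$, which is slightly stronger than the interval-agnostic bound \eqref{eq:nos-L-sufficient}; and because $\bar I_i$ enters $C$, the result is genuinely per-node at a fixed mean offered load, so the planning contour in $\bigl(\lambda+\chi+\tfrac{ab}{a+\mu},\ \tfrac{3\sqrt3}{8}\,\alpha/\sqrt{\kappa}\bigr)$ should be read at that load.
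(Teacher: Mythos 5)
Your route is the paper's own implicit one: write $L=\beta-\lambda-\chi-\tfrac{ab}{a+\mu}$ and $C=\gamma+\chi v_{\mathrm{rest}}+\bar I_i$ explicitly, invoke Lemma~\ref{lem:nos-exist-unique}, and supply existence from endpoint sign conditions that the corollary only phrases operationally. Your two caveats are the right ones to raise: the displayed inequality constrains only the slope of $F$ in \eqref{eq:nos-F}, so existence must come from explicit boundary hypotheses, and on $[0,1]$ the infimum of $f'_{\mathrm{sat}}$ is $0$ (at $v=0$), so \eqref{eq:nos-L-sufficient} does not by itself give $\inf_{[0,1]}\bigl(f'_{\mathrm{sat}}(v)+L\bigr)>0$.

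However, two sign-level defects keep the argument from closing. First, \eqref{eq:nos-cor-raw} and \eqref{eq:nos-cor-service} are not equivalent, and ``moving $\beta$ to the other side'' cannot make them so: with $M:=\tfrac{3\sqrt{3}}{8}\,\alpha/\sqrt{\kappa}$, \eqref{eq:nos-cor-raw} rearranges to $\lambda+\chi+\tfrac{ab}{a+\mu}<\beta+M$ (an upper bound on the drain), while \eqref{eq:nos-cor-service} is $\lambda+\chi+\tfrac{ab}{a+\mu}>M-\beta$ (a lower bound); these are different half-lines. The slip originates in the statement, but asserting the equivalence as an algebraic step is a gap a proof must confront rather than reproduce. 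Second, your existence step is internally inconsistent: you argue (conditionally) that $F$ is strictly increasing and then impose $F(0)\ge 0$ and $F(1)\le 0$; a strictly increasing $F$ with $F(0)\ge 0$ forces $F(1)>0$, so the sign change you feed to the intermediate value theorem cannot occur. The endpoint pattern $F(0)=C\ge 0$, $F(1)\le 0$ encoded by the corollary's networking reading belongs to the \emph{decreasing} branch: if the net drain dominates the steepest excitability slope, $\lambda+\chi+\tfrac{ab}{a+\mu}-\beta>M$ (i.e.\ $L<-M$), then $F'(v)=f'_{\mathrm{sat}}(v)+L<0$ on $[0,1]$, $F$ is strictly decreasing, and $F(0)\ge 0\ge F(1)$ yields the unique operating point; that is the consistent repair and it matches the prose ``service must dominate arrivals at full buffer''. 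Your own caveat that strict increase on $[0,1]$ needs $L\ge 0$ points to the opposite branch and cannot be combined with your endpoint signs (note also $L\ge 0$ means $\beta\ge\lambda+\chi+\tfrac{ab}{a+\mu}$, not the reverse as written).
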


\paragraph{Engineering rule of thumb.}
\label{par:nos-rule}
Choose \(\kappa\) so that the knee of \(f_{\mathrm{sat}}\) lies near the full-buffer scale, for example \(1/\sqrt{\kappa}\approx 1\). Fit \(\alpha\) from small-signal curvature so that \(f_{\mathrm{sat}}(v)\approx \alpha v^2\) around typical loads. Then tune \(\lambda\) and \(\chi\) to satisfy
\begin{align}
\lambda + \chi + \frac{ab}{a+\mu} \;\ge\; \frac{3\sqrt{3}}{8}\,\frac{\alpha}{\sqrt{\kappa}} - \beta \;+\; \varepsilon,
\label{eq:nos-rule}
\end{align}
with a small safety margin \(\varepsilon>0\) to absorb burstiness. This keeps the operating point in the unique, stable region and reduces sensitivity to trace noise.

\section{Non-dimensionalisation and scaling}
\label{app:nos-scaling}

Operational traces arrive with different bin widths, device rates, and buffer sizes. To make tuning portable across such heterogeneity, we rescale state and time by fixed references \(V\) and \(T\), and work with
\begin{equation}
v(t) = V\,\tilde v(\tilde t), \qquad t = T\,\tilde t,
\label{eq:nos-scale-def}
\end{equation}
so that \(\tilde v\) and \(\tilde t\) are dimensionless queue level and time. This scaling lets parameters fitted on sites with different sampling and buffer sizes be compared on a common basis and leaves the network-level stability proxy
\(
g\,\rho(W)\,+f'_{\mathrm{sat}}(v^*) < \Lambda
\)
invariant. 
Choose \(V\) as the full-buffer level used in the queue normalisation (for example, bytes or packets mapped to \(v\in[0,1]\)), and choose \(T\) as the dominant local timescale: the scheduler epoch, the service drain time \(1/\lambda\), or the telemetry bin width when that is the binding constraint. With \eqref{eq:nos-scale-def}, the left-hand side of \eqref{eq:nos-v-eq} satisfies
\begin{align}
\frac{dv}{dt} &= \frac{V}{T}\,\frac{d\tilde v}{d\tilde t},
\label{eq:nos-scale-lhs}
\end{align}
and each term on the right-hand side of \eqref{eq:nos-v-eq} is re-expressed as a dimensionless group. The bounded excitability transforms as
\begin{align}
\begin{split}
f_{\mathrm{sat}}(v) \;=\; \frac{\alpha\,v^2}{1+\kappa v^2}
\;\rightarrow\;
\tilde f_{\mathrm{sat}}(\tilde v) \\ \;=\; \frac{T}{V}\,f_{\mathrm{sat}}(V\tilde v) \;=\; 
\frac{\tilde\alpha\,\tilde v^2}{\,1+\tilde\kappa\,\tilde v^2\,},
\\
\tilde\alpha \;=\; \alpha T V,\;\; \tilde\kappa \;=\; \kappa V^2 .
\label{eq:nos-scale-fsat}
\end{split}
\end{align}

Linear and constant terms scale as
\begin{align}
\begin{split}
\tilde\beta \;=\; \beta T, \quad
\tilde\lambda \;=\; \lambda T, \quad
\tilde\chi \;=\; \chi T, \\
\tilde\gamma \;=\; \frac{\gamma T}{V}, \quad
\tilde v_{\mathrm{rest}} \;=\; \frac{v_{\mathrm{rest}}}{V}.
\label{eq:nos-scale-linear}
\end{split}
\end{align}
The recovery dynamics \eqref{eq:nos-u-eq} become
\begin{align}
\begin{split}
\frac{du}{dt} \;=\; a(bv-u) - \mu u
\;\rightarrow\;
\frac{d\tilde u}{d\tilde t} \;=\; 
\tilde a\bigl(\tilde b\,\tilde v - \tilde u\bigr) \\ - \tilde\mu\,\tilde u,\;
\tilde a \;=\; aT,\;\; \tilde\mu \;=\; \mu T,\; 
\tilde b \;=\; b,\;\; \tilde u \;=\; \frac{u}{V}
\label{eq:nos-scale-u}
\end{split}
\end{align}
and the graph-local input \eqref{eq:nos_start_input} and its delayed form \eqref{eq:nos-input-delayed} scale as
\begin{align}
\begin{split}
\tilde I_i(\tilde t) \;=\; \frac{T}{V}\,I_i(t), \;\;\;\;
\tilde \tau_{ij} \;=\; \frac{\tau_{ij}}{T},
\\
\tilde w_{ij} \;=\; w_{ij} \;\; \text{(dimensionless by construction)}.
\label{eq:nos-scale-input}
\end{split}
\end{align}
For the shot-noise drive \eqref{eq:nos-shot}, amplitudes and smoothing follow
\begin{align}
\begin{split}
\tilde \eta_i(\tilde t) = \frac{T}{V}\,\eta_i(t) \;=\; \sum_{n=1}^{N_i(t)} \tilde A_{i,n}\,
\tilde\kappa_s\bigl(\tilde t - \tilde t_{i,n}\bigr),
\\
\tilde A_{i,n} \;=\; \frac{T}{V}\,A_{i,n},\;\;
\tilde\kappa_s(\tilde t) \;=\; e^{-\tilde t/\tilde\tau_s} H(\tilde t),\;\;
\tilde\tau_s \;=\; \frac{\tau_s}{T}.
\label{eq:nos-scale-shot}
\end{split}
\end{align}

With \eqref{eq:nos-scale-def}–\eqref{eq:nos-scale-shot}, raw telemetry from sites that sample at \(1\)\,ms and sites that sample at \(10\)\,ms are mapped to the same dimensionless dynamics. The same holds for devices with different buffers: picking \(V\) from the local full-buffer level preserves the meaning of \(\tilde v\in[0,1]\). Operators can therefore compare fitted parameters across topologies and hardware: \(\tilde\lambda\) reflects service relative to the chosen time base; \(\tilde a\) reflects recovery rate in scheduler units; \(\tilde\alpha\) and \(\tilde\kappa\) capture the curvature and knee of the congestion ramp independent of absolute queue size. Delays and gates adopt the same rule: \(\tilde\tau_{ij}\) is RTT-free propagation and processing in units of \(T\), while \(w_{ij}\) stays a policy or bandwidth proportion that is already dimensionless.

\paragraph{Reading guide and forward links.}
The local slope bound in \eqref{eq:nos-fsat-deriv-max} and the service–damping aggregate in \eqref{eq:nos-cor-service} feed directly into the small-signal analysis in \ref{app:Jstability-local} and the network coupling results in \S\ref{sec:global_stability}. Let \(\rho(W)\) be the spectral radius of the coupling matrix and \(g\) any global gain applied to \(W\). Linearisation of \eqref{eq:nos-v-eq}–\eqref{eq:nos-u-eq} about an operating point \(v^*\) yields a block Jacobian whose dominant network term is proportional to \(g\,\rho(W)\, +f'_{\mathrm{sat}}(v^*)\). Compare this against the net drain
\begin{align}
\Lambda &:= \lambda + \chi + \frac{ab}{a+\mu} - \beta ,
\label{eq:nos-reading-lambda}
\end{align}
which is the left side of \eqref{eq:nos-cor-service} minus the right side at the operating point. A convenient stability proxy is
\begin{align}
g\,\rho(W) &< \Lambda - f'_{\mathrm{sat}}(v^*) ,
\label{eq:nos-reading-threshold}
\end{align}
equivalently $g\,\rho(W)+f'_{\mathrm{sat}}(v^*)<\Lambda$. Under the rescaling \eqref{eq:nos-scale-def}, both $T\,f'_{\mathrm{sat}}$ and $T\,\Lambda$ are dimensionless, so \eqref{eq:nos-reading-threshold} is invariant and can be checked in either set of units. The proxy exposes the same levers used in operations. One can reduce \(\rho(W)\) by reweighting or sparsifying couplings, reduce \(g\) through gain scheduling, or raise \(\Lambda\) by increasing service \(\lambda\), damping \(\chi\), or the recovery rate \(a\) (with the trade-off set by \(b\) and \(\mu\)). It is useful to track the \emph{operational margin}
\begin{align}
\Delta_{\mathrm{net}} &:= \Lambda - f'_{\mathrm{sat}}(v^*) - g\,\rho(W) .
\label{eq:nos-reading-margin}
\end{align}

which we will use in \S\ref{sec:global_stability} to explain bifurcation onsets, headroom under topology changes, and how weight normalisation can enforce a fixed margin across deployments. Practical rules to meet a target margin are collected in \emph{methods}~\ref{sec:nos-design-guidance}.

Operational interpretations and data-driven initialisation of \textit{NOS} parameters from observed network metrics are summarised in Table~\ref{tab:param_combined} in \emph{methods}~\ref{defvals}. The consequences of the chosen scaling for parameter admissibility, together with recommended calibration conventions, follow the same dimensionless groups.

\section{Finite-size sharpening and synchrony order parameter}
\label{sec:synchrony}
Finite networks smooth the bifurcation transitions seen in the scalar reduction. We quantify collective timing with a Kuramoto-type order parameter
\(
R(t)\;=\;\frac{1}{N}\Bigl|\sum_{j=1}^N e^{\mathrm{i}\phi_j(t)}\Bigr|,
\)
where spike phases $\phi_j(t)$ are obtained by linear phase interpolation between successive spikes of node $j$ (phase resets to $0$ at a spike, advances to $2\pi$ at the next). Figure~\ref{fig:R_vs_k} plots the time average $\langle R\rangle$ against the effective coupling $k=g\,\rho(W)$ for several network sizes $N$. The curves show the expected S-shaped rise: for small $k$ the system is desynchronised and $\langle R\rangle$ is low; once $k$ passes a critical value the order parameter increases rapidly and then saturates.

The dashed line marks the Perron-mode prediction $k^\star$ from the linear analysis of \textit{NOS}. Across $N$, measured onsets cluster near $k^\star$. Finite-size effects smooth the transition for small $N$ and sharpen it as $N$ grows, while the turning point remains anchored by $k^\star$.

\paragraph{Networking interpretation.}
Synchrony corresponds to queueing cycles that align across nodes: burst build-up and drain events become temporally coordinated. Below $k^\star$ these cycles are weakly correlated and buffers clear largely independently. Near and above $k^\star$, coupling is strong enough for delayed spikes to propagate and reinforce, producing network-wide burst trains. Because $k=g\,\rho(W)$, stability can be preserved either by reducing the local coupling gain $g$ (e.g. weight attenuation or stronger subthreshold leak) or by reducing the spectral radius $\rho(W)$ (e.g. reweighting or sparsifying high-gain pathways). Both actions move the operating point away from the synchrony threshold.

\begin{figure}[H]
\centering
\includegraphics[width=\columnwidth]{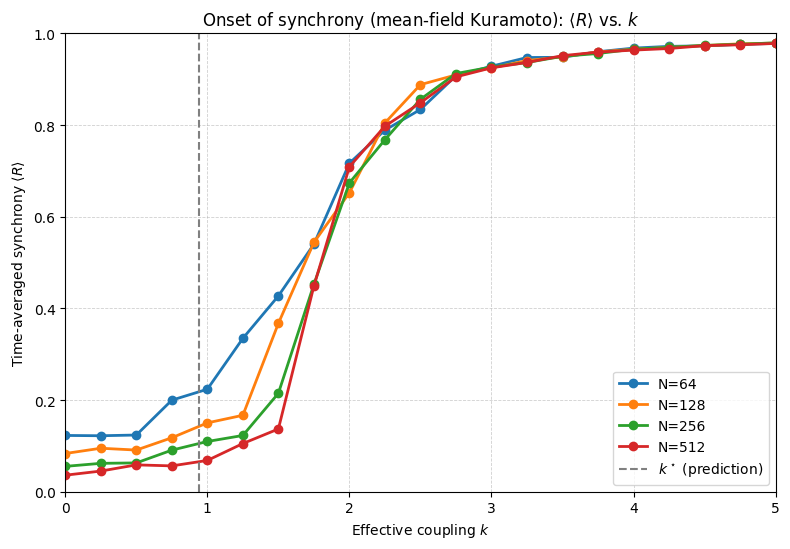}
\caption{Onset of synchrony in \textit{NOS}. Time-averaged order parameter $\langle R\rangle$ versus effective coupling $k=g\,\rho(W)$ for several network sizes $N$. The dashed line marks the Perron-mode prediction $k^\star$. The transition sharpens with $N$, while the onset remains close to $k^\star$.}
\label{fig:R_vs_k}
\end{figure}

\begin{table}[t]
\centering
\caption{Representative values of $\langle R\rangle$ at selected couplings $k$ and sizes $N$ (extracted from Fig.~\ref{fig:R_vs_k}). The increase of $\langle R\rangle$ with $k$ and the steeper change for larger $N$ reflect finite-size sharpening near $k^\star$.}
\label{tab:R_vs_k}
\begin{tabular}{ccc}
\toprule
$N$ & $k$ & $\langle R\rangle$ \\
\midrule
 64  & 0.0 & 0.100 \\
 64  & 1.0 & 0.207 \\
 64  & 2.0 & 0.822 \\
128  & 0.0 & 0.059 \\
128  & 1.0 & 0.148 \\
128  & 2.0 & 0.769 \\
256  & 0.0 & 0.061 \\
256  & 1.0 & 0.082 \\
256  & 2.0 & 0.755 \\
\bottomrule
\end{tabular}
\end{table}

\section{Zero-shot forecasting baselines (no labels)}
\label{subsec:zeroshot}

\paragraph{Setup (like-for-like, no training).}
All methods receive the same sliding window of arrival counts (last $L$ bins per node) and must predict the next queue sample $q_{t+1}$ \emph{without supervised training}.
We include: (\textbf{Phys.-Fluid}) the label-free fluid update $q_{t+1}=\mathrm{clip}(q_t+a_t-\mu\,\Delta t)$; 
(\textbf{MovingAvg}) a smoothed fluid variant; 
(\textbf{TGNN-smooth}) a single-hop temporal–graph smoother applied to arrivals before the fluid step; 
(\textbf{LIF-leaky}) a leaky-integrator forecaster; and 
(\textbf{NOS (calibr.)}) a single \emph{NOS} unit per node with \emph{arrival-only} calibration (no queue labels): a global input gain and per-node output scales are fitted to match light-load analytic means from M/M/1. 
In the run summarized here we obtained $\texttt{gain\_I}=0.60$ and the first three per-node scales $\{11.646,\,12.226,\,12.277\}$; the corresponding analytic light-load means for those nodes were $\{0.655,\,0.738,\,0.745\}$ (reported for reference only).

\paragraph{What the figure shows.}
Figure~\ref{fig:zeroshot_forecasts} overlays the \emph{true} queue on node~0 (black) with the five zero-shot forecasts. 
Phys.-Fluid (blue) tracks the envelope of bursts because it integrates the \emph{same} arrivals that drive the queueing simulator; 
MovingAvg (orange) lags and underestimates peaks; 
TGNN-smooth (green) damps spikes via spatial smoothing; 
LIF-leaky (red) behaves as a low-pass filter; 
NOS (purple) fires promptly at burst onsets and then resets, producing compact predictions around excursions.

\begin{figure*}[h]
  \centering
  \includegraphics[width=.98\textwidth]{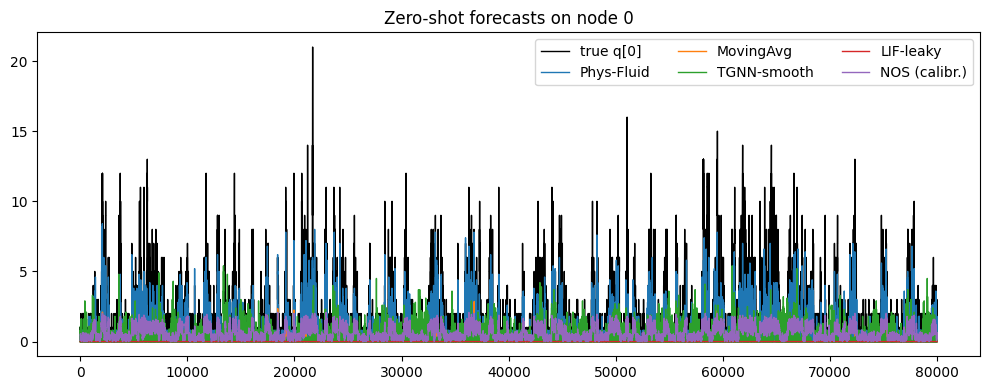}
  \caption{Zero-shot next-step forecasts on a single node (no supervised training). 
  All methods consume the same arrival windows; only \emph{arrival-only} calibration is used.
  \emph{NOS} produces compact, onset-aligned responses; the fluid forecaster follows backlog accumulation.}
  \label{fig:zeroshot_forecasts}
\end{figure*}

\paragraph{Numerical summary.}
Table~\ref{tab:zeroshot} reports point error (MAE) and event skill (AUROC / AUPRC for top-10\% bursts), computed on the same held-out segment. 
Phys.-Fluid attains the smallest MAE ($0.675$)---expected, since the simulator itself is fluid-like at the bin scale. 
NOS has a larger MAE ($0.928$) because it is \emph{bounded} and self-resets (it does not accumulate backlog), yet it delivers strong event skill (AUROC $0.894$, AUPRC $0.536$), on par with Phys.-Fluid and substantially above simple smoothers. 
Operationally, this means \emph{NOS} is well-tuned to \emph{detect and time} congestion onsets without drifting into long false positives between bursts. 
TGNN-smooth and LIF-leaky reduce noise but lose timing contrast (near-chance event skill).

\begin{table}[t]
\centering
\caption{Zero-shot next-step forecasting (no supervised training). 
All methods use the same arrival windows; \emph{NOS} uses only arrival-based calibration.}
\label{tab:zeroshot}
\begin{tabular}{lccc}
\toprule
Method & MAE & AUROC & AUPRC \\
\midrule
Physics Fluid & 0.6748 & 0.834 & 0.555 \\
Moving Avg    & 0.8782 & 0.552 & 0.194 \\
TGNN-smooth   & 0.9251 & 0.507 & 0.128 \\
LIF-leaky     & 0.8891 & 0.500 & 0.126 \\
NOS (calibr.) & 0.9278 & 0.894 & 0.536 \\
\bottomrule
\end{tabular}
\end{table}

\paragraph{Networking interpretation.}
In the absence of labels, a fluid predictor provides a tough MAE baseline because it integrates exactly the arrivals that create the queue, essentially reproducing backlog drift. 
However, operators often care more about \emph{timely onset detection} than about matching every backlog micro-fluctuation. 
Here NOS’ bounded, event-driven dynamics produce high AUROC/AUPRC---accurate timing of burst onsets and compact responses---while avoiding prolonged marking during recoveries. 
This complements the open/closed-loop results: use fluid logic for gross throughput accounting, and deploy \emph{NOS} to deliver low-latency, low-ringing congestion signals aligned with control actions.

\section{Evaluation protocol for forecasting baselines}
\label{sec:evaluation-protocol}

This section details the protocol used for the comparisons in ~\ref{sec:ML-comparison}. For metrics we use networks of size
$N{=}250$; the range plots show node~0 for visual clarity. All baselines are trained label-free on next-step forecasting with a contiguous train/validation/test split. Inputs are standardised per node using statistics fitted on the train split only. Residuals are signed and $z$-scored with train-only moments; event starts are the first samples whose residual $z$-scores cross a fixed per-node threshold selected on the validation split but computed using train calibration. Episodes have a minimum duration to suppress single-sample blips; matching uses a fixed window around each ground-truth start. Start-latency is the sample difference between the truth
start and the first model start within the window, converted to milliseconds using the sampling period $\Delta t$. Forecasting error (MAE, RMSE) is computed on the original scale. The tGNN uses temporal message passing constrained to the given adjacency; capacities and early-stopping are held comparable across baselines.

All learned baselines, namely MLP, RNN, GRU, and tGNN, are trained
self-supervised on next-step forecasting, and events are inferred from forecast error rather than labels. Train-only calibration prevents leakage and keeps thresholds comparable across models and topologies. The evaluation can be extended with extreme-value thresholds on residual tails, rolling-origin splits to test drift, precision–recall curves, episode-wise latency in milliseconds, and an out-of-distribution topology block, without changing the core protocol.

\section{\textit{NOS} simulation pseudocode}
\label{pseudocode}

For reproducibility we include the simulation routine used in our experiments. We simulate \textit{NOS} in discrete time with a fixed step $dt$ (forward Euler). Per-edge propagation delays are implemented using delay buffers that store scheduled spike arrivals. Each time step consists of four stages: (i) deliver scheduled arrivals from buffers, (ii) compute the total input from delivered arrivals plus exogenous drive, (iii) update $(v,u)$ using bounded excitability with two leak terms, and (iv) apply thresholding with optional jitter and a soft exponential reset. Throughout, $S^{\mathrm{del}}_{j\to i}(t)\in\{0,1\}$ denotes the spike arrival delivered to node $i$ on edge $(j\to i)$ at time step $t$.

We use both leak terms in the membrane update: $-\lambda v$ represents a direct drain proportional to the current load, while $-\chi(v-v_{\mathrm{rest}})$ pulls the state back toward the resting level. A bounded excitability nonlinearity $f_{\mathrm{sat}}(\cdot)$ and an optional clamp $v\in[v_{\mathrm{rest}},v_{\max}]$ prevent runaway growth and reflect finite buffer capacity. The exogenous drive $\eta_i(t)$ may be implemented as shot noise (arrival impulses) or as additive noise; the pseudocode treats it abstractly.

\begin{algorithm}[ht!]
\caption{Discrete-time simulation of \emph{NOS} dynamics with per-edge delays and soft resets}
\label{alg:NOS}
\begin{algorithmic}[1]
\STATE \textbf{Initialise:} for each node $i$, set $v_i \gets v_{\mathrm{rest}}+\epsilon$, $u_i \gets u_0$
\STATE \textbf{Initialise:} for each directed edge $(j\to i)$, create a delay buffer of length $\lceil \tau_{ij}/dt\rceil$
\FOR{each time step $t$}
  \STATE \textbf{(1) Deliver delayed spikes:} set $I^{\mathrm{syn}}_i \gets 0$ for all $i$
  \FOR{each directed edge $(j\to i)$}
    \STATE Pop delivered arrival $S^{\mathrm{del}}_{j\to i}(t)$ from the buffer for $(j\to i)$
    \STATE Accumulate synaptic input: $I^{\mathrm{syn}}_i \gets I^{\mathrm{syn}}_i + w_{ij}\,g(q_{ij},t)\,S^{\mathrm{del}}_{j\to i}(t)$
  \ENDFOR
  \STATE \textbf{(2) Total input:} for each node $i$, set $I_i(t) \gets I^{\mathrm{syn}}_i + \eta_i(t)$

  \STATE \textbf{(3) State update (forward Euler):} for each node $i$
  \(
  v_i^{\text{old}} \gets v_i,\qquad u_i^{\text{old}} \gets u_i
  \)
  \(
  v_i \gets v_i^{\text{old}} + dt\cdot\Bigl(f_{\mathrm{sat}}(v_i^{\text{old}})+\beta v_i^{\text{old}}+\gamma - u_i^{\text{old}} + I_i(t)
  - \lambda v_i^{\text{old}} - \chi\bigl(v_i^{\text{old}}-v_{\mathrm{rest}}\bigr)\Bigr)
  \)
  \(
  u_i \gets u_i^{\text{old}} + dt\cdot\Bigl(a\bigl(b v_i^{\text{old}} - u_i^{\text{old}}\bigr) - \mu u_i^{\text{old}}\Bigr)
  \)
  \STATE (Optional) clamp: $v_i \gets \mathrm{clip}(v_i, v_{\mathrm{rest}}, v_{\max})$, $u_i \gets \mathrm{clip}(u_i, u_{\min}, u_{\max})$

  \STATE \textbf{(4) Threshold and reset:} for each node $i$
  \(
  v_{\mathrm{th},i}(t) \gets v_{\mathrm{th}} + \sigma\,\xi_i(t)\qquad (\xi_i(t)\sim\mathcal{N}(0,1),\ \sigma=0 \text{ for deterministic runs})
  \)
  \IF{$v_i \ge v_{\mathrm{th},i}(t)$}
    \STATE Emit spike at time $t$ and record it as $S_i(t)=1$
    \FOR{each outgoing edge $(i\to k)$}
      \STATE Push an arrival into the buffer for $(i\to k)$ scheduled at $t+\tau_{ik}$
    \ENDFOR
    \STATE Soft exponential reset:
    \(
      v_i \gets c + (v_i-c)\,e^{-r_{\mathrm{reset}} dt}, \qquad u_i \gets u_i + d
    \)
  \ENDIF
\ENDFOR
\end{algorithmic}
\end{algorithm}

\begin{verbatim}
# NOS simulation with per-edge delay buffers (forward Euler)

# Parameters: dt, T
# States: v[i], u[i]
# Buffers: delay_buffer[i->k] stores scheduled arrivals (0/1) in integer bins

# Initialisation
for each node i:
  v[i] = v_rest + small_noise()
  u[i] = u0

for each directed edge (j -> i):
  L_ij = ceil(tau_ij / dt)     
  delay_buffer[j->i] = RingBuffer(length=L_ij)

# Main loop
for t in time_steps:  # physical time is t*dt
  # (1) deliver scheduled arrivals and build synaptic input
  for each node i:
    I_syn[i] = 0

  for each directed edge (j -> i):
    S_del = delay_buffer[j->i].pop()              
    I_syn[i] += W[i,j] * gate(q_ij, t) * S_del

  # (2) exogenous drive
  for each node i:
    I_ex[i] = eta(i, t)                           
    I[i] = I_syn[i] + I_ex[i]

  # (3) Euler update from old state (uses both leak terms)
  for each node i:
    v_old = v[i]
    u_old = u[i]

    v_new = v_old + dt * ( f_sat(v_old) + beta*v_old + gamma - u_old
                           + I[i] - lambda*v_old - chi*(v_old - v_rest) )
    u_new = u_old + dt * ( a*(b*v_old - u_old) - mu*u_old )

    v[i] = clip(v_new, v_rest, v_max)             
    u[i] = clip(u_new, u_min, u_max)              
  for each node i:
    v_th_eff = v_th + sigma * randn()             
    if v[i] >= v_th_eff:
      emit_spike(i, t*dt)
      for each outgoing edge (i -> k):
        delay_buffer[i->k].push(1)                
      v[i] = c + (v[i] - c) * exp(-r_reset * dt)
      u[i] = u[i] + d

# Record observables as needed (rates, ISI CV, avalanches, synchrony).
\end{verbatim}

\bibliographystyle{plain}
\bibliography{references}

\end{document}